\renewenvironment{equation*}{\[}{\]\ignorespacesafterend}
\newcommand{\C}{\mathcal{C}}
\newcommand{\DSD}{\mathcal{D}}
\newcommand{\Lrw}{L_{\text{RW}}}
\newcommand{\sinf}{s^{\infty}}
\newcommand{\0}{\mathbb{0}}
\newcommand{\1}{\mathbbm{1}}
\newcommand{\numclust}{K} 
\newcommand{\Dbtw}{D_{t}^{\text{btw}}}
\newcommand{\Din}{D_{t}^{\text{in}}}
\newcommand{\G}{\mathcal{G}}
\DeclareMathOperator*{\Lsym}{L_{SYM}}
\newtheorem{thm}{Theorem}[section]
\newtheorem{cor}[thm]{Corollary}
\newtheorem{defn}[thm]{Definition}
\newcommand{\distas}[1]{\mathbin{\overset{#1}{\kernZ@\sim}}}%
\newsavebox{\mybox}\newsavebox{\mysim}
\newcommand{\distras}[1]{%
  \savebox{\mybox}{\hbox{\kern3pt$\scriptstyle#1$\kern3pt}}%
  \savebox{\mysim}{\hbox{$\sim$}}%
  \mathbin{\overset{#1}{\kernZ@\resizebox{\wd\mybox}{\ht\mysim}{$\sim$}}}%
}
\let\c@equation\c@thm
\numberwithin{equation}{section}
\begin{document}

\title{Diffusion State Distances: Multitemporal Analysis, Fast Algorithms, and Applications to Biological Networks}

\author{Lenore Cowen\thanks{Department of Computer Science, Tufts University, Medford, MA 02155, USA \email{(cowen@cs.tufts.edu,kapil.devkota@tufts.edu})}
	\and Kapil Devkota\footnotemark[1]
	\and Xiaozhe Hu\thanks{Department of Mathematics, Tufts University, Medford, MA 02155, USA (\email{xiaozhe.hu@tufts.edu, jm.murphy@tufts.edu, kaiyi.wu@tufts.edu})}
        \and James M. Murphy\footnotemark[2]
        \and Kaiyi Wu\footnotemark[2]
        }
\maketitle

\begin{abstract}
Data-dependent metrics are powerful tools for learning the underlying structure of high-dimensional data.  This article develops and analyzes a data-dependent metric known as \emph{diffusion state distance (DSD)}, which compares points using a data-driven diffusion process.  Unlike related diffusion methods, DSDs incorporate information across time scales, which allows for the intrinsic data structure to be inferred in a parameter-free manner.  This article develops a theory for DSD based on the multitemporal emergence of mesoscopic equilibria in the underlying diffusion process.  New algorithms for denoising and dimension reduction with DSD are also proposed and analyzed. These approaches are based on a weighted spectral decomposition of the underlying diffusion process, and experiments on synthetic datasets and real biological networks illustrate the efficacy of the proposed algorithms in terms of both speed and accuracy.  Throughout, comparisons with related methods are made, in order to illustrate the distinct advantages of DSD for datasets exhibiting multiscale structure.
\end{abstract}

\section{Introduction}

Metrics for pairwise comparisons of data points $X=\{x_{i}\}_{i=1}^{n}\subset\mathbb{R}^{D}$ are an essential tool in a wide range of data analysis tasks including classification, regression, clustering, and visualization \cite{Friedman2001_Elements}.  Euclidean distances, and more generally data-independent metrics that depend only on the original coordinates of the data, may be inadequate for high-dimensional data due to the curse of dimensionality \cite{Vershynin2018_High} or for low-dimensional data with nonlinear correlations.  In order to address these concerns, metrics derived from the global structure of the data are necessary.  

A family of data-dependent metrics based on local similarity graphs has been developed to address this problem \cite{Klein1993_Resistance, Tenenbaum2000_Global, Roweis2000_Nonlinear, Belkin2002_Laplacian, Belkin2003_Laplacian, Donoho2003_Hessian, Coifman2005_Geometric, Coifman2006_Diffusion}.  These methods typically construct an undirected, weighted graph $\G$ with nodes corresponding to $X$ and weights between nodes $x_{i},x_{j}$ given by $W_{ij}=\mathcal{K}(x_{i},x_{j})$ for a suitable kernel function that is usually radial and rapidly decaying.  Though constituted from \emph{local} relationships among the points of $X$ (due to the rapid decay of $\mathcal{K}$), the \emph{global} features of $X$ may be gleaned from $\G$ by considering partial differential operators on $\G$ (e.g. Laplacian or Schr\"odinger operators), geodesics in the path space, or diffusion processes on $X$.  These new metrics may vastly improve over data-independent metrics for a range of tasks including supervised classification and regression, unsupervised clustering, low-dimensional embeddings, and the visualization of high-dimensional data. 

While powerful, these methods may not adequately capture multiscale structure in the data.  For example, when there are both small and large clusters in the underlying data, or geometric features at multiple scales, it may be challenging to select without supervision the optimal low-dimensional representation or parameters for these data-dependent metrics.  In particular, when considering diffusion processes on graphs, the time scale crucially determines the granularity of the subsequent analysis.  Indeed, for a suitable Markov random matrix $P\in\mathbb{R}^{n\times n}$ derived from an underlying graph $\G$ on data $X$, $P^{t}$ captures latent structure in the data.  Small scale features are homogenized in favor of coarse structures as $t$ increases \cite{Nadler2007_Fundamental, Maggioni2019_Learning} before $P^{t}$ converges to the rank one matrix corresponding to the stationary distribution as $t\rightarrow\infty$.  Developing diffusion methods that efficiently account for multiscale structure in the data is essential both to mitigate the  practical dependence on the time parameter $t$ and to adequately capture both fine and coarse-scale structures.  The recently proposed \emph{diffusion state distances (DSD)} \cite{Cao2013_Going, Cao2014_New} account for multiscale structure of the underlying data by aggregating the behavior of the diffusion process across time scales.  DSD has shown state-of-the-art performance for the analysis of protein-protein interaction networks (PPI) \cite{Choobdar2019_Assessment}, and randomized methods have allowed it to scale to large networks \cite{Lin2018_Computing}.  

\subsection{Summary of Contributions}

This article makes three major contributions.  First, \emph{a multitemporal theory for DSD is developed}.  Unlike related graph-based methods such as Laplacian eigenmaps and diffusion distances, DSDs incorporate information across all time scales of a data-driven diffusion process.  Our analysis illustrates that diffusion patterns persisting across long time scales contribute more to the DSDs than diffusion patterns that emerge and disappear rapidly; in this sense, DSDs \emph{aggregate geometric structures across time scales}.  This is of particular importance for data that exhibits multiscale cluster structure, in which a hierarchy of cluster patterns across time scales emerge.  We prove that the DSD synthesizes the resultant mesoscopic equilibria into a single metric that accounts for the temporal longevity of these patterns.

Second, we \emph{develop a dimension-reduction framework for the DSD}, in which the data is projected into a small number of coordinates, such that Euclidean distances in the embedded space approximate DSD in the original space.  Based on well-known ideas pertaining to the graph Laplacian and random walks on graphs, this embedding serves several important purposes.  First, it \emph{allows for the fast, faithful computation of the DSD} by considering only the top eigenvectors of the underlying random walk.  Second, it \emph{denoises the DSD} by truncating the eigenexpansion and discarding high-frequency eigenvectors that are typically corrupted by noise in the finite sample setting.  Finally, it yields an \emph{interpretation of the DSD in terms of the principal eigenvectors of the random walk graph Laplacian}, which complements the proposed analysis in terms of mesoscopic equilibria.  Numerical experiments verify the usefulness of this low-dimensional coordinate embedding.  

Third, the proposed algorithms are deployed for the \emph{analysis of real protein-protein interaction (PPI) networks}.  The impact of the spectral decomposition is shown to be substantial, yielding improved empirical performance and also reduced runtime.  DSD are known to be at the state-of-the-art for analysis of PPI networks, so improvements in their accuracy and complexity are a boon to better understanding of these large, complex networks.

The article is organized as follows.  Background on diffusion processes on graphs and related data-dependent metrics are discussed in Section \ref{sec:Background}.  DSD are analyzed in a multitemporal framework in Section \ref{sec:MultitemporalAnalysis}.  The spectral formulation of DSD and related dimension reduction and denoising techniques are proposed and analyzed in Section \ref{sec:SpectralAnalysis}.  Numerical experiments on synthetic and real biological networks are shown in Section \ref{sec:NumericalExperiments}.  Conclusions and future research directions are discussed in Section \ref{sec:FutureDirections}.

Throughout, we will use the notation presented in Table \ref{tab:Notation}.

\begin{table}[!htb]
\begin{center}
\begin{tabular}{|r c | p{10cm} }
\hline
Notation & Meaning\\
\hline
$X=\{x_{i}\}_{i=1}^{n}\subset\mathbb{R}^{D}$ & data points\\ 
$\G=(X,W)$ & graph $\G$ with nodes corresponding to $X$ and weights W\\
$D$ & degree matrix for weight matrix $W$\\
$\Lrw, \Lsym$ & random walk Laplacian, symmetric normalized Laplacian \\
$M^{\dagger}$ & pseudoinverse of a matrix $M$\\
 $P$ & Markov transition matrix \\
 $S$ & matrix of stochastic complements \\
 $\pi$ & stationary distribution of $P$\\
 $t$ & time scale of random walk $P$\\
 $p_{t}(x_{i},x_{j})$ & $(P^{t})_{ij}$, the probability of transition from $x_{i}$ to $x_{j}$ at time $t$\\
$\{(\phi_{\ell},\lambda_{\ell})\}_{\ell=1}^{n}$ & spectral decomposition of $\Lsym$ \\
 $\ell^{p}(w)$ & space of $p$-integrable sequences with respect to the weight $w$; $\ell^{p}=\ell^{p}(\1)$\\ 
 $\| x \|_{\ell^{p}(w)}$ & the $\ell^{p}$ norm of $x$ with respect to the weight $w$; $\| x \|_{p}=\| x \|_{\ell^{p}(\1)}$\\
$ \|M\|_{\infty}$ & $\max_{1\le i \le m}\sum_{j=1}^{n}|M_{ij}|, \ M\in\mathbb{R}^{m\times n}$.\\
 $\0$, $\1$ & all 0s or 1s vector or matrix, depending on context\\
 $e_{i}$ & the vector $(0,0,\dots,0,1,0,\dots,0)$ with a 1 in the $i^{th}$ coordinate\\
 $\rho$ & general metric\\
 $D_{t}(x_{i},x_{j})$ & diffusion distance between $x_{i}$ and $x_{j}$ at time $t$\\
 $\DSD (x_{i},x_{j})$ & DSD between $x_{i}$ and $x_{j}$\\
 \hline
\end{tabular}
\end{center}
\caption{Notation used throughout the article.\label{tab:Notation}}

\end{table}

\section{Background on Diffusion Geometry and Related Data-Dependent Metrics}
\label{sec:Background}

Given a dataset $X=\{x_{i}\}_{i=1}^{n}\subset\mathbb{R}^{D}$, a natural notion of \emph{diffusion on $X$} is given by random walks on a weighted graph $\G$ with nodes corresponding to the points of $X$ and edges between $x_{i}, x_{j}\in X$ with weight $W_{ij}\in [0,1]$.  The matrix $W\in\mathbb{R}^{n\times n}$ is the \emph{weight matrix}, and is symmetric in the case that $\G$ is an undirected graph.  It is common to construct $W$ using a (radial, rapidly decaying) kernel $\mathcal{K}:\mathbb{R}^{D}\times\mathbb{R}^{D}\rightarrow [0,1]$ as $W_{ij}=\mathcal{K}(x_{i},x_{j})$.  A common choice of kernel is $\mathcal{K}(x_{i},x_{j})=\exp(-\rho(x_{i},x_{j})^{2}/\sigma^{2})$, where $\rho:\mathbb{R}^{D}\times\mathbb{R}^{D}\rightarrow [0,\infty)$ is a metric, for example $\rho(x_{i},x_{j})=\|x_{i}-x_{j}\|_{2}$, and $\sigma>0$ is a scaling parameter.  Given $W$, one can construct a diffusion process with underlying state space $X$ by normalizing $W$ to be row stochastic: $P=D^{-1}W$, where $D\in\mathbb{R}^{n\times n}$ is the diagonal degree matrix with $D_{ii}=\sum_{j=1}^{n}W_{ij}$.  Under the assumptions that $P$ is irreducible (i.e. $\G$ is a connected graph) and aperiodic, $P$ admits a unique stationary distribution $\pi$ satisfying $\pi P=\pi$ \cite{Chung1997}. 

The Markov process encoded by $P$ evolves with time.  The $i^{th}$ row of $P^{t}$ encodes the transition probabilities of $x_{i}$ at time $t$.  \emph{Diffusion distances} \cite{Coifman2005_Geometric, Coifman2006_Diffusion} use the probability profiles encoded by $P^{t}$ to compute pairwise distances between points in the underlying dataset $X$:

\begin{defn}\label{defn:DD}Let $P$ be an irreducible, aperiodic Markov transition matrix on $X=\{x_{i}\}_{i=1}^{n}\subset\mathbb{R}^{D}$ with unique stationary distribution $\pi$.  Let $p_{t}(x_{i},x_{j})=(P^{t})_{ij}$.  The \emph{diffusion distance} between $x_{i}, x_{j}$ at time $t$ is \begin{equation}\label{eqn:DD}D_{t}(x_{i},x_{j})=\|(e_{i}-e_{j})P^{t}\|_{\ell^{2}(1/\pi)}=\sqrt{\sum_{\ell=1}^{n}\left(p_{t}(x_{i},x_{\ell})-p_{t}(x_{j},x_{\ell})\right)^{2}\frac{1}{\pi(\ell)}},\end{equation}where $e_{i}$ is the $i^{th}$ canonical basis vector of $\mathbb{R}^{n}$.
\end{defn}

Diffusion distances are parametrized by a time scale parameter $t$, which corresponds to how long the diffusion process has run on the underlying graph.  Note that $\lim_{t\rightarrow\infty}P^{t}=\1\pi$, so that for all $i,j,\ell$, $\lim_{t\rightarrow\infty}(p_{t}(x_{i},x_{\ell})-p_{t}(x_{j},x_{\ell}))^{2}=0$; the rate of convergence is uniform and exponential in the second largest eigenvalue of $P$ \cite{Maggioni2019_Learning}.  In particular, $D_{t}(x_{i},x_{j})\rightarrow 0$ at an exponential rate that is uniform in $x_{i},x_{j}$.  Equivalently, $\lim_{t\rightarrow\infty}\pi_{0}P^{t}=\pi$ for any choice of initial probability distribution $\pi_{0}$.  For $t$ not too small and not too large, $D_{t}$ accounts for mesoscopic cluster structure in the data \cite{Maggioni2019_Learning}.  For data sampled from a common manifold, there is an asymptotic relationship as $n\rightarrow\infty$ between the scale parameter $\sigma$ used for constructing the underlying graph and the time parameter $t$ \cite{Coifman2006_Diffusion}.  Diffusion distances have been applied to a range of problems including molecular dynamics \cite{Rohrdanz2011_Determination, Zheng2011_Polymer}, learning of dynamical systems \cite{Nadler2006_Diffusion, Coifman2008_Diffusion, Singer2009_Detecting}, latent variable estimation \cite{Lederman2018_Learning, Katz2019_Alternating, Shnitzer2019_Recovering}, remote sensing image processing \cite{Czaja2016_Fusion, Murphy2018_Iterative, Murphy2019_Unsupervised, Murphy2019_Spectral}, and medical signal processing \cite{Lederman2015_Alternating, Wu2014_Assess, Li2017_Efficient, Alagapan2018_Diffusion}.

The diffusion distance is one of a family of powerful data-dependent metrics, including resistance distance \cite{Klein1993_Resistance}, Laplacian eigenmaps \cite{Belkin2002_Laplacian, Belkin2003_Laplacian}, path distances \cite{Fischer2003_Path, Borgwardt2005_Shortest, Little2020_Path} and related methods of nonlinear dimension reduction \cite{Tenenbaum2000_Global, Donoho2003_Hessian, Maaten2008_Visualizing}.  It is natural to compare diffusion distances with Laplacian eigenmaps, as both may be understood in terms of the spectral decomposition of an underlying operator on the graph.  Indeed, although $P$ is not symmetric, it is diagonally conjugate to the symmetric matrix $D^{1/2}PD^{-1/2}=D^{-1/2}WD^{-1/2}$, which admits spectral decomposition $\{(\lambda_{\ell},\phi_{\ell})\}_{\ell=1}^{n}$, where $1=\lambda_{1}>|\lambda_{2}|\ge\dots\ge|\lambda_{n}|\ge 0$.  Then $P$ has the same eigenvalues as $W$, with left and right eigenvectors $\{\varphi_{\ell}\}_{\ell=1}^{n}$ and $\{\psi_{\ell}\}_{\ell=1}^{n}$ where $\forall \ell=1,\dots,n$, $\varphi_{\ell}=\sqrt{\pi}\phi_{\ell}, \quad \psi_{\ell}=\phi_{\ell}/\sqrt{\pi},$ respectively.  This allows (\ref{eqn:DD}) to be written in terms of $\{(\lambda_{\ell},\psi_{\ell})\}_{\ell=1}^{n}$: \[D_{t}(x_{i},x_{j})=\sqrt{\sum_{\ell=1}^{n}\lambda_{\ell}^{2t}(\psi_{\ell}(x_{i})-\psi_{\ell}(x_{j}))^{2}},\]where $\psi_{\ell}(x_{i})=(\psi_{\ell})_{i}$.  In this sense, diffusion distances are Euclidean distances in the $n$-dimensional coordinate system $x_{i}\mapsto(\lambda_{1}^{t}\psi_{1}(x_{i}),\lambda_{2}^{t}\psi_{2}(x_{i}),\dots,\lambda_{n}^{t}\psi_{n}(x_{i})).$  Moreover, these \emph{diffusion maps} may be truncated after $M\le n$ terms, yielding a low-dimensional coordinate representation:  \begin{align}\label{eqn:DiffusionMaps_LowDimensional}x_{i}\mapsto (\lambda_{1}^{t}\psi_{1}(x_{i}),\lambda_{2}^{t}\psi_{2}(x_{i}),\dots,\lambda_{M}^{t}\psi_{M}(x_{i})),
\end{align} and a corresponding approximation of the diffusion distances as \[D_{t}(x_{i},x_{j})\approx\sqrt{\sum_{\ell=1}^{M}\lambda_{\ell}^{2t}(\psi_{\ell}(x_{i})-\psi_{\ell}(x_{j}))^{2}}.\]It is natural to compare (\ref{eqn:DiffusionMaps_LowDimensional}) to the embedding of Laplacian eigenmaps, namely $x_{i}\mapsto \{\phi_{\ell}(x_{i})\}_{\ell=1}^{M}$.  Indeed, diffusion maps differs primarily in its weighting of the eigenvectors by the eigenvalues; $\psi_{\ell}$ differs from $\phi_{\ell}$ only by normalizing by the square root of the stationary distribution.  As $t$ increases, only the lowest-frequency eigenvectors (i.e. those with largest eigenvalues in modulus) contribute to the low dimensional representation.  A detailed analysis of the impact of time has been developed by linking the Markov transition matrix to the Fokker-Planck stochastic differential equation in the continuum limit \cite{Nadler2007_Fundamental}, and in terms of latent cluster structure in discrete data \cite{Maggioni2019_Learning}.

\subsection{Diffusion State Distances}
\label{subsec:DSD_Background}

The \emph{diffusion state distance (DSD)} is a graph-driven metric originally proposed in the context of PPI networks \cite{Cao2013_Going}, and generalized to graphs with weighted edges in \cite{Cao2014_New}.  It was designed to capture complex structures in a manner robust to high-degree nodes that ruin the discriminative ability of classical shortest-path metrics in small world networks \cite{Newman2018_Networks}.  

\begin{defn}\label{defn:DSD} Let $P$ be a Markov transition matrix on $X=\{x_{i}\} _{i=1}^{n}\subset\mathbb{R}^{D}$.  The \emph{diffusion state distance between $x_{i}$ and $x_{j}$ with respect to the weight $w$} is \begin{equation}\label{eqn:DSD}\DSD (x_{i},x_{j})=\left\|(e_{i}-e_{j})\sum_{t=0}^{\infty}P^{t}\right\|_{\ell^{2}(w)}=\sqrt{\sum_{\ell=1}^{n}\left(\sum_{t=0}^{\infty}P^{t}_{i\ell}-P^{t}_{j\ell}\right)^{2}w(\ell)},\end{equation}where $e_{i}$ is the $i^{th}$ canonical basis vector of $\mathbb{R}^{n}$.  

\end{defn}

We note that \cite{Cao2013_Going} originally introduced the DSD in the $\ell^{1}$ norm; for purposes of dimension reduction, convergence, and comparability with related data-driven distances, we formulate Definition \ref{defn:DSD} in the $\ell^{2}$ norm.  Moreover, DSD was originally proposed with $w=\1$.  We introduce the weighting by $w$, which will be helpful when considering the connection between DSD and the spectral decomposition of $P$.

The DSD may be computed in terms of a regularized inverse Laplacian \cite{Cao2013_Going}.  Indeed, let $\Lrw=I-P=I-D^{-1}W$ be the random walk graph Laplacian \cite{Chung1997}.  Expanding in Neumann series, \begin{align}\begin{split}\label{eqn:NeumannSeries}(e_{i}-e_{j})(\Lrw+\1\pi)^{-1}=&(e_{i}-e_{j})(I-P+\1\pi)^{-1}\\
=&(e_{i}-e_{j})\sum_{t=0}^{\infty}(P-\1\pi)^{t}\\
=&(e_{i}-e_{j})\sum_{t=0}^{\infty}(P^t-\1\pi)\\
=&(e_{i}-e_{j})\sum_{t=0}^{\infty}P^t,\end{split}\end{align}where the final equality follows from the fact that for all $i,j$, $(e_{i}-e_{j})\1\pi=\pi-\pi=0$.  In particular, with weight vector $w=\1$, $\DSD (x_{i},x_{j})=\|(e_{i}-e_{j})(\Lrw+\1\pi)^{-1}\|_{2}.$

\subsubsection{Comparison of DSD to Related Metrics}

Unlike diffusion distances, which fix a time scale $t$, the DSD sums across all time scales.  Intuitively, this suggests that the DSD synthesizes multitemporal structures (e.g. clusters) in the data so that structures that persist across long time scales are emphasized, while structures that persist across shorter time scales are less significant.  This intuition is made precise by multiscale cluster analysis in Section \ref{sec:MultitemporalAnalysis}.  The lack of a time parameter may be considered an advantage of DSD over diffusion distances, particularly in unsupervised machine learning when no training data is available for cross validation to determine a good choice of $t$.

We note that both diffusion distances and DSD overcome an important weakness of shortest path metrics \cite{Tenenbaum2000_Global, Little2020_Path}, which are not discriminative in small world networks when all points have short paths between them.  In some sense, diffusion distances and DSD average across all paths in the data, so that the existence of a single short path between two points is insufficient for them to be close in these metrics.  

The notions of resistance and commute distance between $x_{i}$ and $x_{j}$ are also related to the DSD.  The commute distance between $x_{i},x_{j}$ is the expected time it takes to go from $x_{i}$ to $x_{j}$ and back to $x_{i}$, according to the random walk $P$.  This can be computed as $\mathscr{C}(x_{i},x_{j})=\langle e_{i}-e_{j},\Lsym^{\dagger}(e_{i}-e_{j})\rangle$, where $\Lsym^{\dagger}$ is the pseudoinverse of $\Lsym$.  While intuitively appealing, under several models of random graphs (e.g. Erd\H{o}s-R\'{e}yni graphs, $k$-NN graphs, and $\epsilon$-graphs), $\mathcal{C}(x_{i},x_{j})\rightarrow C_{n}|\frac{1}{D_{i}}-\frac {1}{D_{j}}|$ as the number of nodes $n\rightarrow\infty$, where $D_{i}$ is the degree of $x_{i}$ and $C_{n}$ is a scaling constant dependent on $n$ and the ambient structure of the random graph model, but is crucially independent of $x_{i}, x_{j}$ \cite{VonLuxburg2014_Hitting}.  This suggests that the commute distance is not an informative metric for statistical and machine learning on these graph models for $n$ sufficiently large, as it degenerates to the relative difference in degrees, which is completely local and captures no interesting structures or patterns in the data.  On the other hand, DSD are derived from diffusion processes on graphs, which converge in a meaningful sense to solutions of certain Fokker-Planck equations as $n\rightarrow\infty$, under suitable scalings of $\sigma\rightarrow 0^{+}$ \cite{Nadler2006_Diffusion_NIPS, Nadler2006_Diffusion}.  We shall further compare DSD to commute distances in Section \ref{subsec:LargeSampleLimits}

\section{Diffusion State Distances and Mesoscopic Equlibria}
\label{sec:MultitemporalAnalysis}

DSDs sum across all time scales of a diffusion operator, and it is natural to analyze the role of this time parameter in terms of the multitemporal behavior of the underlying Markov chain.  Indeed, in order to prove that (time dependent) diffusion distances $D_{t}$ capture cluster structure of data $X$ at a particular time scale, the underlying Markov matrix $P\in\mathbb{R}^{n\times n}$ may be analyzed hierarchically \cite{Maggioni2019_Learning}.  More precisely, $P$ may be decomposed as 

\begin{equation}
P=\begin{bmatrix}\label{eqn:Ppartition}
    P_{11}  & P_{12}  & \dots & P_{1\numclust} \\
   P_{21}  & P_{22}  & \dots & P_{2\numclust} \\
   \vdots & \vdots & \ddots & \vdots \\
   P_{\numclust 1}  & P_{\numclust 2}  & \dots & P_{\numclust \numclust}
\end{bmatrix},
\end{equation}where each $P_{kk}$ is square and $\numclust\le n$.  Let $C_{k}\subset\{1,\dots,n\}$ be the indices corresponding to $P_{kk}$.  Intuitively, if the mass of $P$ concentrates on the diagonal blocks $\{P_{kk}\}_{k=1}^{\numclust}$, then it is unlikely that the random walker transitions in short time between the $\{C_{k}\}_{k=1}^{\numclust}$.  Moreover, if these blocks are in some sense coherent and indivisible, then the random walker explores them quickly, then waits a long time to transition to another block.  

In order to make this intuition precise, we introduce the notion of \emph{stochastic complement} \cite{Meyer1989, Maggioni2019_Learning}. 

\begin{defn}  Let $P\in\mathbb{R}^{n\times n}$ be an irreducible Markov matrix partitioned as in (\ref{eqn:Ppartition}).  For a given index $k\in\{1,\dots,\numclust\}$, let $P_{k}$ denote the principal block submatrix generated by deleting the $k^{\text{th}}$ row and $k^{\text{th}}$ column of blocks from (\ref{eqn:Ppartition}), and let \begin{align*}P_{*k}&=\begin{bmatrix}P_{1k}   P_{2k} \dots P_{k-1,k}  P_{k+1,k} \dots  P_{\numclust k}\end{bmatrix}^{\top}, \\P_{k*}&=\begin{bmatrix} P_{k1} \ P_{k2}\  \dots \ P_{k,k-1} \ P_{k,k+1} \ \dots \ P_{k\numclust} \end{bmatrix}.\end{align*}  The \emph{stochastic complement of $P_{kk}$} is the matrix $S_{kk}=P_{kk}+P_{k*}(I-P_{k})^{-1}P_{*k}.$
\end{defn}

The stochastic complement $S_{kk}$ is the transition matrix for a Markov chain with state space $C_{k}$ obtained from the original chain by re-routing transitions into or out of $C_{k}$.  More precisely, the reduced chain $S_{kk}$ captures two possibilities for transitions between points in $C_{k}$: a transition is either direct in $C_{k}$ or indirect by first exiting $C_{k}$, then moving through points in $C_{k}^{c}$, then finally entering back into $C_{k}$ at some future time.  Indeed, the term $P_{k*}(I-P_{k})^{-1}P_{*k}$ in the definition of $S_{kk}$ accounts for leaving $C_{k}$ (the factor $P_{k*}$), traveling for some time in $C_{k}^{c}$ (the factor $(I-P_{k})^{-1}$), then re-entering $C_{k}$ (the factor $P_{*k}$).  Note that the factor $(I-P_{k})^{-1}$ may be expanded in Neumann series as $(I-P_{k})^{-1}=\sum_{t=0}^{\infty}P_{k}^{t},$ showing that it accounts for exiting and returning to $I_k$ after a finite but otherwise arbitrary number of steps outside of it.

The notion of stochastic complement quantifies the emergence of \emph{mesoscopic equilibria} of $P$.  We say $P$ is \emph{primitive} if it is non-negative, irreducible and aperiodic.  

\begin{thm}\label{thm:NearReducibility}\cite{Meyer1989}  Suppose $P\in\mathbb{R}^{n\times n}$ is an irreducible row-stochastic matrix partitioned into $K^{2}$ square block matrices as in (\ref{eqn:Ppartition}) and let $S$ be the completely reducible row-stochastic matrix with the stochastic complements of the diagonal blocks of $P$ on the diagonal:

\[ 
P=\begin{bmatrix}
    P_{11}  & P_{12}  & \dots & P_{1\numclust} \\
   P_{21}  & P_{22}  & \dots & P_{2\numclust} \\
   \vdots & \vdots & \ddots & \vdots \\

   P_{\numclust 1}  & P_{\numclust 2}  & \dots & P_{\numclust \numclust }
\end{bmatrix}, \
\
S=\begin{bmatrix}
    S_{11}  & \0  & \dots & \0 \\
   \0  & S_{22}  & \dots & \0 \\
   \vdots & \vdots & \ddots & \vdots \\

   \0  & \0  & \dots & S_{\numclust \numclust}
\end{bmatrix}.\]Let each $S_{kk}$ be primitive, so that the eigenvalues of $S$ are such that $\lambda_{1}=\lambda_{2}=\dots=\lambda_{\numclust}=1>\lambda_{\numclust+1}\ge \lambda_{\numclust+2}\ge \dots>-1.$  Let $Z$ diagonalize $S$, and let \[ 
S^{\infty}=\lim_{t\rightarrow\infty}S^{t}=\begin{bmatrix}
    \1 \pi^{1}  & \0  & \dots & \0 \\
   \0  & \1\pi^{2}  & \dots & \0 \\
   \vdots & \vdots & \ddots & \vdots \\

   \0  & \0  & \dots & \1\pi^{K}
\end{bmatrix},\] for $\pi^{k}$ the stationary distribution of $S_{kk}$.  Then $\|P^{t}-S^{\infty}\|_{\infty}\le \delta t + \kappa|\lambda_{\numclust+1}|^{t},$ where $\delta=2\max_{k}\|P_{k*}\|_{\infty}$ and $\kappa=\|Z^{-1}\|_{\infty}\|Z\|_{\infty}$.  
Moreover, for any initial distribution $\pi_{0}$ and $s=\lim_{t\rightarrow\infty}\pi_{0}S^{t}=\pi_0S^\infty$, $\|\pi_{0}P^{t}-s\|_{1}\le \delta t +\kappa|\lambda_{\numclust+1}|^{t}.$
\end{thm}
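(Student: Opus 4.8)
The plan is to prove the matrix estimate first and then deduce the distributional estimate, since the latter follows from the former by the duality between the $\ell^{1}$ vector norm and the $\|\cdot\|_{\infty}$ (maximum absolute row sum) matrix norm. I would begin by inserting $S^{t}$ and applying the triangle inequality,
\[
\|P^{t}-S^{\infty}\|_{\infty}\le \|P^{t}-S^{t}\|_{\infty}+\|S^{t}-S^{\infty}\|_{\infty},
\]
so that the two summands $\delta t$ and $\kappa|\lambda_{\numclust+1}|^{t}$ are produced separately.

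The second term is the spectral part and is the more routine. Since $Z$ diagonalizes $S$, write $S=Z\Lambda Z^{-1}$ with $\Lambda=\mathrm{diag}(\lambda_{1},\dots,\lambda_{n})$; then $S^{t}=Z\Lambda^{t}Z^{-1}$, and because each $S_{kk}$ is primitive the only unit-modulus eigenvalues are the $\numclust$ eigenvalues equal to $1$, so $S^{\infty}=Z\Lambda^{\infty}Z^{-1}$ with $\Lambda^{\infty}=\mathrm{diag}(1,\dots,1,0,\dots,0)$. Hence $S^{t}-S^{\infty}=Z(\Lambda^{t}-\Lambda^{\infty})Z^{-1}$, where $\Lambda^{t}-\Lambda^{\infty}$ is diagonal with entries $\lambda_{\ell}^{t}$ for $\ell>\numclust$ and $0$ otherwise. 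Submultiplicativity of $\|\cdot\|_{\infty}$ then gives $\|S^{t}-S^{\infty}\|_{\infty}\le \|Z\|_{\infty}\|Z^{-1}\|_{\infty}\max_{\ell>\numclust}|\lambda_{\ell}|^{t}=\kappa|\lambda_{\numclust+1}|^{t}$, using that $|\lambda_{\numclust+1}|$ dominates the moduli of the non-unit eigenvalues.

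The first term is the crux. I would use the telescoping identity
\[
P^{t}-S^{t}=\sum_{r=0}^{t-1}P^{r}(P-S)S^{t-1-r},
\]
which is verified by cancellation of consecutive terms. Since $P$ and $S$ are both nonnegative and row-stochastic, every power $P^{r}$ and $S^{t-1-r}$ satisfies $\|\cdot\|_{\infty}=1$, so submultiplicativity collapses the sum to $\|P^{t}-S^{t}\|_{\infty}\le t\,\|P-S\|_{\infty}$. It remains to show $\|P-S\|_{\infty}\le\delta$. The matrix $P-S$ has off-diagonal blocks equal to those of $P$ and diagonal blocks $-P_{k*}(I-P_{k})^{-1}P_{*k}$. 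For a row $i\in C_{k}$, the off-diagonal entries sum to the one-step escape probability $p_{\mathrm{out}}(i)=\sum_{j\notin C_{k}}P_{ij}$. The key observation, and the main obstacle, is to control the diagonal-block contribution: because each stochastic complement is itself row-stochastic, $S_{kk}\1=\1$, and since $S_{kk}=P_{kk}+P_{k*}(I-P_{k})^{-1}P_{*k}$ with $P_{kk}\1=\1-p_{\mathrm{out}}$, the nonnegative matrix $P_{k*}(I-P_{k})^{-1}P_{*k}$ must have row sums exactly $p_{\mathrm{out}}$. Thus the absolute row sum of $P-S$ in row $i$ is $p_{\mathrm{out}}(i)+p_{\mathrm{out}}(i)=2p_{\mathrm{out}}(i)\le 2\|P_{k*}\|_{\infty}$, whence $\|P-S\|_{\infty}\le 2\max_{k}\|P_{k*}\|_{\infty}=\delta$ and $\|P^{t}-S^{t}\|_{\infty}\le\delta t$. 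Combining the two bounds proves the matrix estimate.

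Finally, for the distributional statement I would write $\pi_{0}P^{t}-s=\pi_{0}(P^{t}-S^{\infty})$, using $s=\pi_{0}S^{\infty}$, and invoke the elementary bound $\|vM\|_{1}\le\|v\|_{1}\|M\|_{\infty}$ valid for any row vector $v$ and matrix $M$, which is precisely the maximum-absolute-row-sum characterization of $\|\cdot\|_{\infty}$ appearing in Table \ref{tab:Notation}. Since $\pi_{0}$ is a probability vector we have $\|\pi_{0}\|_{1}=1$, so $\|\pi_{0}P^{t}-s\|_{1}\le\|P^{t}-S^{\infty}\|_{\infty}\le\delta t+\kappa|\lambda_{\numclust+1}|^{t}$, which completes the proof.
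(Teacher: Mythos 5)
Your proof is correct and follows essentially the same route as the paper's: the triangle inequality through $S^{t}$, a telescoping identity for $P^{t}-S^{t}$ bounded via submultiplicativity of $\|\cdot\|_{\infty}$ and row-stochasticity to get the $\delta t$ term, diagonalization of $S$ to get the $\kappa|\lambda_{\numclust+1}|^{t}$ term, and the bound $\|vM\|_{1}\le\|v\|_{1}\|M\|_{\infty}$ for the distributional statement. If anything, your write-up is more complete than the paper's, which asserts $\|P-S\|_{\infty}\le\delta$ without the row-sum argument for the stochastic complement that you supply, and dismisses the $\ell^{1}$ estimate with ``follows similarly.''
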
A proof of this result appears in the Appendix.

In the estimate $\|P^{t}-S^{\infty}\|_{\infty}\le \delta t+\kappa|\lambda_{\numclust+1}|^{t}$, the right hand side consists of two terms, which reference different aspects of the underlying dynamics of $P$.  The $\delta t$ term bounds $\|P^{t}-S^{t}\|_{\infty}$, which accounts for the approximation of $P^{t}$ by the reducible Markov chain $S^{t}$.  In the context in which the partition $\{C_{k}\}_{k=1}^{\numclust}$ is interpreted as an unsupervised clustering of the data $X$, this term accounts for the between-cluster connections in $P$.  The term $\kappa |\lambda_{\numclust+1}|^{t}$ bounds $\|S^{t}-S^{\infty}\|_{\infty}$, which accounts for the propensity of mixing within the distinct elements of the partition $\{C_{k}\}_{k=1}^{\numclust}$.  In the clustering context, this term quantifies the within-cluster connections.  

Theorem \ref{thm:NearReducibility} implies that for $\epsilon>0$ large enough, there is a range of $t$ for which the dynamics of $P^{t}$ are $\epsilon$-close to the dynamics of the reducible, stationary Markov chain $S^{\infty}$:

\begin{cor}\label{cor:CriticalTimeRange}Let $P,S^{\infty}, s,\lambda_{\numclust+1}, \delta, \kappa$ be as in Theorem \ref{thm:NearReducibility}.  Suppose that for some $\epsilon>0$, \[{\ln\left(\frac{2\kappa}{\epsilon}\right)}/{\ln\left(\frac{1}{|\lambda_{\numclust+1}|}\right)}<t<\frac{\epsilon}{2\delta}.\]  Then $\|P^{t}-S^{\infty}\|_{\infty}<\epsilon,$ and for every initial distribution $\pi_{0}$, $\|\pi_{0}P^{t}-s\|_{1}<\epsilon.$
\end{cor}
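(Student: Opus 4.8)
The plan is to read the conclusion directly off the two estimates in Theorem~\ref{thm:NearReducibility}. Both the operator-norm bound $\|P^{t}-S^{\infty}\|_{\infty}\le\delta t+\kappa|\lambda_{\numclust+1}|^{t}$ and the total-variation bound $\|\pi_{0}P^{t}-s\|_{1}\le\delta t+\kappa|\lambda_{\numclust+1}|^{t}$ share the same right-hand side, so it suffices to prove the single scalar inequality $\delta t+\kappa|\lambda_{\numclust+1}|^{t}<\epsilon$ for every $t$ in the stated range; the two conclusions then follow simultaneously, and the $\pi_{0}$ statement holds uniformly over initial distributions since Theorem~\ref{thm:NearReducibility} already supplies a $\pi_{0}$-independent bound. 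I would establish the scalar inequality by splitting the target as $\epsilon=\tfrac{\epsilon}{2}+\tfrac{\epsilon}{2}$ and controlling the linear (between-cluster) term $\delta t$ and the exponential (within-cluster) term $\kappa|\lambda_{\numclust+1}|^{t}$ separately, using the upper and lower endpoints of the interval, respectively.

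For the linear term, the upper constraint $t<\frac{\epsilon}{2\delta}$ immediately gives $\delta t<\frac{\epsilon}{2}$. For the exponential term, I would rearrange the desired bound $\kappa|\lambda_{\numclust+1}|^{t}<\frac{\epsilon}{2}$ into $|\lambda_{\numclust+1}|^{t}<\frac{\epsilon}{2\kappa}$ and take logarithms. The one point demanding care, which I regard as the only real obstacle, is the sign bookkeeping: since each $S_{kk}$ is primitive we have $|\lambda_{\numclust+1}|<1$, hence $\ln|\lambda_{\numclust+1}|<0$, so dividing through by this negative quantity reverses the inequality. After rewriting $\frac{\ln(\epsilon/(2\kappa))}{\ln|\lambda_{\numclust+1}|}=\frac{\ln(2\kappa/\epsilon)}{\ln(1/|\lambda_{\numclust+1}|)}$ (numerator and denominator each absorbing a minus sign), the requirement $\kappa|\lambda_{\numclust+1}|^{t}<\frac{\epsilon}{2}$ is seen to be exactly equivalent to the lower endpoint hypothesis $t>\ln(2\kappa/\epsilon)/\ln(1/|\lambda_{\numclust+1}|)$.

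Adding the two controlled terms yields $\delta t+\kappa|\lambda_{\numclust+1}|^{t}<\frac{\epsilon}{2}+\frac{\epsilon}{2}=\epsilon$, which is precisely the claim. Beyond the logarithmic inversion flagged above, the argument is elementary algebra: the corollary is essentially a restatement of Theorem~\ref{thm:NearReducibility} optimized over its two competing terms, one growing linearly and one decaying exponentially in $t$. I would also note explicitly that the hypothesis tacitly presupposes the interval to be nonempty, i.e.\ that the lower endpoint lies below the upper endpoint; as $\epsilon$ grows the lower endpoint decreases while the upper endpoint $\frac{\epsilon}{2\delta}$ increases, so this nonemptiness is a constraint on the size of $\epsilon$ rather than something to be proved, and it is exactly what justifies the qualifier ``for $\epsilon>0$ large enough'' preceding the statement.
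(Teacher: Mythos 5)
Your proposal is correct and is exactly the intended argument: the paper states this corollary without proof as an immediate consequence of Theorem~\ref{thm:NearReducibility}, and splitting $\epsilon$ into $\frac{\epsilon}{2}+\frac{\epsilon}{2}$, bounding $\delta t$ by the upper endpoint and $\kappa|\lambda_{\numclust+1}|^{t}$ by the lower endpoint (with the sign reversal from $\ln|\lambda_{\numclust+1}|<0$ handled as you do), is precisely how it follows. Your closing remark that the hypothesis tacitly requires the interval to be nonempty, matching the paper's phrase ``for $\epsilon>0$ large enough,'' is also accurate.
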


The values $\lambda_{\numclust+1}, \delta, \kappa$ may be understood as latent geometric parameters of the dataset and the underlying partition, which determine the range of times $t$ at which mesoscopic equilibria are reached with respect to the partition $\{C_{k}\}_{k=1}^{\numclust}$.  More precisely, $\delta,\kappa$ converge as $n\rightarrow\infty$ to natural quantities independent of $n$, and \cite{Trillos2019_Error} proved that as $n\rightarrow\infty$, there is a scaling for $\sigma\rightarrow 0^{+}$ in which the (random) eigenvalues of $P$ converge to the (deterministic) eigenvalues of a corresponding continuum operator.  Thus, the parameters of Theorem \ref{cor:CriticalTimeRange} may be understood as random fluctuations of geometrically intrinsic quantities which admit the following interpretations:

\vspace{5pt}

\begin{itemize}
\item $\lambda_{\numclust+1}$ is the largest eigenvalue of $S$ not equal to 1.  Since $S$ is block diagonal and each $S_{kk}$ is primitive, it follows that $\lambda_{\numclust+1}=\max_{k=1,\dots,\numclust}\lambda_{2}(S_{kk})$, where $\lambda_{2}(S_{kk})$ is the second largest (and first not equal to 1) eigenvalue of $S_{kk}$.  The conductance \cite{Chung1997} and the mixing time \cite{Levin2009} of the random walk restricted to $S_{kk}$ are closely related to $\lambda_{2}(S_{kk})$.\vspace{5pt}

\item The quantity $\delta=2\max_{k=1,\dots,\numclust}\|P_{k*}\|_{\infty}$ is controlled by the largest interaction between the clusters $\{C_{k}\}_{k=1}^{\numclust}$.  \vspace{5pt}

\item The quantity $\kappa=\|Z^{-1}\|_{\infty}\|Z\|_{\infty}$, with $Z=\left(\phi_{1}|\dots|\phi_{n}\right)$, is  a measure of the condition number of $S$. If $Z,Z^{-1}$ are orthogonal matrices, then each row of $Z,Z^{-1}$ has $\ell^{2}$ norm 1, hence  $\kappa\le n$.  
\end{itemize}

\vspace{5pt}

Analysis of the stochastic complement shows that, for some range of time $t\in [\tau_{1},\tau_{2}]$, the diffusion distances at time $t$ for points in the same $C_{k}$ are uniformly smaller than the diffusion distances between any points in distinct $C_{k}$ \cite{Maggioni2019_Learning}.  Indeed, let 

\[\Din=\max_{k=1,\dots,\numclust}\max_{x,y\in C_{k}}D_{t}(x,y), \quad \Dbtw=\min_{\substack{k,k'=1,\dots,\numclust, \\ k\neq k'}}\min_{x\in C_{k}, y\in C_{k'}}D_{t}(x,y).\]Theorem \ref{thm:LUND} is stated in $\ell^{2}$, which necessitates the following definition.

\begin{defn}
Let $P, S^{\infty}\in \mathbb{R}^{n\times n}$ be as in Theorem \ref{thm:NearReducibility} and set $p_{t}(x_{i},x_{j})=(P^{t})_{ij}, \ s^{\infty}(x_{i},x_{j})=(S^{\infty})_{ij}$.  Define \[\gamma(t)=\max_{x\in X}\left(1-\frac{1}{2}\sum_{u\in X}\left|\frac{|p_{t}(x,u)-\sinf(x,u)|}{\|p_{t}(x,\cdot)-\sinf(x,\cdot)\|_{2}}-\frac{1}{\sqrt{n}}\right|^{2}\right)^{-1}.\]
\end{defn}It is not hard to show \cite{Botelho2019_Exact} that for any vector $u\in\mathbb{R}^{n}$, $\|u\|_{2}=\frac{c_{u}}{\sqrt{n}}\|u\|_{1},$ where \[c_{u}=\left(1-\frac{1}{2}\sum_{i=1}^{n}\left|\frac{|u_{i}|}{\|u\|_{2}}-\frac{1}{\sqrt{n}}\right|^{2}\right)^{-1}.\]  So, $\gamma(t)$ is the maximum of $c_{u}$ when $u$ is chosen among the rows of $P^{t}-S^{\infty}$.  In this sense, $\gamma(t)\in [1, \sqrt{n}]$ measures how the $\ell^{1}$ norm differs from the $\ell^{2}$ norm across all rows of $P^{t}-S^{\infty}$.

\begin{thm}\label{thm:LUND}\cite{Maggioni2019_Learning}  Let $X$ be data partitioned as $\{C_{k}\}_{k=1}^{\numclust}$ and let $P$ be a corresponding Markov transition matrix on $X$.  Let $\delta, \lambda_{\numclust+1}, \kappa,S^{\infty}$ be as in Theorem \ref{thm:NearReducibility} and let $s^{\infty}(x_{i},x_{j})=(S^{\infty})_{ij}$. Let $D_{t}$ be the diffusion distance associated to $P$ and counting measure $\nu$.  If $t,\epsilon$ satisfy \[\frac{\ln\left(\frac{2\kappa}{\epsilon}\right)}{\ln\left(\frac{1}{\lambda_{\numclust+1}}\right)}<t<\frac{\epsilon}{2\delta}\,,\] then $\displaystyle\Din\le 2\frac{\epsilon}{\sqrt{n}}\gamma(t)$ and $\displaystyle\Dbtw\ge 2\left(\min_{y\in X}\|\sinf(y,\cdot)\|_{\ell^{2}(\nu)}-\frac{\epsilon}{\sqrt{n}}\gamma(t)\right)$.

\end{thm}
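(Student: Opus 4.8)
The plan is to prove both bounds by controlling each row of $P^{t}-S^{\infty}$ and then exploiting the block structure of $S^{\infty}$: within a cluster the rows of $S^{\infty}$ coincide, while across distinct clusters they have disjoint support. Throughout I write $R_{i}=p_{t}(x_{i},\cdot)$ for the $i^{\text{th}}$ row of $P^{t}$ and $\sinf_{i}=\sinf(x_{i},\cdot)$ for the $i^{\text{th}}$ row of $S^{\infty}$, so that $D_{t}(x_{i},x_{j})=\|R_{i}-R_{j}\|_{\ell^{2}(\nu)}$ and each $R_{i}-\sinf_{i}$ is a row of $P^{t}-S^{\infty}$.

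First I would invoke the hypothesis on $(t,\epsilon)$: since it is exactly the condition of Corollary \ref{cor:CriticalTimeRange}, we obtain $\|P^{t}-S^{\infty}\|_{\infty}<\epsilon$, which means every row satisfies $\|R_{i}-\sinf_{i}\|_{1}\le\|P^{t}-S^{\infty}\|_{\infty}<\epsilon$. Next I would convert this $\ell^{1}$ control into $\ell^{2}$ control via the identity $\|u\|_{2}=\frac{c_{u}}{\sqrt{n}}\|u\|_{1}$ recalled just before the statement. Applying it to $u=R_{i}-\sinf_{i}$ and using that $c_{u}\le\gamma(t)$ for every row of $P^{t}-S^{\infty}$ by the definition of $\gamma(t)$, this yields the uniform per-row estimate $\|R_{i}-\sinf_{i}\|_{2}\le\frac{\gamma(t)}{\sqrt{n}}\|R_{i}-\sinf_{i}\|_{1}<\frac{\epsilon}{\sqrt{n}}\gamma(t)$. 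This single estimate drives both halves of the theorem.

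For $\Din$, take $x,y$ in a common cluster $C_{k}$. Because $S^{\infty}$ is block diagonal with $k^{\text{th}}$ block $\1\pi^{k}$, the rows $\sinf_{x}$ and $\sinf_{y}$ are \emph{identical} (both equal $\pi^{k}$ padded by zeros), so $\sinf_{x}-\sinf_{y}=\0$. The triangle inequality then gives $D_{t}(x,y)=\|(R_{x}-\sinf_{x})-(R_{y}-\sinf_{y})\|_{2}\le\|R_{x}-\sinf_{x}\|_{2}+\|R_{y}-\sinf_{y}\|_{2}<2\frac{\epsilon}{\sqrt{n}}\gamma(t)$, and maximizing over in-cluster pairs yields the claimed bound on $\Din$.

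For $\Dbtw$, take $x\in C_{k}$ and $y\in C_{k'}$ with $k\ne k'$; now $\sinf_{x}$ and $\sinf_{y}$ are supported on the disjoint index sets $C_{k}$ and $C_{k'}$. The reverse triangle inequality gives $D_{t}(x,y)\ge\|\sinf_{x}-\sinf_{y}\|_{2}-\|R_{x}-\sinf_{x}\|_{2}-\|R_{y}-\sinf_{y}\|_{2}>\|\sinf_{x}-\sinf_{y}\|_{2}-2\frac{\epsilon}{\sqrt{n}}\gamma(t)$, so it remains to bound the signal term $\|\sinf_{x}-\sinf_{y}\|_{2}$ below by $2\min_{z\in X}\|\sinf_{z}\|_{\ell^{2}(\nu)}$, after which minimizing over between-cluster pairs completes the argument. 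This last step is the crux and the main obstacle: disjointness of supports gives $\|\sinf_{x}-\sinf_{y}\|_{2}=(\|\sinf_{x}\|_{2}^{2}+\|\sinf_{y}\|_{2}^{2})^{1/2}$, and one must argue the lower bound in terms of $\min_{z}\|\sinf_{z}\|_{2}$. Working purely in $\ell^{2}$ yields only a factor $\sqrt{2}$; obtaining the stated factor $2$ relies on carrying the disjoint-support computation in $\ell^{1}$—where two distinct probability rows of $S^{\infty}$ are at distance exactly $\|\sinf_{x}\|_{1}+\|\sinf_{y}\|_{1}=2$—and transferring back through the same $\gamma(t)/\sqrt{n}$ conversion. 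Reconciling these $\ell^{1}$ and $\ell^{2}$ normalizations so that the constant is exactly $2$ is the delicate bookkeeping that requires the most care.
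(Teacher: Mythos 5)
The paper never proves this theorem---it is imported, with a citation, from \cite{Maggioni2019_Learning}---so there is no internal proof to compare against, and your proposal has to stand on its own. Its first half does. The window hypothesis gives $\|P^{t}-S^{\infty}\|_{\infty}<\epsilon$ by Corollary \ref{cor:CriticalTimeRange}, hence every row $u_{x}=p_{t}(x,\cdot)-\sinf(x,\cdot)$ satisfies $\|u_{x}\|_{1}<\epsilon$; the identity $\|u\|_{2}=\frac{c_{u}}{\sqrt{n}}\|u\|_{1}$ together with $c_{u_{x}}\le\gamma(t)$ (legitimate precisely because $u_{x}$ is a row of $P^{t}-S^{\infty}$) yields $\|u_{x}\|_{2}<\epsilon\gamma(t)/\sqrt{n}$, and since rows of $S^{\infty}$ coincide within a cluster, the triangle inequality gives $\Din\le 2\epsilon\gamma(t)/\sqrt{n}$. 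That part is correct and complete.

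The second half has a genuine gap, exactly where you flagged it, and it cannot be closed by bookkeeping. You need $\|\sinf(x,\cdot)-\sinf(y,\cdot)\|_{2}\ge 2\min_{z}\|\sinf(z,\cdot)\|_{2}$ for $x,y$ in distinct clusters, but disjointness of supports gives $\|\sinf(x,\cdot)-\sinf(y,\cdot)\|_{2}=\left(\|\sinf(x,\cdot)\|_{2}^{2}+\|\sinf(y,\cdot)\|_{2}^{2}\right)^{1/2}$, which equals $\sqrt{2}\min_{z}\|\sinf(z,\cdot)\|_{2}$ when, e.g., the two blocks have equal size and uniform equilibria. Your proposed $\ell^{1}$ detour cannot recover the factor $2$: by definition, $\gamma(t)$ controls the $\ell^{1}$-to-$\ell^{2}$ conversion only for rows of $P^{t}-S^{\infty}$, not for $v=\sinf(x,\cdot)-\sinf(y,\cdot)$; for $v$ one only knows $c_{v}\ge 1$, i.e. $\|v\|_{2}\ge\|v\|_{1}/\sqrt{n}=2/\sqrt{n}$, which is weaker than $2\min_{z}\|\sinf(z,\cdot)\|_{2}$ (the latter is always $\ge 2/\sqrt{n}$ by Cauchy--Schwarz). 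In fact the constant $2$ as printed is not attainable by any argument, because the statement fails on a concrete example: take $n=2m$ points in two blocks with $P_{ij}=(1-\alpha)/m$ within blocks and $\alpha/m$ across blocks, $\alpha=0.01$. Then $S^{\infty}$ has uniform blocks, each row of $P^{t}-S^{\infty}$ has constant absolute value so $\gamma(t)=1$, $\delta=2\alpha$, and $\lambda_{\numclust+1}=0$ so the lower constraint on $t$ is vacuous (perturb slightly if one objects to $\lambda_{\numclust+1}=0$ exactly); thus $(t,\epsilon)=(1,0.05)$ lies in the window. Yet $\Dbtw=2(1-2\alpha)/\sqrt{n}=1.96/\sqrt{n}$, while $2\left(\min_{z}\|\sinf(z,\cdot)\|_{2}-\epsilon\gamma(t)/\sqrt{n}\right)=(2\sqrt{2}-0.1)/\sqrt{n}\approx 2.73/\sqrt{n}$. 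What your argument actually proves---and what is true---is $\Dbtw\ge \sqrt{2}\min_{z}\|\sinf(z,\cdot)\|_{2}-2\epsilon\gamma(t)/\sqrt{n}$. So the missing factor is not a defect of your proof but of the statement as transcribed here; your reduction and the $\sqrt{2}$ obstruction you identified are exactly right, and you should not have expected the $\ell^{1}$ reconciliation to succeed.
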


These theorems are powerful tools for guaranteeing performance properties of diffusion distances, since they guarantee good behavior for clustering by choosing \[t\in [\tau_{1},\tau_{2}]=\left[\frac{\ln\left(\frac{2\kappa}{\epsilon}\right)}{\ln\left(\frac{1}{\lambda_{\numclust+1}}\right)},\frac{\epsilon}{2\delta}\right].\]  However, due to the dependence on a fixed time parameter, they are unable to account for multitemporality in the data, where different clusters emerge at different time scales.  Synthetic examples of data in $\mathbb{R}^{2}$ exhibiting this multiscale phenomena appear in Figures \ref{fig:Multitemporality_Gaussians} and \ref{fig:Multitemporality_Nonlinear}. 

\subsection{Multitemporal Analysis of Diffusion State Distances}
\label{subsec:MultitemporalTheory}

The DSD, however, aggregates across time scales.  This allows it to capture cluster structures that emerge at many different time scales.  Indeed, suppose $\{\C_{r}\}_{r=1}^{R}$ is a family of clusterings, with $\C_{r}=\{C_{r,j}\}_{j=1}^{K_{r}}$ and where $K_{r}$ is the number of clusters at scale $r\in\{1,\dots,\numclust\}$.  For each scale $r$, one can construct the stochastic complement $S_{r}$ of $P$ with respect to the partition $\C_{r}$, which has $K_{r}$ diagonal blocks and an associated equilibrium distribution $S_{r}^{\infty}$.  For each $r$, Theorem \ref{thm:NearReducibility} yields a result depending on $(\delta_{r},\lambda_{r}^{*},\kappa_{r})$, where $\delta_{r}$ and $\kappa_{r}$ are the $\delta$ and $\kappa$ parameters associated to the clustering $\C_{r}$ and $\lambda_{r}^{*}$ denotes the $(K_{r}+1)^{st}$ largest eigenvalue of $S_{r}^{\infty}$.  Given such a family of partitions, the following estimates on DSD hold.

\begin{thm}\label{thm:Main}Let $P\in\mathbb{R}^{n\times n}$ be a Markov matrix for data $X=\{x_{i}\}_{i=1}^{n}\subset\mathbb{R}^{D}$, and let $\{\C_{r}\}_{r=1}^{R}$, $\C_{r}=\{C_{r,j}\}_{j=1}^{K_{r}}$ be an associated family of partitions on $X$.  Let $(\delta_{r},\lambda_{r}^{*},\kappa_{r})$ be the parameters associated to partition $\C_{r}$, as defined in Theorem \ref{thm:NearReducibility}.  Let $r:[0,\infty)\rightarrow \{1,\dots,R\}$ be a map assigning each time scale to a partition.  Then:

\vspace{5pt}

\begin{enumerate}[(a)]

\item $\displaystyle\left\|\sum_{t=1}^{\infty}(e_{i}-e_{j})P^{t}\right\|_{1}\le  \left\|\sum_{t=1}^{\infty}(e_{i}-e_{j})S_{r(t)}^{\infty}\right\|_{1} +  2\sum_{t=1}^{\infty}\left(\delta_{r(t)} t+\kappa_{r(t)}|\lambda_{r(t)}^{*}|^{t}\right).$

\vspace{5pt}

\item $\displaystyle\left\|\sum_{t=1}^{\infty}(e_{i}-e_{j})P^{t}\right\|_{1}\ge \left\|\sum_{t=1}^{\infty}(e_{i}-e_{j})S_{r(t)}^{\infty}\right\|_{1}-2\sum_{t=1}^{\infty}\left(\delta_{r(t)}t+\kappa_{r(t)}|\lambda_{r(t)}^{*}|^{t}\right).$

\end{enumerate}

\end{thm}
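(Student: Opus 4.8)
The plan is to reduce both inequalities to a single bound on the $\ell^{1}$ distance between the two aggregated signals. Write $A=\sum_{t=1}^{\infty}(e_{i}-e_{j})P^{t}$ and $B=\sum_{t=1}^{\infty}(e_{i}-e_{j})S_{r(t)}^{\infty}$, both understood as genuine vectors in $\mathbb{R}^{n}$ (the convergence of $A$ is exactly the Neumann-series argument in (\ref{eqn:NeumannSeries}), and the convergence of $B$ is guaranteed once the map $r(\cdot)$ is chosen so that the right-hand side below is finite). Then (a) and (b) are precisely the two halves of the reverse triangle inequality $\bigl|\,\|A\|_{1}-\|B\|_{1}\,\bigr|\le\|A-B\|_{1}$, so it suffices to establish $\|A-B\|_{1}\le 2\sum_{t=1}^{\infty}\bigl(\delta_{r(t)}t+\kappa_{r(t)}|\lambda_{r(t)}^{*}|^{t}\bigr)$.

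To bound $\|A-B\|_{1}$, I would first push the norm inside the time sum. Since $A-B=\sum_{t=1}^{\infty}(e_{i}-e_{j})(P^{t}-S_{r(t)}^{\infty})$ converges absolutely in $\ell^{1}$ (granted finiteness of the target bound), the triangle inequality gives $\|A-B\|_{1}\le\sum_{t=1}^{\infty}\|(e_{i}-e_{j})(P^{t}-S_{r(t)}^{\infty})\|_{1}$. It then remains to control each summand. Writing $M_{t}=P^{t}-S_{r(t)}^{\infty}$ and splitting $(e_{i}-e_{j})M_{t}=e_{i}M_{t}-e_{j}M_{t}$, I would bound each row by the matrix $\infty$-norm of the notation table (the maximum absolute row sum): $\|e_{i}M_{t}\|_{1}=\sum_{\ell}|(M_{t})_{i\ell}|\le\|M_{t}\|_{\infty}$, and likewise for $e_{j}$, so that $\|(e_{i}-e_{j})M_{t}\|_{1}\le 2\|M_{t}\|_{\infty}$.

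Finally, for each fixed $t$ I apply Theorem \ref{thm:NearReducibility} with the partition $\C_{r(t)}$: this yields exactly $\|M_{t}\|_{\infty}=\|P^{t}-S_{r(t)}^{\infty}\|_{\infty}\le\delta_{r(t)}t+\kappa_{r(t)}|\lambda_{r(t)}^{*}|^{t}$, since $(\delta_{r(t)},\lambda_{r(t)}^{*},\kappa_{r(t)})$ are by definition the parameters attached to that partition. Substituting back produces the factor-of-two estimate on $\|A-B\|_{1}$, and combining with the reverse triangle inequality delivers (a) and (b) simultaneously.

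The only real subtlety, and the point I would be most careful about, is the legitimacy of interchanging the infinite time-sum with the $\ell^{1}$ norm and of invoking the reverse triangle inequality in (b): both require $A$ and $B$ to be convergent vectors rather than formal series. The convergence of $A$ is unconditional by (\ref{eqn:NeumannSeries}), but the convergence of $B$ — and the meaningfulness of the statement — hinges on $r(\cdot)$ being chosen so that $\sum_{t}\bigl(\delta_{r(t)}t+\kappa_{r(t)}|\lambda_{r(t)}^{*}|^{t}\bigr)$ converges. This is not automatic, since for a fixed partition the term $\delta_{r(t)}t$ diverges; the intended regime assigns coarse partitions at large $t$ (in the extreme, the trivial one-cluster partition, for which $\delta=0$ and $S^{\infty}=\1\pi$ forces $(e_{i}-e_{j})S^{\infty}=0$), so that both sides remain finite and the bound is informative. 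I would state this convergence hypothesis explicitly, after which the estimates above go through verbatim.
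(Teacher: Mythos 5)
Your proof is correct and is essentially the paper's own argument: the paper decomposes $(e_{i}-e_{j})P^{t}=(e_{i}-e_{j})S_{r(t)}^{\infty}+(e_{i}-e_{j})(P^{t}-S_{r(t)}^{\infty})$, applies the triangle inequality in each direction, bounds $\|(e_{i}-e_{j})(P^{t}-S_{r(t)}^{\infty})\|_{1}\le 2\|P^{t}-S_{r(t)}^{\infty}\|_{\infty}$, and invokes Theorem \ref{thm:NearReducibility} termwise, exactly as you do; your packaging of (a) and (b) as the two halves of the reverse triangle inequality for $\|A-B\|_{1}$ is only a cosmetic reorganization. Your caveat about convergence of $B$ and of $\sum_{t}(\delta_{r(t)}t+\kappa_{r(t)}|\lambda_{r(t)}^{*}|^{t})$ is well taken, but it applies equally to the paper's proof, which treats these series formally without stating the hypothesis.
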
\begin{proof}To see (a), notice that we may estimate \begin{align*}\left\|\sum_{t=1}^{\infty}(e_{i}-e_{j})P^{t}\right\|_{1}=&\left\|\sum_{t=1}^{\infty}\left((e_{i}-e_{j})S_{r(t)}^{\infty}+(e_{i}-e_{j})\left(P^{t}-S_{r(t)}^{\infty}\right)\right)\right\|_{1}\\
\le & \left\|\sum_{t=1}^{\infty}(e_{i}-e_{j})S_{r(t)}^{\infty}\right\|_{1}+\left\|\sum_{t=1}^{\infty}(e_{i}-e_{j})\left(P^{t}-S_{r(t)}^{\infty}\right)\right\|_{1}\\
\le& \left\|\sum_{t=1}^{\infty}(e_{i}-e_{j})S_{r(t)}^{\infty}\right\|_{1}+2\sum_{t=1}^{\infty}\left\|P^{t}-S_{r(t)}^{\infty}\right\|_{\infty}\\
\le& \left\|\sum_{t=1}^{\infty}(e_{i}-e_{j})S_{r(t)}^{\infty}\right\|_{1}+2\sum_{t=1}^{\infty}\left(\delta_{r(t)}t+\kappa_{r(t)}|\lambda_{r(t)}^{*}|^{t}\right).\end{align*}On the other hand, to see (b), we estimate
\begin{align*}\left\|\sum_{t=1}^{\infty}(e_{i}-e_{j})P^{t}\right\|_{1}=&\left\|\sum_{t=1}^{\infty}\left((e_{i}-e_{j})S_{r(t)}^{\infty}+(e_{i}-e_{j})\left(P^{t}-S_{r(t)}^{\infty}\right)\right)\right\|_{1}\\
\ge & \left\|\sum_{t=1}^{\infty}(e_{i}-e_{j})S_{r(t)}^{\infty}\right\|_{1}-\left\|\sum_{t=1}^{\infty}(e_{i}-e_{j})\left(P^{t}-S_{r(t)}^{\infty}\right)\right\|_{1}\\
\ge & \left\|\sum_{t=1}^{\infty}(e_{i}-e_{j})S_{r(t)}^{\infty}\right\|_{1}-\sum_{t=1}^{\infty}\left\|(e_{i}-e_{j})\left(P^{t}-S_{r(t)}^{\infty}\right)\right\|_{1}\\
\ge & \left\|\sum_{t=1}^{\infty}(e_{i}-e_{j})S_{r(t)}^{\infty}\right\|_{1}-2\sum_{t=1}^{\infty}\left\|P^{t}-S_{r(t)}^{\infty}\right\|_{\infty}\\
\ge & \left\|\sum_{t=1}^{\infty}(e_{i}-e_{j})S_{r(t)}^{\infty}\right\|_{1}-2\sum_{t=1}^{\infty}\left(\delta_{r(t)}t+\kappa_{r(t)}|\lambda_{r(t)}^{*}|^{t}\right).
\end{align*}
\end{proof}

\begin{cor}Let $P\in\mathbb{R}^{n\times n}$ be a Markov matrix for data $X=\{x_{i}\}_{i=1}^{n}\subset\mathbb{R}^{D}$, and let $\{\C_{r}\}_{r=1}^{R}$, $\C_{r}=\{C_{r,j}\}_{j=1}^{K_{r}}$ be an associated family of partitions on $X$.  Let $(\delta_{r},\lambda_{r}^{*},\kappa_{r})$ be the parameters associated to partition $\C_{r}$, as defined in Theorem \ref{thm:NearReducibility}.  Let $r:[0,\infty)\rightarrow \{1,\dots,R\}$ be a map assigning each time scale to a choice of partition.  Let $x_{i},x_{j}\in X$ be such that $x_{i},x_{j}$ are in the same cluster in $C_{r(t)}$, for all $t\ge T_{*}$.  Then:

\vspace{5pt}

\begin{enumerate}[(a)]
\item  \[\left\|\sum_{t=1}^{\infty}(e_{i}-e_{j})P^{t}\right\|_{1}\le \left\|\sum_{t=1}^{T_{*}}(e_{i}-e_{j})S_{r(t)}^{\infty}\right\|_{1}+2\sum_{t=1}^{\infty}\left(\delta_{r(t)}t+\kappa_{r(t)}|\lambda_{r(t)}^{*}|^{t}\right).\]
 
 \item  \[\left\|\sum_{t=1}^{\infty}(e_{i}-e_{j})P^{t}\right\|_{1}\ge \left\|\sum_{t=1}^{T_{*}}(e_{i}-e_{j})S_{r(t)}^{\infty}\right\|_{1}-2\sum_{t=1}^{\infty}\left(\delta_{r(t)}t+\kappa_{r(t)}|\lambda_{r(t)}^{*}|^{t}\right).\]

 \end{enumerate}

\end{cor}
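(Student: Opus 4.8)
The plan is to derive both inequalities directly from Theorem~\ref{thm:Main} by showing that, under the added hypothesis, the infinite sum $\sum_{t=1}^{\infty}(e_{i}-e_{j})S_{r(t)}^{\infty}$ appearing there collapses to the finite sum $\sum_{t=1}^{T_{*}}(e_{i}-e_{j})S_{r(t)}^{\infty}$. Once this reduction is in place, parts (a) and (b) follow immediately by substituting the truncated sum into the corresponding statements of Theorem~\ref{thm:Main}, leaving the error term $2\sum_{t=1}^{\infty}\bigl(\delta_{r(t)}t+\kappa_{r(t)}|\lambda_{r(t)}^{*}|^{t}\bigr)$ untouched. So the entire content of the corollary is the truncation of the first term.

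The heart of the argument is the block structure of the equilibrium matrix. By Theorem~\ref{thm:NearReducibility}, each $S_{r(t)}^{\infty}$ is block diagonal with diagonal blocks of the form $\1\pi^{k}$, where $\pi^{k}$ is the stationary distribution of the $k^{\text{th}}$ stochastic complement. First I would record the elementary observation that within a single block $\1\pi^{k}$ every row equals the same vector $\pi^{k}$; hence, if $x_{i}$ and $x_{j}$ lie in a common cluster of $\C_{r(t)}$, their indices fall in the same diagonal block, the $i$-th and $j$-th rows of $S_{r(t)}^{\infty}$ coincide, and therefore $(e_{i}-e_{j})S_{r(t)}^{\infty}=\0$.

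With this zeroing established, the hypothesis that $x_{i},x_{j}$ belong to the same cluster of $\C_{r(t)}$ for all $t\ge T_{*}$ (in particular at $t=T_{*}$) gives $(e_{i}-e_{j})S_{r(t)}^{\infty}=\0$ for every such $t$, so that
\[\sum_{t=1}^{\infty}(e_{i}-e_{j})S_{r(t)}^{\infty}=\sum_{t=1}^{T_{*}}(e_{i}-e_{j})S_{r(t)}^{\infty},\]
every summand with $t>T_{*}$ vanishing and the $t=T_{*}$ summand being retained harmlessly. Feeding this identity into Theorem~\ref{thm:Main}(a) and (b) yields the two stated bounds.

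I do not expect a genuine obstacle: the triangle-inequality splitting has already been carried out in Theorem~\ref{thm:Main}, and the only point that must be verified with any care is that the block-diagonal form of $S_{r(t)}^{\infty}$ forces equality of the rows indexed by two same-cluster points. That verification is immediate from the explicit form of $S^{\infty}$ in Theorem~\ref{thm:NearReducibility}, so the remainder is purely bookkeeping.
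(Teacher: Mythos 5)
Your proposal is correct and follows exactly the paper's own argument: apply Theorem \ref{thm:Main} and observe that when $x_{i},x_{j}$ lie in the same cluster at scale $r(t)$, the corresponding rows of the block-diagonal matrix $S_{r(t)}^{\infty}$ coincide, so $(e_{i}-e_{j})S_{r(t)}^{\infty}=\0$ and the infinite sum truncates at $T_{*}$. Your write-up simply makes explicit the row-equality verification that the paper leaves as an ``observation.''
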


\begin{proof}This follows from Theorem \ref{thm:Main}, together with the observation that if $x_{i},x_{j}$ are in the same cluster at scale $r(t)$, then $(e_{i}-e_{j})S_{r(t)}^{\infty}=\0$.
\end{proof}

Note that if $x_{i},x_{j}$ are in different clusters at scale $r(t)$, \[(e_{i}-e_{j})S_{r(t)}^{\infty}=[\0,\0,\dots,s_{r(t),\ell_{i}}^{\infty},\0,\dots,\0,-s_{r(t),\ell_{j}}^{\infty},\0,\dots,\0],\]where at scale $r(t)$ $x_{i}$ is in the $\ell_{i}^{th}$ cluster, $x_{j}$ is in the $\ell_{j}^{th}$ cluster, and $s_{r(t),\ell}^{\infty}$ is the equilibrium distribution on the $\ell^{th}$ cluster at scale $r(t)$.  Hence, assuming some regularity on the size of the clusters at each scale, \[\left\|\sum_{t=1}^{T_{*}}(e_{i}-e_{j})S_{r(t)}^{\infty}\right\|_{1}\approx T_{*}.\]Moreover, if the hierarchical clustering at scale $r_{*}$ is ``good", in the sense that the stochastic complement $S_{r_{*}}$ well approximates $P$, then $t\delta_{r_{*}}+\kappa_{r_{*}}|\lambda_{r_{*}}^{*}|^{t}$ is small for $t$ such that $r(t)=r_{*}$.  In particular, if all the hierarchical partitions are good, then \[\sum_{t=1}^{\infty}\left(\delta_{r(t)}t+\kappa_{r(t)}|\lambda_{r(t)}^{*}|^{t}\right)\] can be made close to 0 by choosing a good partition at each time step.  This suggests the DSD captures the intrinsic hierarchical cluster structure in data: if two points persist in the same cluster across many time scales, they will be very close in DSD, compared to points that only exist in the same cluster at later time scales.  This multiscale hierarchical phenomenon is illustrated in Figures \ref{fig:Multitemporality_Gaussians}, \ref{fig:Multitemporality_Nonlinear}.  Note that as $t\rightarrow\infty$, $P^{t}$ converges to a rank 1 matrix, and so the trivial partition with all points in the same cluster yields the optimal partition, since in this case $\delta=0$.

We remark that for simplicity, all analysis in this section is done in $\ell^{1}$;  by considering $\gamma(t)$, all these results pass to the $\ell^{2}$ case, which is more convenient for computation, as will be shown in Section \ref{sec:SpectralAnalysis}.

\begin{figure}[!htb]
\centering
\begin{subfigure}[t]{.32\textwidth}
\includegraphics[width=\textwidth]{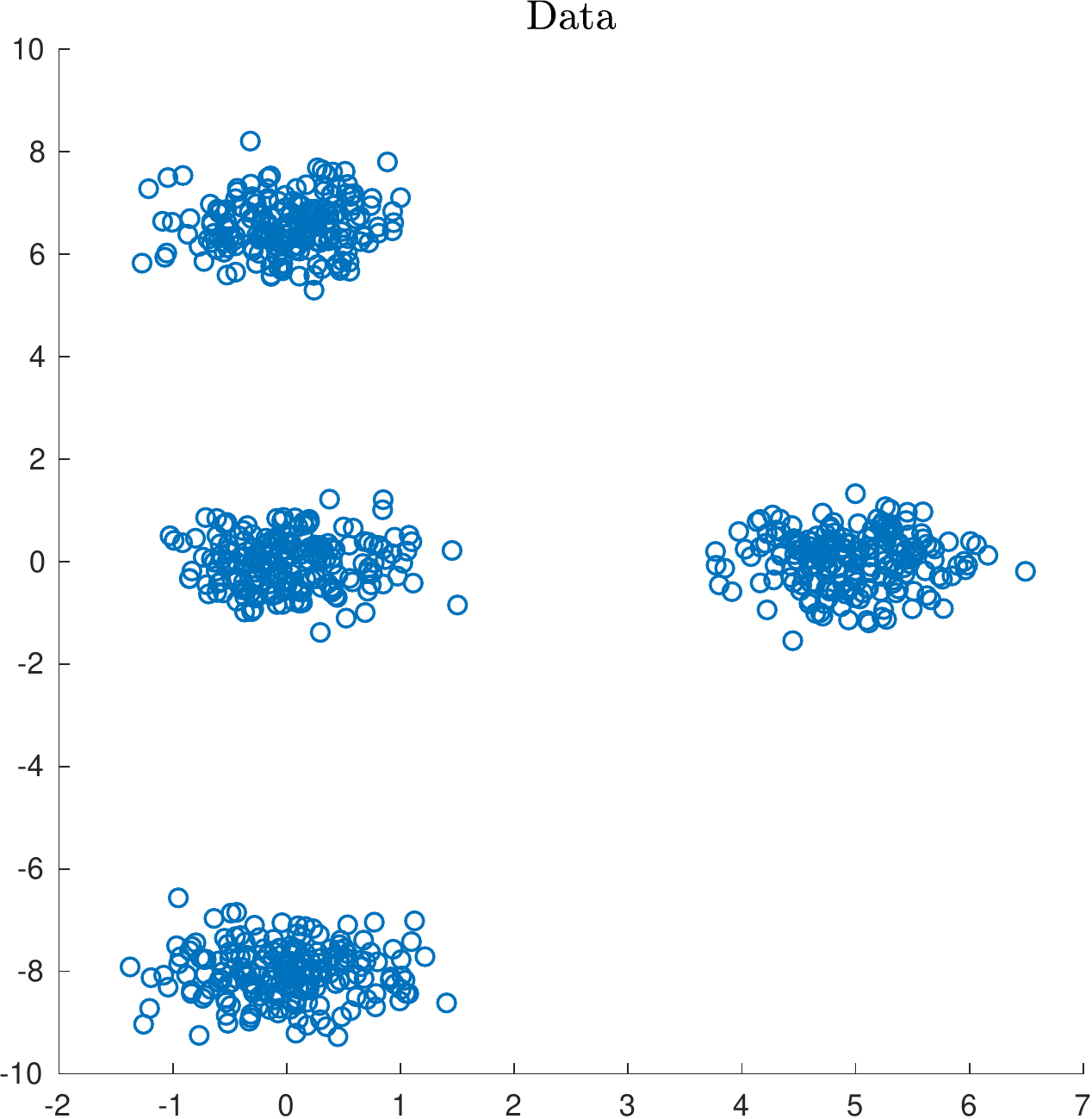}
\subcaption{Gaussian data}
\end{subfigure}
\begin{subfigure}[t]{.32\textwidth}
\includegraphics[width=\textwidth]{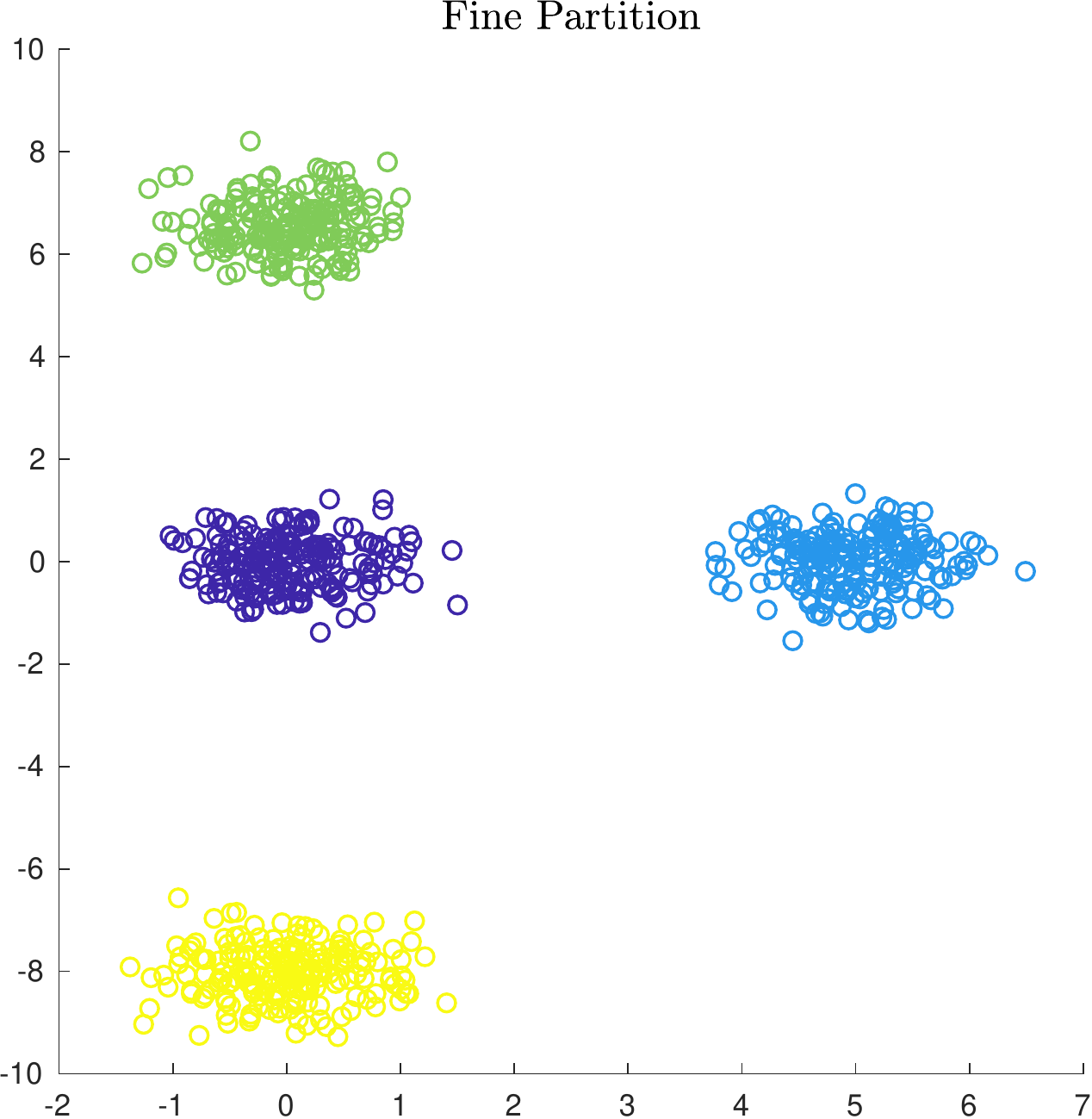}
\subcaption{Fine scale partition}
\end{subfigure}
\begin{subfigure}[t]{.32\textwidth}
\includegraphics[width=\textwidth]{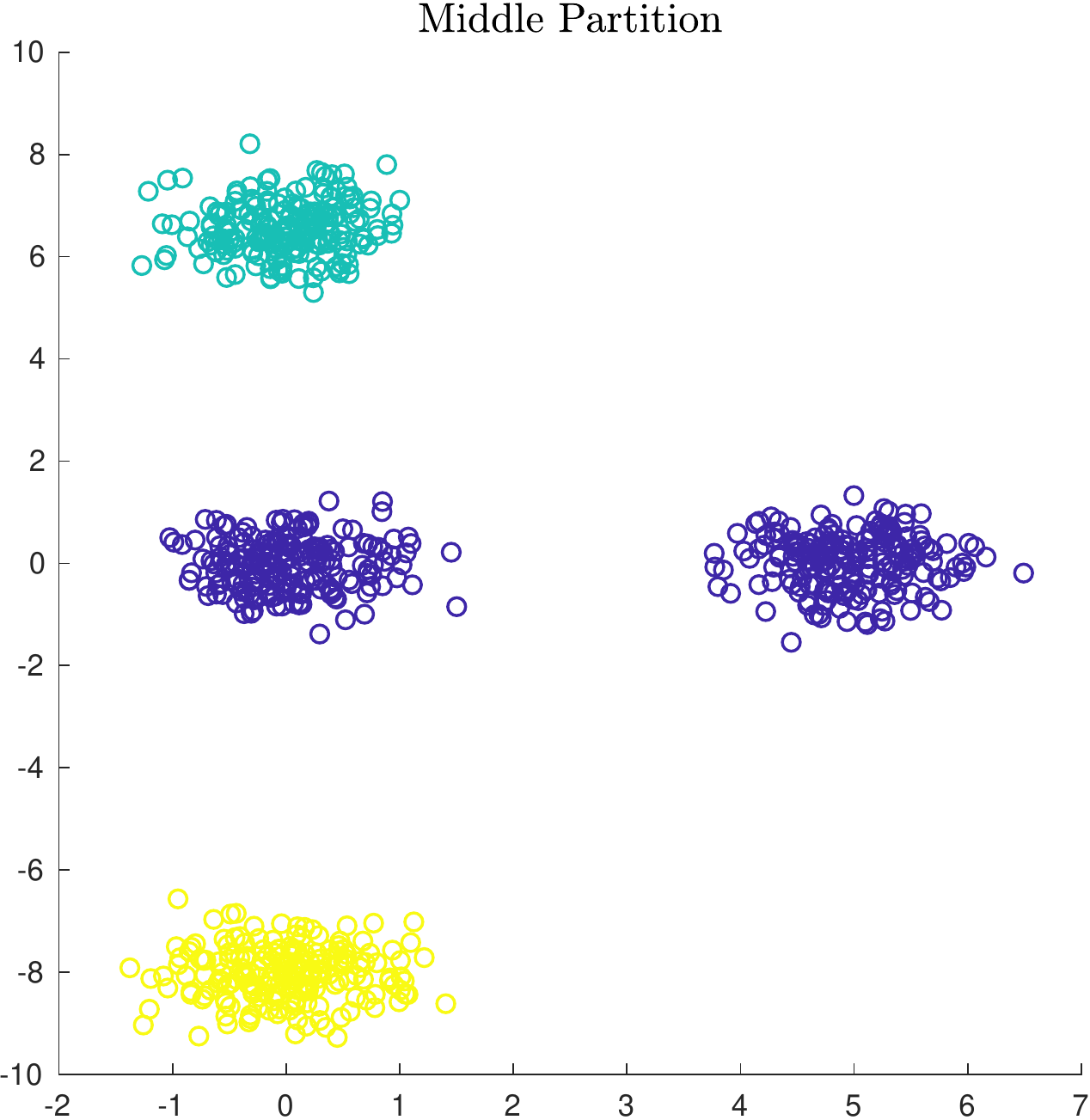}
\subcaption{Middle scale partition}
\end{subfigure}
\begin{subfigure}[t]{.32\textwidth}
\includegraphics[width=\textwidth]{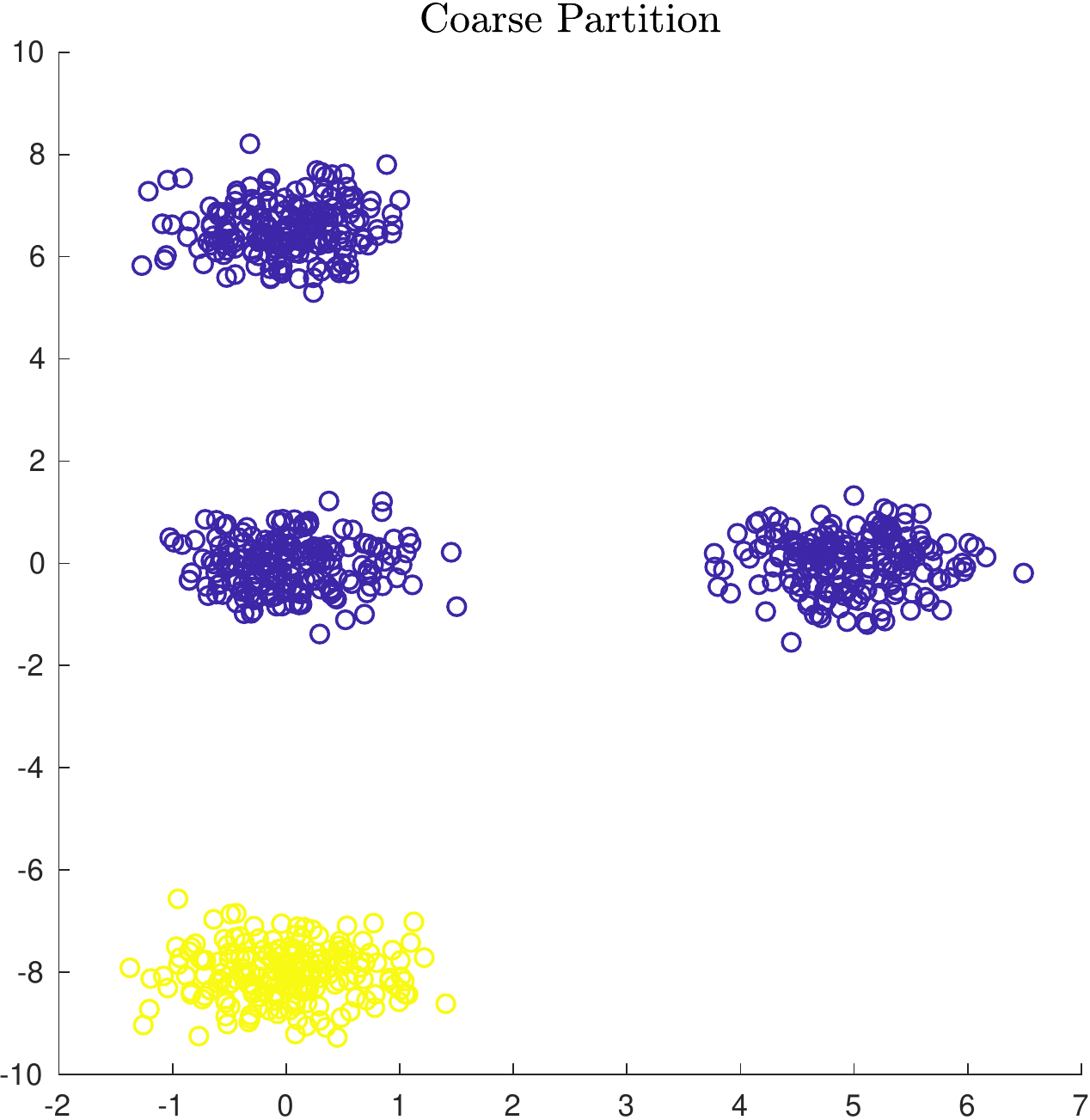}
\subcaption{Coarse scale partition}
\end{subfigure}
\begin{subfigure}[t]{.32\textwidth}
\includegraphics[width=\textwidth]{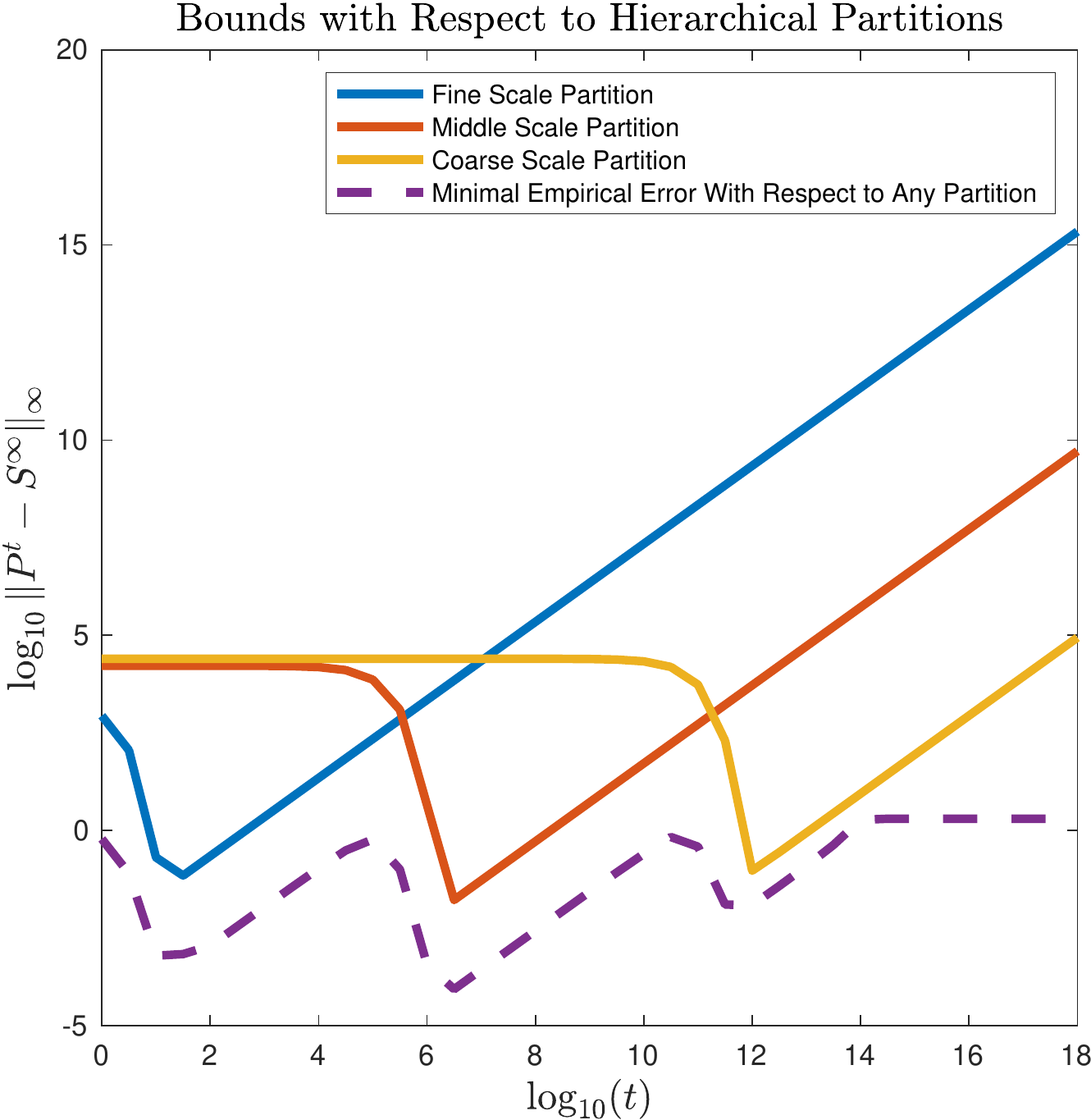}
\subcaption{Estimation of $\|P^{t}-S^{\infty}\|_{\infty}$}
\end{subfigure}

\caption{\emph{(a)}: Mixture of Gaussian data in $\mathbb{R}^{2}$ exhibiting multiscale structure.  The means of the Gaussians are at (0,0), (5,0), (0,6.5), (0,-8).    All Gaussians have covariance matrix $\frac{1}{4} I$.  The transition matrix $P$ was constructed with the Gaussian kernel with $\sigma=1$.  \emph{(b), (c), (d)}: There are four natural clusterings in the data, since the Gaussians are not equidistant.  \emph{(e)}:  The estimates on $\|P^{t}-S^{\infty}\|_{\infty}$ with respect to the fine, middle, and coarse scale partitions.  We see that as time progresses, increasingly coarser partitions fit the data.  Moreover, if we minimize the errors induced by these partitions (shown in the dotted line), we see that these mesoscopic equilibria characterize essentially the full multitemporal behavior of $P$.}
\label{fig:Multitemporality_Gaussians}		
\end{figure}

\begin{figure}[!htb]
\begin{subfigure}[t]{.49\textwidth}
\includegraphics[width=\textwidth]{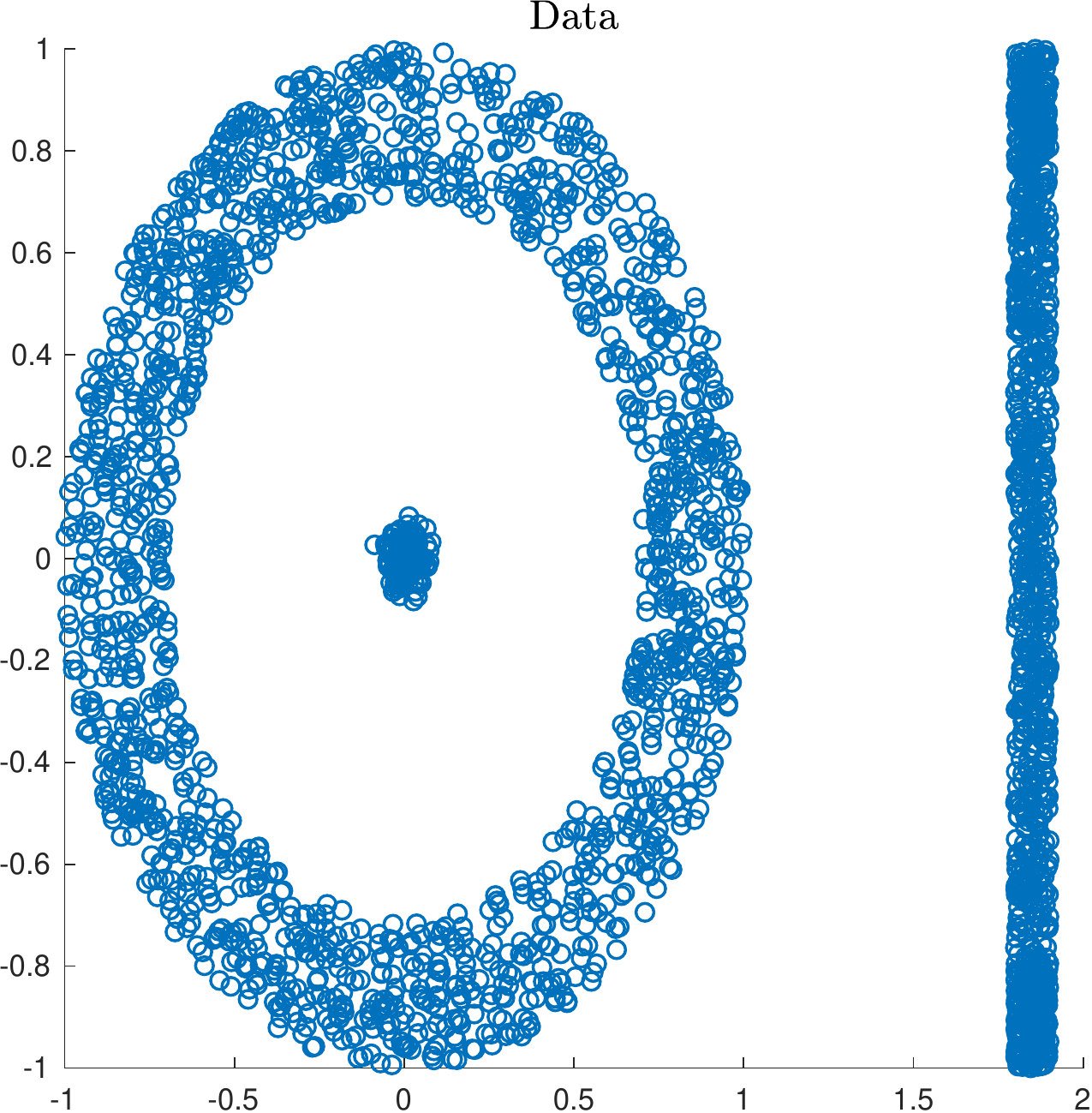}
\subcaption{Geometric data}
\end{subfigure}
\begin{subfigure}[t]{.49\textwidth}
\includegraphics[width=\textwidth]{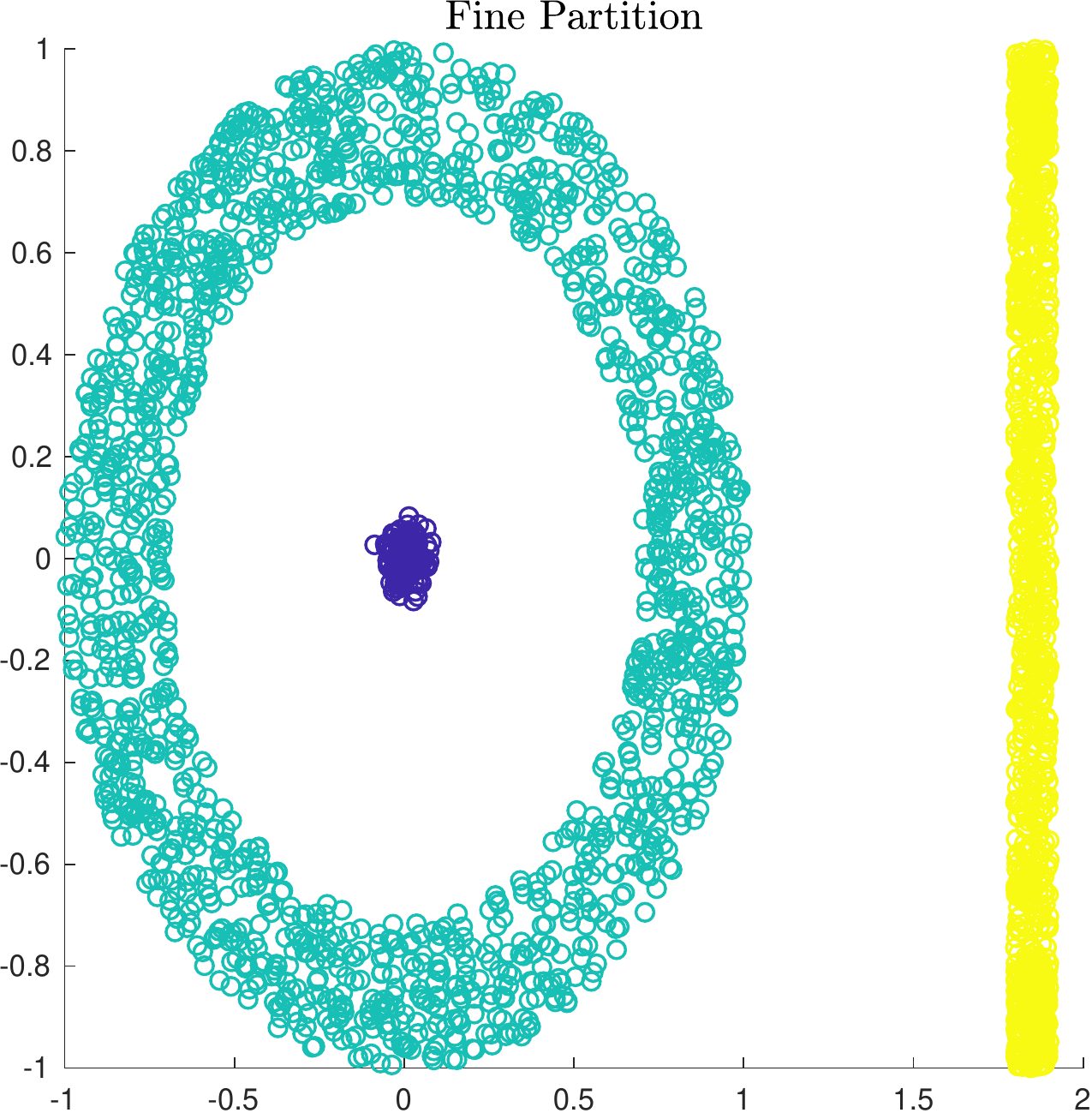}
\subcaption{Fine scale partition}
\end{subfigure}
\begin{subfigure}[t]{.49\textwidth}
\includegraphics[width=\textwidth]{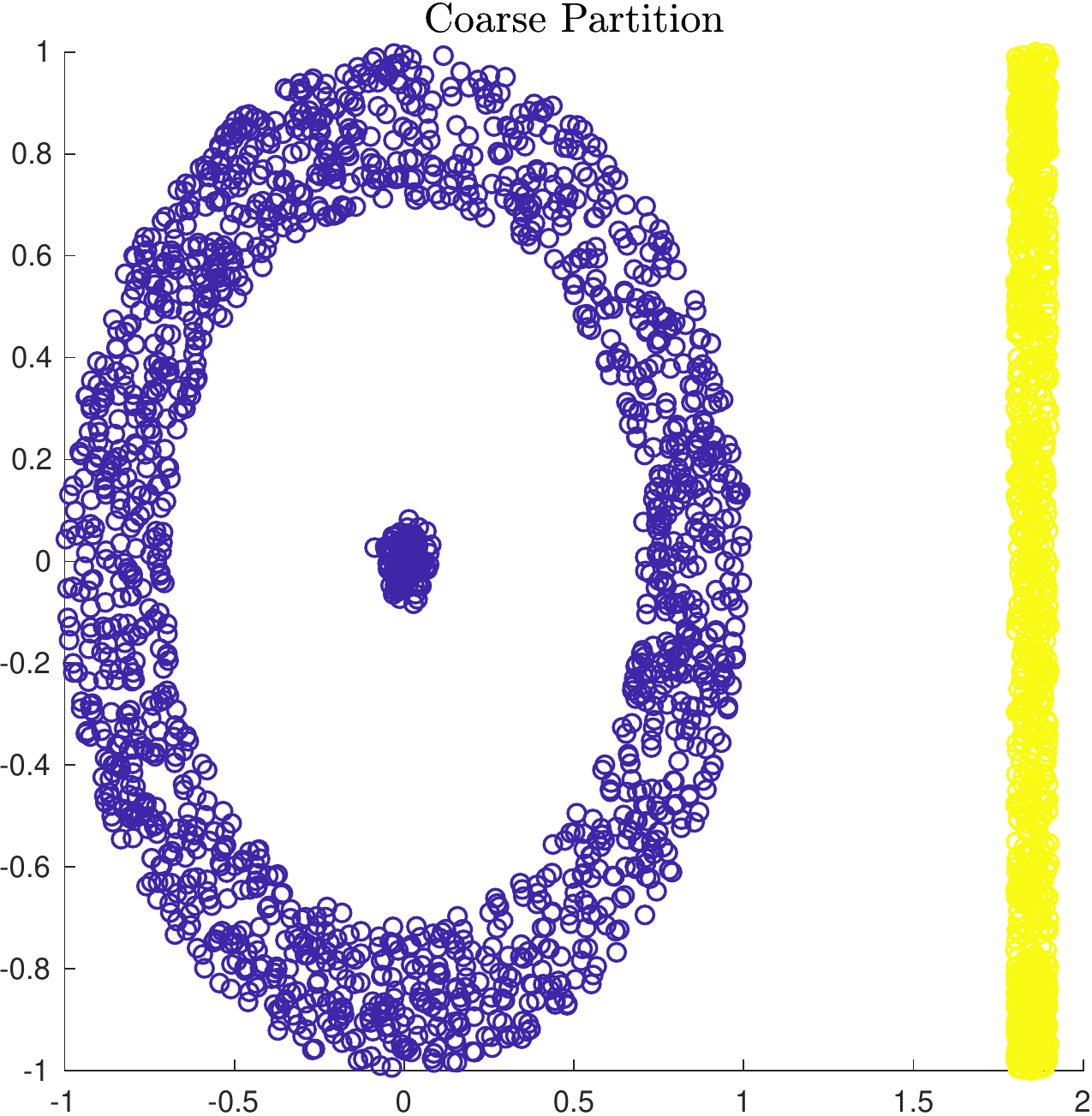}
\subcaption{Coarse scale partition}
\end{subfigure}
\begin{subfigure}[t]{.49\textwidth}
\includegraphics[width=\textwidth]{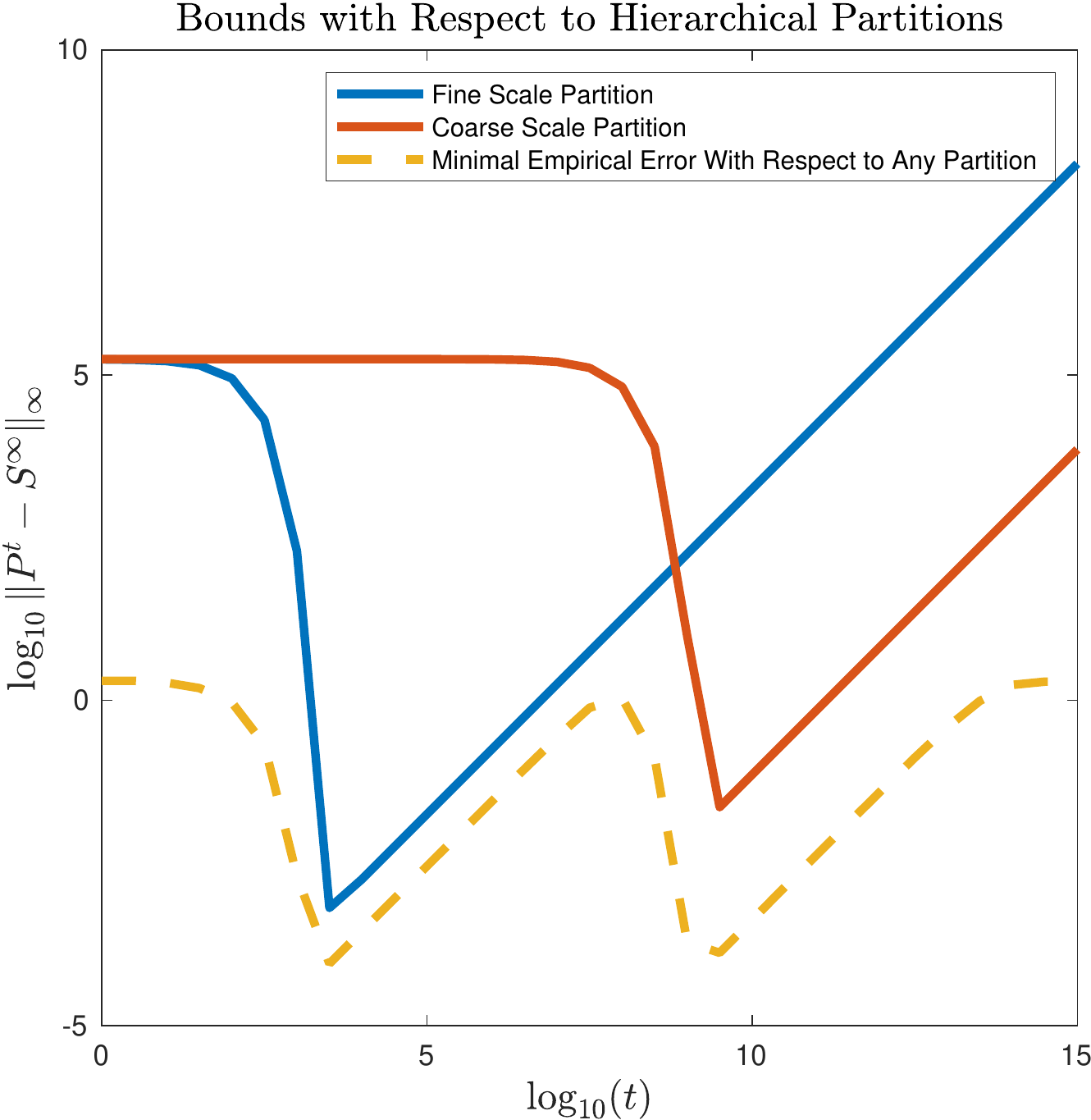}
\subcaption{Estimation of $\|P^{t}-S^{\infty}\|_{\infty}$}
\end{subfigure}
\caption{\emph{(a):}  Data in $\mathbb{R}^{2}$ showing geometric structure.  The transition matrix $P$ was constructed with the Gaussian kernel with $\sigma=.16$.  \emph{(b), (c)}:  We consider two natural clusterings in the data: a fine clustering consisting of three clusters, with each of the ring, the Gaussian, and the bar as their own cluster; and a coarse clustering consisting of two clusters with the ring and the Gaussian together as one cluster and the bar as its own cluster.  \emph{(d)}:  The estimates on $\|P^{t}-S^{\infty}\|_{\infty}$ with respect to the partition with three clusters, and with respect to the coarser, two-scale clustering, are shown.  We see that for increasing $t$, increasingly coarser partitions fit the data.  Moreover, if we minimize the errors induced by these partitions (shown in the dotted line), we see that these mesoscopic equilibria characterize essentially the full multitemporal behavior of $P$.}
\label{fig:Multitemporality_Nonlinear}		
\end{figure}

\subsection{Large Sample Limits}
\label{subsec:LargeSampleLimits}

The results of Section \ref{subsec:MultitemporalTheory} are for finite datasets $X=\{x_{i}\}_{i=1}^{n}\subset\mathbb{R}^{D}$, from which a discrete Markov diffusion matrix $P$ is constructed.  It is natural to ask about the large sample limit $n\rightarrow\infty$.  When the data $X$ is sampled from a manifold $\mathcal{M}$ with density function $p(x)>0$, it is known that the matrix $P$ and corresponding graph Laplacian converge to a continuum object under certain scaling regimes.  In particular, the governing parameters of the multitemporal theory, namely $\delta_{r}, \lambda_{r}^{*}, \kappa_{r}$, all have continuum analogues. 

It is known \cite{VonLuxburg2014_Hitting} that the related notion of commute distance $\mathscr{C}(x_{i},x_{j})=\langle (e_{i}-e_{j}),\Lsym^{\dagger}(e_{i}-e_{j})\rangle$ converges under this data model to a degenerate limit as $n\rightarrow\infty$, despite the fact that the pseudoinverse of $\Lsym$ converges to a well-defined limit.  Indeed, $\mathscr{C}(x_{i},x_{j})$ converges, after an appropriate scaling, to $|\frac{1}{D_{i}}-\frac{1}{D_{j}}|$, where $D_{i}, D_{j}$ are the degrees of $x_{i}, x_{j}$, respectively.  As such, commute distances become uninteresting in the asymptotic limit $n\rightarrow\infty$ under these random graph models.  It is thus natural to consider whether DSD becomes similarly degenerate as $n\rightarrow\infty$.  

It is precisely because the vectors which determine DSD become identical upon convergence of $P$ to stationarity, that the DSD formulation avoids this degeneracy: all the action of the DSD is on the early portion of the walk before it reaches stationarity.  Moreover, the analysis presented in Section \ref{subsec:MultitemporalTheory} is sufficiently general to apply beyond the types of random graph models for which commute distances are known to be degenerate.  Indeed, the conditions necessary for Theorem \ref{thm:Main} to be illuminating require the largest eigenvalues of $P$ to be close to 1, in order to have a sufficient time scale at which mixing within clusters has occurred, but between cluster mixing has not.  In the asymptotic analysis of \cite{VonLuxburg2014_Hitting}, the graphs under consideration all have relatively large spectral gap $1-\lambda_{2}(P)$.  In particular, the kinds of graphs for which commute distances degenerate are different from those which we consider.  In this sense, the results of Section  \ref{subsec:MultitemporalTheory} are most illuminating for data sampled from a mixture of manifolds (modeling clusters), and a background noise region with much lower density (modeling transition regions between the clusters) \cite{Little2020_Path}.  Developing a large sample limit theory for such a model of clustered data is the subject of ongoing research.  

\section{Spectral Analysis of Diffusion State Distances}
\label{sec:SpectralAnalysis}

The analysis of DSD in terms of multitemporal mesoscopic equilibria presented in Section \ref{sec:MultitemporalAnalysis} manifests itself further when considering the associated spectral decomposition.  Though DSD can be defined in any $\ell^{p}$ space, $p\ge 1$ (and the results of Section \ref{sec:MultitemporalAnalysis} focus on the case $p=1$), Section \ref{sec:SpectralAnalysis} focuses on the case $p=2$, in order to exploit the Hilbert space structure of $\ell^{2}$.  As discussed in Section \ref{sec:SpectralAnalysis}, the factor $\gamma(t)$ quantifies the trade-off between $p=1,2$.

Recall that the DSD with respect to the weight $w$ may be written as \[\DSD (x_{i},x_{j})=\left\|\sum_{t=0}^{\infty}p_{t}(x_{i},\cdot)-p_{t}(x_{j},\cdot)\right\|_{\ell^{2}(w)}.\]  As shown in Section \ref{sec:Background}, $P$ is diagonally conjugate to the symmetric matrix $D^{-\frac{1}{2}}WD^{-\frac{1}{2}}$, and a computation \cite{Coifman2006_Diffusion} reveals that $P$ may be written as a sum of outer products \[P=\sum_{\ell=1}^{n}\lambda_{\ell}\psi_{\ell}\varphi_{\ell}^{\top},\] where $\psi_{\ell}(x)=\phi_{\ell}(x)/\sqrt{\pi(x)}, \varphi_{\ell}(y)=\phi_{\ell}(y)\sqrt{\pi(y)},$ for $\pi$ the stationary distribution of $P$ and $\{(\phi_{\ell},\lambda_{\ell})\}_{\ell=1}^{n}$ the eigenvectors and eigenvalues of the symmetric matrix $D^{-\frac{1}{2}}WD^{-\frac{1}{2}}=D^{\frac{1}{2}}PD^{-\frac{1}{2}}$, understood as column vectors.  Equivalently, $p_{t}(x,y)=\sum_{\ell=1}^{n}\lambda_{\ell}^{t}\psi_{\ell}(x)\varphi_{\ell}(y)$ .  In particular, $\{\phi_{\ell}\}_{\ell=1}^{n}$ is an orthonormal basis for $\ell^{2}$ and $\{\varphi_{\ell}\}_{\ell=1}^{n}$ is an orthonormal basis for $\ell^{2}(1/\pi)$.  With this decomposition, the DSD may be re-formulated as follows.

\begin{thm}\label{thm:SpectralAnalysisDSD}The diffusion state distance with respect to the weight $w=1/\pi$ admits the decomposition \[\DSD (x_{i},x_{j})=\displaystyle\left\|(e_{i}-e_{j})\sum_{t=0}^{\infty}P^{t}\right\|_{\ell^{2}(1/\pi)}=\sqrt{\sum_{\ell=1}^{n}\left(\frac{1}{1-\lambda_{\ell}}\right)^{2}(\psi_{\ell}(x_{i})-\psi_{\ell}(x_{j}))^{2}},\]where $\{(\lambda_{\ell},\psi_{\ell})\}_{\ell=1}^{n}$ are the eigenvalues and right eigenvectors of $P$.
\end{thm}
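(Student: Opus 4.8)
The plan is to insert the rank-one spectral decomposition of $P^{t}$ into the series defining the DSD, left-multiply by $e_{i}-e_{j}$ \emph{before} summing over $t$, evaluate the resulting geometric series mode-by-mode, and finish with Parseval's identity in $\ell^{2}(1/\pi)$.

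First I would record that the outer-product formula $P=\sum_{\ell=1}^{n}\lambda_{\ell}\psi_{\ell}\varphi_{\ell}^{\top}$ together with the biorthogonality $\varphi_{m}^{\top}\psi_{\ell}=\delta_{\ell m}$ (which follows from $\varphi_{\ell}=\sqrt{\pi}\phi_{\ell}$, $\psi_{\ell}=\phi_{\ell}/\sqrt{\pi}$ and the orthonormality of $\{\phi_{\ell}\}$) give $P^{t}=\sum_{\ell=1}^{n}\lambda_{\ell}^{t}\psi_{\ell}\varphi_{\ell}^{\top}$, so that $(e_{i}-e_{j})P^{t}=\sum_{\ell=1}^{n}\lambda_{\ell}^{t}\bigl(\psi_{\ell}(x_{i})-\psi_{\ell}(x_{j})\bigr)\varphi_{\ell}^{\top}$.

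The crucial observation is that the $\ell=1$ mode, which carries the non-summable factor $\lambda_{1}^{t}=1$, drops out. Indeed, since the row sums of $W$ are the diagonal of $D$ one has $D^{-1/2}WD^{-1/2}(D^{1/2}\1)=D^{1/2}\1$, and $\|\sqrt{\pi}\|_{2}=1$, so the top eigenvector of the symmetric matrix is $\phi_{1}=\sqrt{\pi}$ and hence $\psi_{1}=\phi_{1}/\sqrt{\pi}\equiv\1$ is constant. Therefore $\psi_{1}(x_{i})-\psi_{1}(x_{j})=0$, the $\ell=1$ term vanishes for every $t$, and for $\ell\ge2$ the bound $|\lambda_{\ell}|<1$ lets me sum the geometric series $\sum_{t=0}^{\infty}\lambda_{\ell}^{t}=\tfrac{1}{1-\lambda_{\ell}}$. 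Interchanging the (finite) sum over $\ell$ with the sum over $t$ is then legitimate and yields $\sum_{t=0}^{\infty}(e_{i}-e_{j})P^{t}=\sum_{\ell=2}^{n}\tfrac{1}{1-\lambda_{\ell}}\bigl(\psi_{\ell}(x_{i})-\psi_{\ell}(x_{j})\bigr)\varphi_{\ell}^{\top}$; this matches the convergent quantity $(e_{i}-e_{j})(\Lrw+\1\pi)^{-1}$ from (\ref{eqn:NeumannSeries}), giving an independent check on convergence.

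Finally, because $\{\varphi_{\ell}\}_{\ell=1}^{n}$ is an orthonormal basis for $\ell^{2}(1/\pi)$, the squared $\ell^{2}(1/\pi)$ norm of this linear combination is just the sum of the squared coefficients, i.e.\ $\sum_{\ell=2}^{n}\bigl(\tfrac{1}{1-\lambda_{\ell}}\bigr)^{2}\bigl(\psi_{\ell}(x_{i})-\psi_{\ell}(x_{j})\bigr)^{2}$; adjoining the identically-zero $\ell=1$ summand (with the convention that $0$ times the singular factor $(1-\lambda_{1})^{-2}$ is $0$) rewrites this as the claimed $\sum_{\ell=1}^{n}$, completing the proof. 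The main subtlety to handle carefully is precisely this $\ell=1$ mode: the full operator series $\sum_{t}P^{t}$ diverges, so one must apply $e_{i}-e_{j}$ first and verify that the constant eigenvector $\psi_{1}$ is annihilated before any geometric summation; everything else is bookkeeping with Parseval.
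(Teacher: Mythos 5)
Your proposal is correct and follows essentially the same route as the paper's own proof: expand $P^{t}$ in the biorthogonal system $\{(\lambda_{\ell},\psi_{\ell},\varphi_{\ell})\}$, observe that the $\ell=1$ mode vanishes because $\psi_{1}=\1$ is constant, interchange the sums over $t$ and $\ell$, evaluate the geometric series $\sum_{t=0}^{\infty}\lambda_{\ell}^{t}=\frac{1}{1-\lambda_{\ell}}$ for $\ell\ge 2$, and conclude by Parseval in $\ell^{2}(1/\pi)$. Your added justifications (why $\phi_{1}=\sqrt{\pi}$, the legitimacy of the interchange, and the consistency check against the Neumann series $(\Lrw+\1\pi)^{-1}$) only make explicit what the paper leaves implicit.
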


\begin{proof}  Decomposing $p_{t}$, noting that $\psi_{1}=\1$, and summing over $t$ as above yields

\begin{align*}\DSD (x_{i},x_{j})=&\left\|\sum_{t=0}^{\infty}p_{t}(x_{i},\cdot)-p_{t}(x_{j},\cdot)\right\|_{\ell^{2}(1/\pi)}\\
=&\left\|\sum_{t=0}^{\infty}\sum_{\ell=1}^{n}\lambda_{\ell}^{t}\psi_{\ell}(x_{i})\varphi_{\ell}(\cdot)-\lambda_{\ell}^{t}\psi_{\ell}(x_{j})\varphi_{\ell}(\cdot)\right\|_{\ell^{2}(1/\pi)}\\
=&\left\|\sum_{t=0}^{\infty}\sum_{\ell=2}^{n}\lambda_{\ell}^{t}(\psi_{\ell}(x_{i})-\psi_{\ell}(x_{j}))\varphi_{\ell}(\cdot)\right\|_{\ell^{2}(1/\pi)}\\
=&\left\|\sum_{\ell=2}^{n}\sum_{t=0}^{\infty}\lambda_{\ell}^{t}(\psi_{\ell}(x_{i})-\psi_{\ell}(x_{j}))\varphi_{\ell}(\cdot)\right\|_{\ell^{2}(1/\pi)}\\
=&\left\| \sum_{\ell=2}^{n}\frac{1}{1-\lambda_{\ell}}(\psi_{\ell}(x_{i})-\psi_{\ell}(x_{j}))\varphi_{\ell}(\cdot)\right\|_{\ell^{2}(1/\pi)}
\end{align*}Recalling that $\{\varphi_{\ell}\}_{\ell=1}^{n}$ is an orthonormal basis for $\ell^{2}(1/\pi)$, the result follows from  Parseval's theorem.
\end{proof}


We remark that the spectral decomposition of $P$ is related  \cite{Boehnlein2014_Computing, Beveridge2016_Hitting} to the Green's function $G$ for the inhomogeneous Laplace's equation $\Lrw f=g$.  Indeed, let the $(i,j)^{th}$ entry of $G\in\mathbb{R}^{n\times n}$ be \[G_{ij}=\sum_{\ell=2}^{n}\frac{1}{1-\lambda_{\ell}}\psi_{\ell}(x_{i})\varphi_{\ell}(x_{j}).\]  Then \begin{align*}G\Lrw=&\sum_{\ell=2}^{n}\left(\frac{1}{1-\lambda_{\ell}}\psi_{\ell}\varphi_{\ell}^{\top}\right)\left(\sum_{j=1}^{n}(1-\lambda_{j})\psi_{j}\varphi_{j}^{\top}\right)\\=&\sum_{\ell=2}^{n}\psi_{\ell}\varphi_{\ell}^{\top}\\=&I-\pi\1.\end{align*}

\subsection{Diffusion State Distance and Dimension Reduction} \label{subsec:DSD-DR}

Theorem \ref{thm:SpectralAnalysisDSD} implies that the DSD is in fact the Euclidean distance in a new coordinate basis, given by the transformation \[x\mapsto \left(\frac{1}{1-\lambda_{2}}\psi_{2}(x),\frac{1}{1-\lambda_{3}}\psi_{3}(x),\cdots,\frac{1}{1-\lambda_{n}}\psi_{n}(x)\right).\]  This representation naturally lends itself to dimension reduction in the following sense.  Suppose that for some $M$, $1/(1-\lambda_{M})\gg1/(1-\lambda_{M+1}).$  Then the truncated representation \begin{align}\label{eqn:DSD_Embedding}x\mapsto \left(\frac{1}{1-\lambda_{2}}\psi_{2}(x),\frac{1}{1-\lambda_{3}}\psi_{3}(x),\cdots,\frac{1}{1-\lambda_{M}}\psi_{M}(x)\right)\end{align} is faithful in the sense that the discarded coordinates are comparatively insignificant.  Moreover, depending on the number of data points, the spectral decomposition is unreliably noisy after some $M\ll n$ \cite{Trillos2019_Error}, in which case the truncation of the coordinate change has the added benefit of denoising.  This approach to computing the DSD is an alternative to randomized approaches \cite{Lin2018_Computing}.  

To understand the condition $1/(1-\lambda_{M})\gg1/(1-\lambda_{M+1})$, let $\lambda_{\ell}=1-\mu_{\ell}$ for $0=\mu_{1}\le \mu_{2}\le\cdots\le \mu_{n}$.  Then the condition becomes $\frac{1}{\mu_{M}}\gg\frac{1}{\mu_{M+1}}$, which is a measure of a gap between in the near-reducibility of certain subsets of the underlying Markov transition matrix \cite{Maggioni2019_Learning}.  Indeed, suppose that $\lambda_{M}=1-e^{-q_{1}}, \lambda_{M+1}=1-e^{-q_{2}}$ for $q_{1}\ge q_{2}\ge0$.  If $q_{1}=q_{2}$, then the corresponding eigenvectors $\psi_{M}, \psi_{M+1}$ count equally in the DSD, since the weights $1/(1-\lambda_{\ell}), \ell=M,M+1$ are equal.  On the other hand, if $q_{1}\gg q_{2}$, then there is a large gap between $\mu_{M}$ and $\mu_{M+1}$:  $\mu_{M}/\mu_{M+1}=e^{-q_{1}+q_{2}}\ll 1$; in this case, the DSD may be truncated after the $M^{th}$ eigenvector while preserving much of the information present in the full DSD.

\subsection{Relationship Between Diffusion State Distances and Inverse Laplacians}

The DSD is closely related to the inverse of a regularized Laplacian \cite{Cao2013_Going, Boehnlein2014_Computing}.  Indeed, \[\DSD (x_{i},x_{j})=\|(e_{i}-e_{j})(I-P+\1\pi)^{-1}\|_{\ell^{2}(w)}\] as shown in Section \ref{subsec:DSD_Background}.  It is enlightening to consider the Neumann series expansion, noting that $\lambda_{1}=1, \psi_{1}=\1, \varphi_{1}=\pi$:
\begin{align*}(I-P+\1\pi)^{-1}=&\left(I-(P-\1\pi\right))^{-1}\\
=&\sum_{t=0}^{\infty}(P-\1\pi)^{t}\\
=&\sum_{t=0}^{\infty}(P^{t}-\1\pi)\\
=&\sum_{t=0}^{\infty}\left(\left(\sum_{\ell=1}^{n}\lambda_{\ell}^{t}\psi_{\ell}\varphi_{\ell}\right)-\1\pi\right)\\
=&\sum_{t=0}^{\infty}\left(\sum_{\ell=2}^{n}\lambda_{\ell}^{t}\psi_{\ell}\varphi_{\ell}\right)\\
=&\sum_{\ell=2}^{n}\left(\sum_{t=0}^{\infty}\lambda_{\ell}^{t}\right)\psi_{\ell}\varphi_{\ell}\\
=&\sum_{\ell=2}^{n}\frac{1}{1-\lambda_{\ell}}\psi_{\ell}\varphi_{\ell}.\end{align*}

Hence, 

\begin{align*}\DSD (x_{i},x_{j})=&\|(e_{i}-e_{j})(I-P+\1\pi)^{-1}\|_{\ell^{2}(w)}\\
=& \left\|\sum_{\ell=2}^{n}\frac{1}{1-\lambda_{\ell}}(\psi_{\ell}(x_{i})-\psi_{\ell}(x_{j}))\varphi_{\ell}(\cdot)\right\|_{\ell^{2}(w)}.
\end{align*}

Noting that $\{\varphi_{\ell}\}_{\ell=1}^{n}$ is an orthonormal basis for $\ell^{2}(1/\pi)$, we see that this expansion is of particular use when $w=1/\pi$.  We remark that the original formulation of DSD used the weight $w=\1$, i.e. constant weight.  If the degrees of the underlying graph generating $P$ are nearly constant (i.e. if $P$ is nearly bistochastic), then the difference between this original formulation and the DSD with weight $1/\pi$ is small in a multiplicative sense.  The more disparate the degree structure, the more the $1/\pi$ factor in the proposed DSD will correct for this degree imbalance, and the original formulation and $1/\pi$-weighted one will differ.  

\subsection{Computational Considerations}
\label{subsec:CC}
Based on the discussion in the previous sections, we can improve the computational complexity of computing DSD approximately in practice.  Indeed, a naive direct computation of all pairwise DSD when the underlying graph on $n$ nodes is dense is $O(n^{3})$, which is unacceptably inefficient for large datasets.

Recall that the right eigenvectors $\{\psi_{\ell}\}_{\ell=1}^{n}$ of P are, after scaling by $\sqrt{\pi}$, eigenvectors of the symmetric normalized graph Laplacian $\Lsym = I - D^{-\frac{1}{2}}W D^{-\frac{1}{2}}$ with corresponding eigenvalues $\mu_{\ell} = 1 - \lambda_{\ell}$.  We can thus compute the DSD using $\Lsym$ according to Algorithm \ref{alg:DSD}.

\

\begin{algorithm}[htp!]
	\caption{Compute DSD} \label{alg:DSD}
	\begin{algorithmic}[1]
		\STATE Compute the eigenpairs $\{(\mu_\ell, \phi_{\ell})\}_{\ell=1}^{n}$ of the symmetric normalized graph Laplacian $\Lsym$.
		\STATE Compute $Y = \Sigma \Phi D^{-\frac{1}{2}}$, where $\Sigma = \operatorname(\mu_2^{-1}, \mu_3^{-1}, \cdots, \mu_n^{-1})$ and $\Phi = (\phi_2, \phi_3, \cdots, \phi_n)$.
		\STATE Compute $\DSD (x_i, x_j) = \| Y(:,i) - Y(:,j) \|_{2} $.
	\end{algorithmic}
\end{algorithm}

\

Note that in Algorithm~\ref{alg:DSD}, we first compute $Y$, which can be considered as the coordinates suggested in Section~\ref{subsec:DSD-DR}. This $Y$ can also be saved for later usage in machine learning tasks.  Moreover, we can further apply the dimension reduction technique discussed in Section~\ref{subsec:DSD-DR}, which basically implies that we only compute the first $M$ eigenpairs and then form a truncated version of $Y$.  This is summarized in Algorithm~\ref{alg:approx-DSD}.

\

\begin{algorithm}[htp!]
	\caption{Compute approximate DSD} \label{alg:approx-DSD}
	\begin{algorithmic}[1]
		\STATE Compute first $M$ eigenpairs $\{(\mu_\ell, \phi_{\ell})\}_{\ell=1}^{M}$, of the symmetric normalized graph Laplacian $\Lsym$.\STATE Compute $\widetilde{Y} = \Sigma_M \Phi_M D^{-\frac{1}{2}}$, where $\Sigma_M = \operatorname(\mu_2^{-1}, \mu_3^{-1}, \cdots, \mu_M^{-1})$ and $\Phi_M = (\phi_2, \phi_3, \cdots, \phi_M)$.
		\STATE Compute $\widetilde{\DSD}(x_i, x_j) = \| \widetilde{Y}(:,i) - \widetilde{Y}(:,j) \|_{2} $.
	\end{algorithmic}
\end{algorithm}

\

The main computational cost in both algorithms is the first step, i.e., computing the eigenpairs of the symmetric normalized graph Laplaican $\Lsym$.  To do this, we use the preconditioned Krylov-based eigensolver, such as the implementation in ARPACK \cite{lehoucq1998arpack}.  The performance of such a Krylov-based eigensolver depends on the choices of the preconditioner.  Since we are working with the normalized graph Laplacian, the algebraic multigrid method (AMG) is used as a preconditioner \cite{xu2017algebraic}.  More precisely, we use the aggregation-based AMG method, which is a suitable choice for solving graph Laplacian problems as shown in~\cite{livne2012lean,brannick2013algebraic,Lin2018_Computing,hu2019adaptive} due to its (nearly) optimal computational complexity in practice.

\section{Computational and Numerical Experiments}
\label{sec:NumericalExperiments}

In order to validate the results of Section \ref{sec:MultitemporalAnalysis} and \ref{sec:SpectralAnalysis}, we perform experiments on synthetic and real datasets.  In Section \ref{subsec:SyntheticDatasets}, experiments on synthetic data demonstrate that DSD captures multiscale cluster structure, unlike traditional diffusion or Laplacian embeddings, and that the spectral decomposition of DSD allows for fast computations that also denoise.  In particular, we shall compare the DSD embedding (\ref{eqn:DSD_Embedding}) with the classical Laplacian eigenmaps (LE) \cite{Ng2002_Spectral, Belkin2003_Laplacian} embedding $x\mapsto (\psi_{2}(x),\dots,\psi_{M}(x))$.  Note that it is not necessary to consider $\psi_{1}$, as it is constant by construction.  In Section \ref{subsec:BiologicalNetworks}, we consider the DSD for the analysis of real biological networks.

\subsection{Synthetic Datasets}
\label{subsec:SyntheticDatasets}

We consider three synthetic datasets that characterize clustered data.  All examples consist of 3 communities each of size 100, 2 of which are closer together than either is to a third.  For all examples, we describe how to construct $W$.  The Markov diffusion matrix $P$ is then constructed by normalizing $W$ to be row-stochastic: $P=D^{-1}W$.  The datasets considered are:\\\begin{enumerate}[(a)]

\item A \emph{hierarchical stochastic block model (HSBM)} \cite{Peixoto2014_Hierarchical, Lyzinski2017_Community} $P$ with different levels of affinity between the different communities.  The general \emph{stochastic block model (SBM)} \cite{Abbe2017_Community} considers random graphs on $n$ nodes, partitioned into communities $\{C_{k}\}_{k=1}^{\numclust}$.  For $k,k'\in\{1,\dots,\numclust\}$, let $p_{kk'}\in [0,1]$.  We generate an unweighted random graph by sampling edges in an i.i.d. fashion by putting an edge between $x\in C_{k}, y\in C_{k'}$ with probability $p_{kk'}$.  The resulting weight matrix $\widetilde{W}$ is symmetrized by considering $W_{ij}=\max\{\widetilde{W}_{ij},\widetilde{W}_{ji}\}$.  In our case, $\numclust=3$ and we consider $p_{11}=p_{22}=p_{33}=.5, p_{12}=p_{21}=p_{13}=p_{31}=.001, p_{23}=p_{32}=.01$.  This SBM is hierarchical in the sense that there is strong separation between the three communities $C_{1},C_{2},C_{3}$, but also between the two communities consisting of $C_{1}$ and $C_{2}\cup C_{3}$.\vspace{5 pt}  

\item A \emph{low-rank block} $P$.  We construct the matrix $P$ to be constant on each of three diagonal blocks, with weak connections between the blocks.  In order to illustrate how the DSD captures aggregate behavior across time scales, the connections between the second and third clusters are made slightly stronger than those between the first and the others.  That is, for the same choice of $\{p_{kk'}\}_{k,k'=1}^{3}$ as above, we set $W_{ij}=p_{k(i)k(j)}$, where $x_{i}\in C_{k(i)}$.  This can be interpreted as the expected value of the hierarchical stochastic block matrices described in example (a).\vspace{5 pt}  

\item A \emph{random geometric} $P$ generated by randomly sampling 100 times from each of three different Gaussians with identity covariance matrices and means $(0,0), (4,0),$ and $(2,6)$, respectively.  This yields a sample $\{x_{i}\}_{i=1}^{300}$.  We then set $W_{ij}=\exp(-\|x_{i}-x_{j}\|_{2}^{2})$.\vspace{5 pt}  

\end{enumerate}

\subsubsection{Embeddings for Multiscale Data}

Given the transition matrix $P$, one can construct a low-dimensional embedding using the eigenvectors of $P$, as in LE, or the weighted eigenvectors as in (\ref{eqn:DSD_Embedding}).  The impact of weighting by the eigenvalues can be significant---eigenvectors with eigenvalues close to 1 contribute more significantly than those far from 1.  It is natural to value such eigenvectors more, since an eigenvalue close to 1 indicates that $P$ is close to being reducible.  In that case, the corresponding eigenvector is expected to localize on the two nearly disconnected components of the underlying graph.  In all cases, we truncate the eigenvector expansion at $M=3$, corresponding to the observation that there are 3 clusters in a sense particular to the different methods of generating the data.

Figure \ref{fig:TransitionMatricesAndEmbeddings} shows the impact of the different embeddings for the three multiscale datasets considered.  The DSD embedding is more faithful to the multiscale hierarchy in the dataset; this structure is preserved when the embedding is weighted as in (\ref{eqn:DSD_Embedding}).  When the unweighted LE embedding is used, all the eigenvectors count equally---they all have $\ell^{2}$ norm 1---and the hierarchical structure captured in this case by the weights $(1-\lambda_{\ell})^{-1}$ is washed out.  Indeed, in the LE plots, it is hard to tell that two of the clusters are closer to each than either is to a third---all one can observe is three equally well-separated clusters.  When using DSD, however, the well-separated clusters are arranged in such a way that two clusters are closer together than either is to a third.  In this sense, the DSD are more informative than the LE.  

\begin{figure}[!htb]
\centering
\begin{subfigure}[b]{.26\textwidth}
\captionsetup{width=.95\linewidth}
		\includegraphics[width=\textwidth]{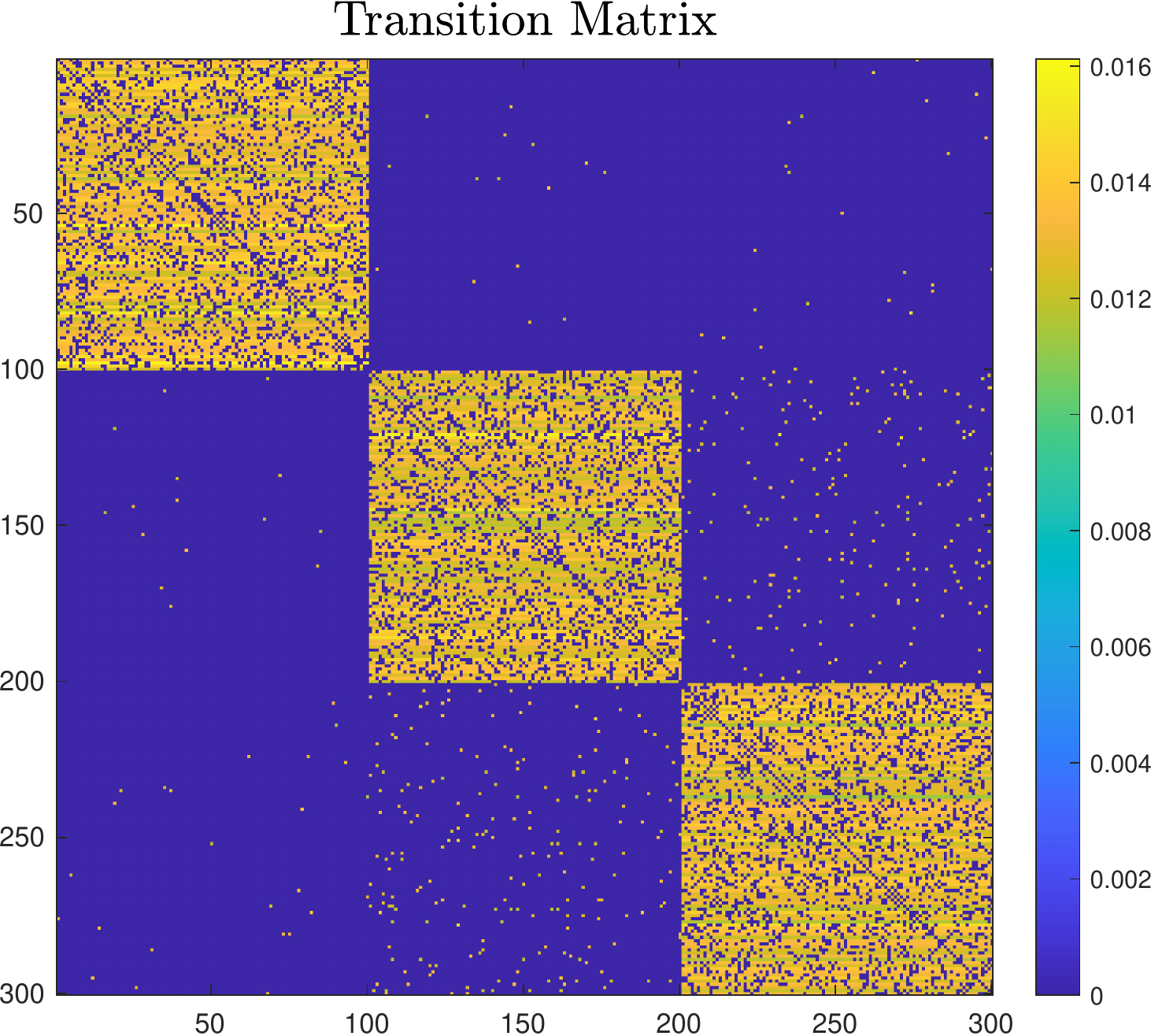}
		\includegraphics[width=\textwidth]{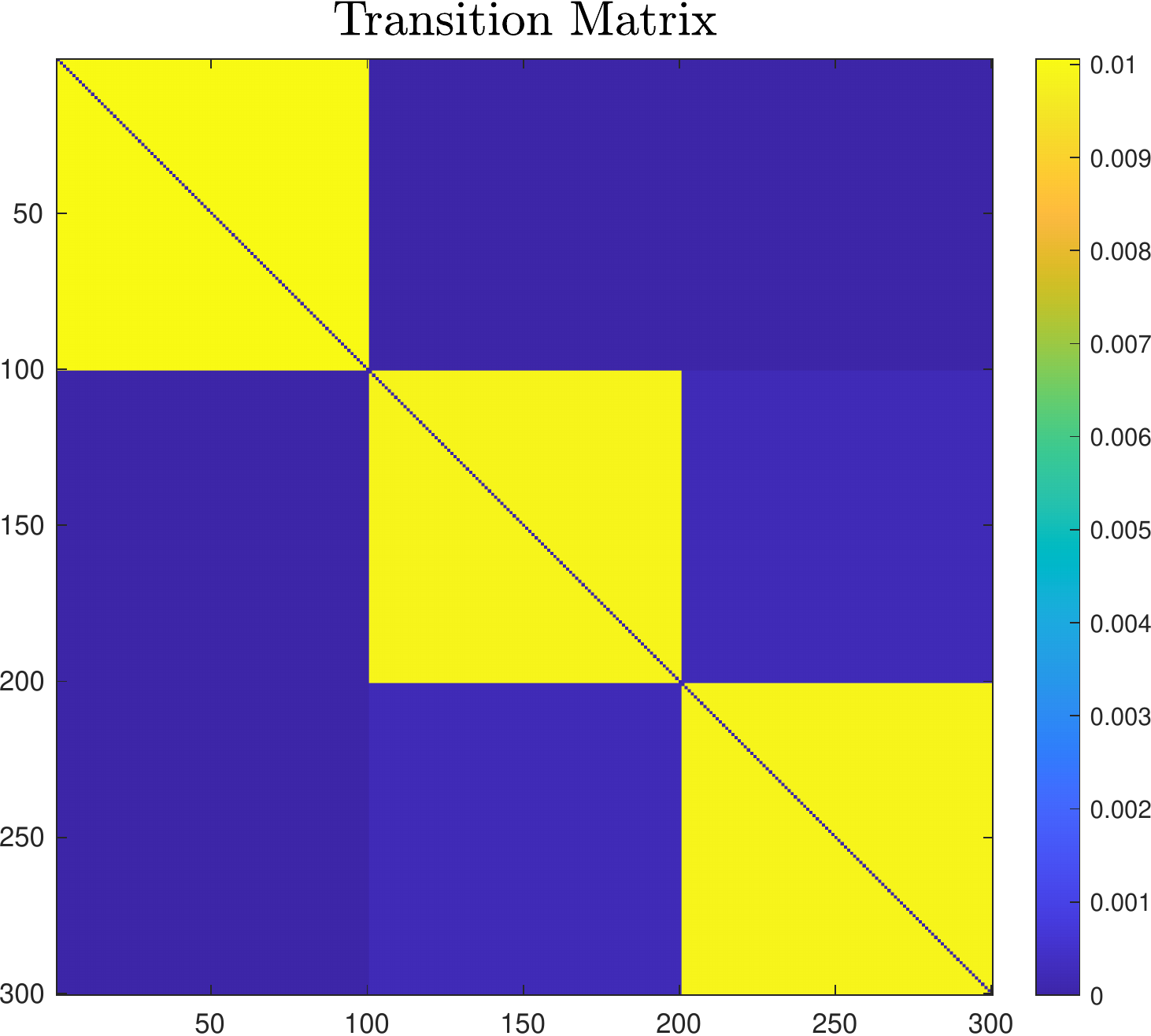}
		\includegraphics[width=\textwidth]{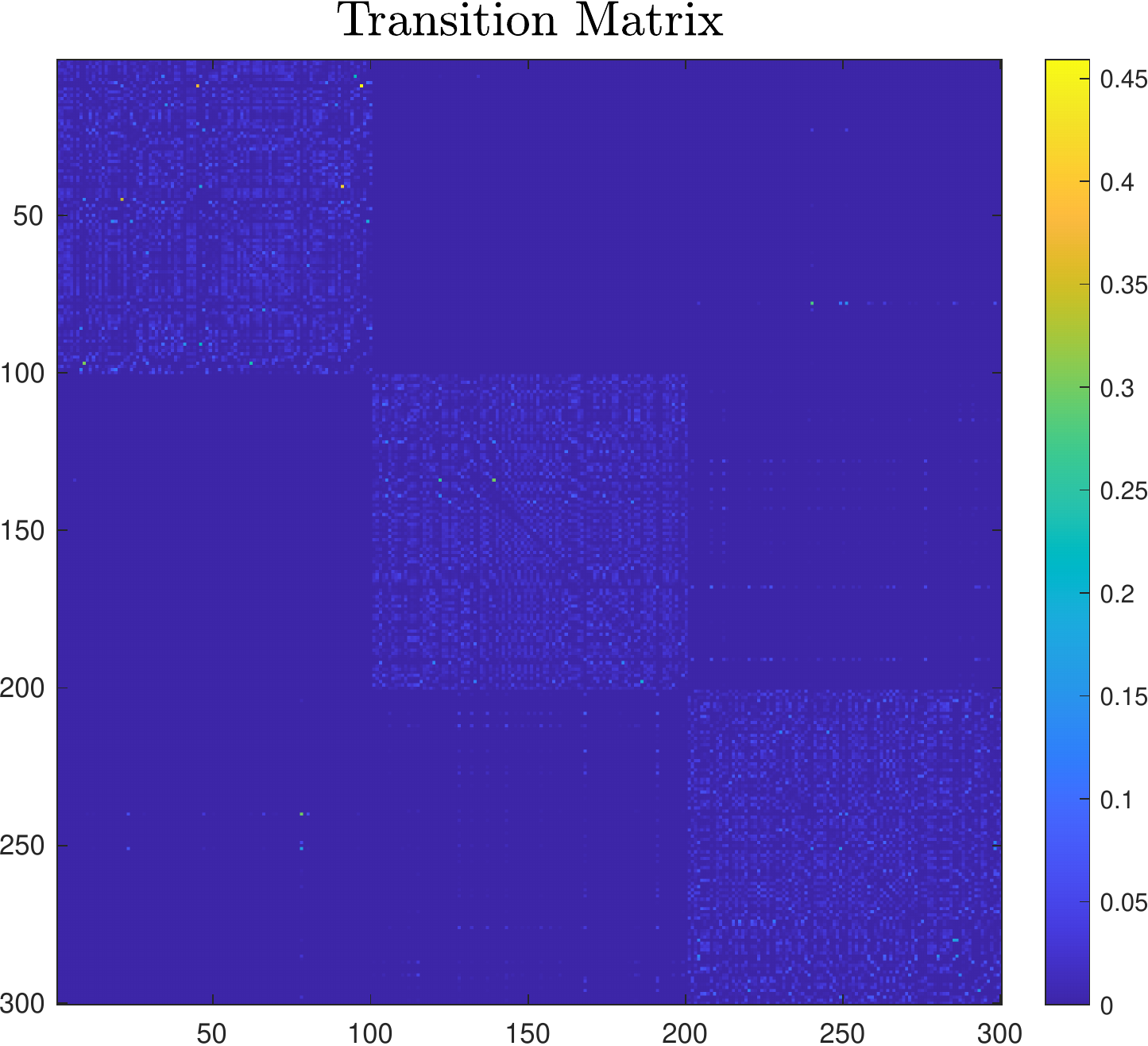}
\subcaption{Transition Matrix}
\end{subfigure}
\begin{subfigure}[b]{.23\textwidth}
\captionsetup{width=.95\linewidth}
	\includegraphics[width=\textwidth]{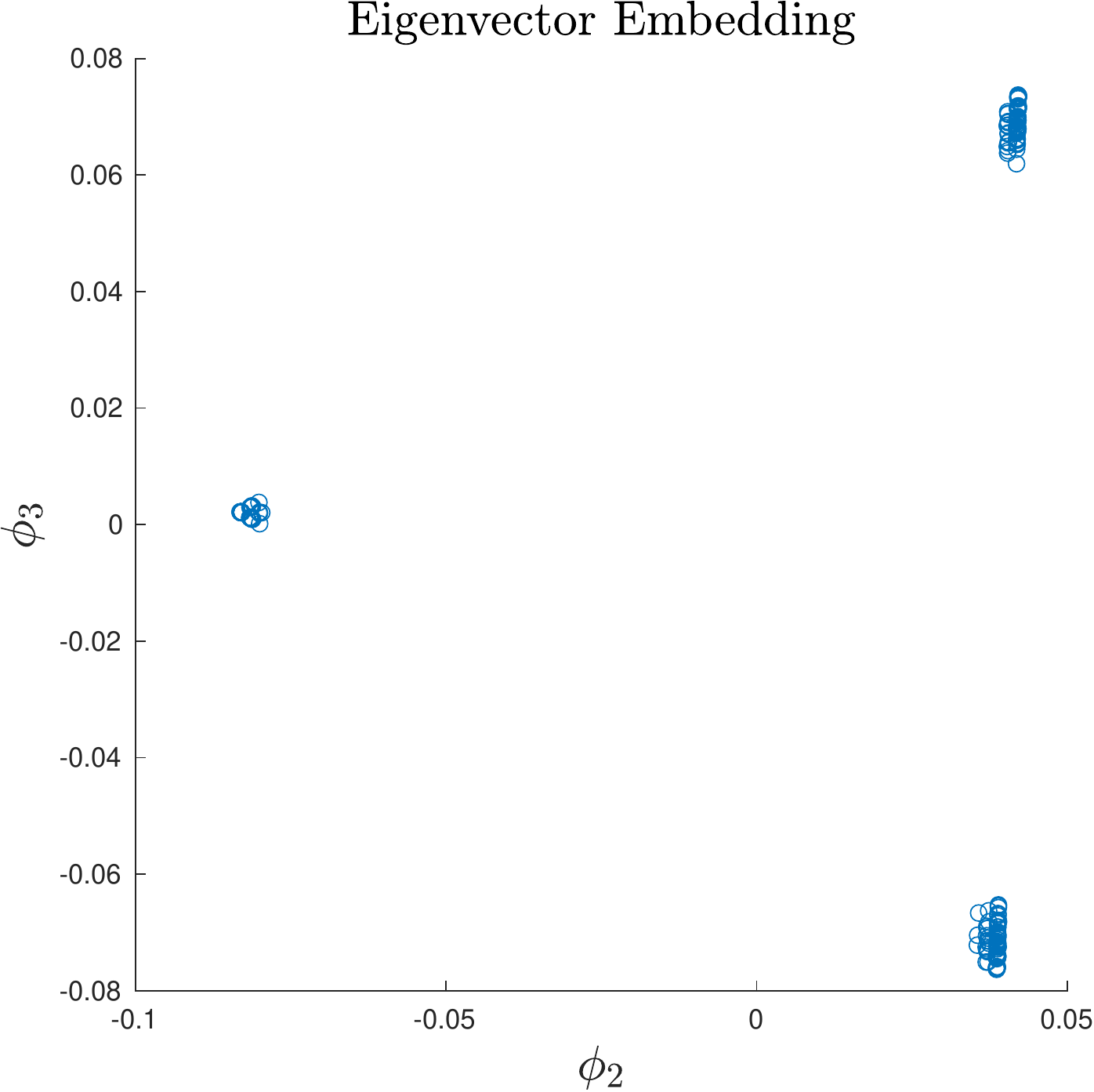}
	\includegraphics[width=\textwidth]{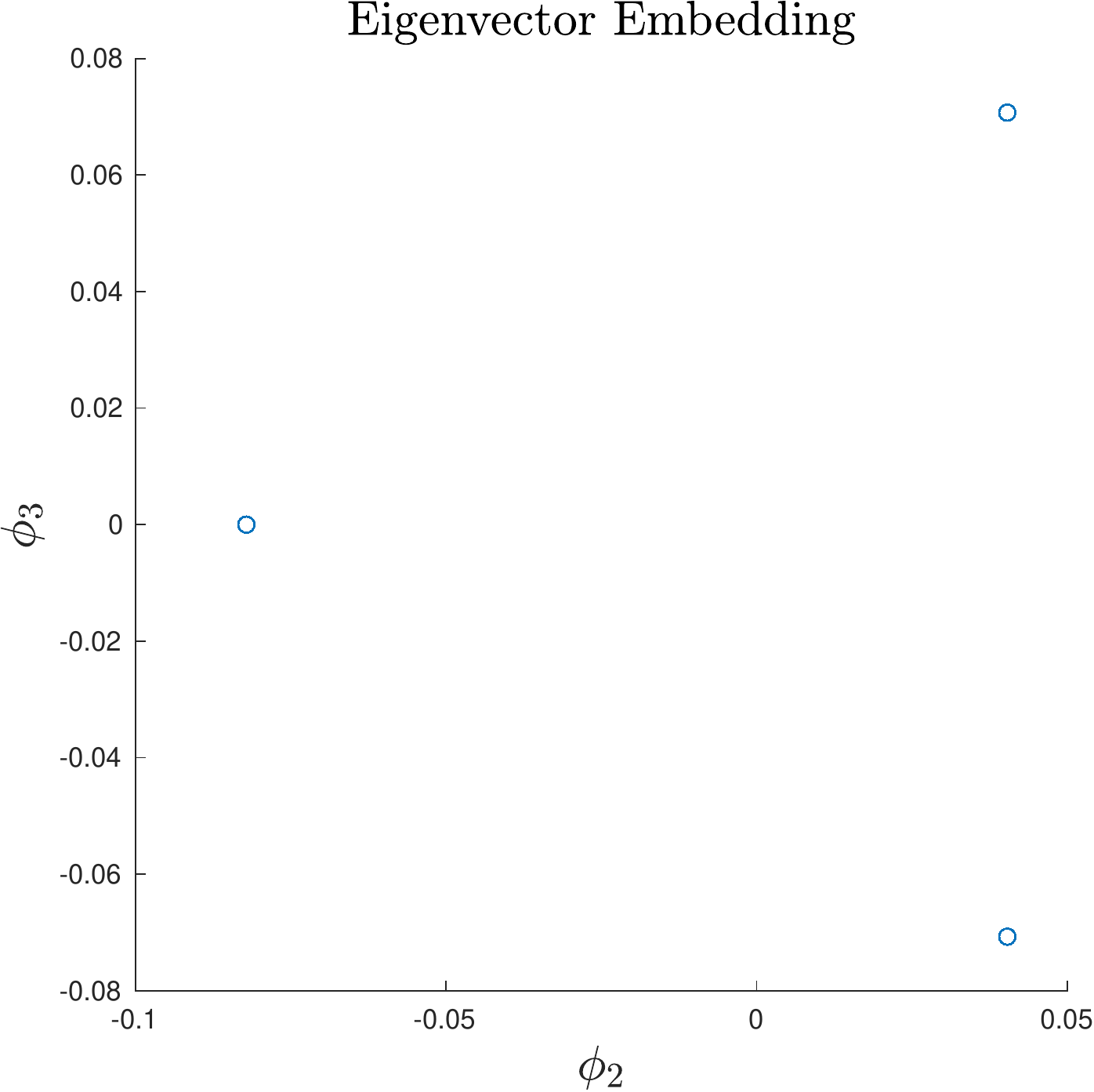}
	\includegraphics[width=\textwidth]{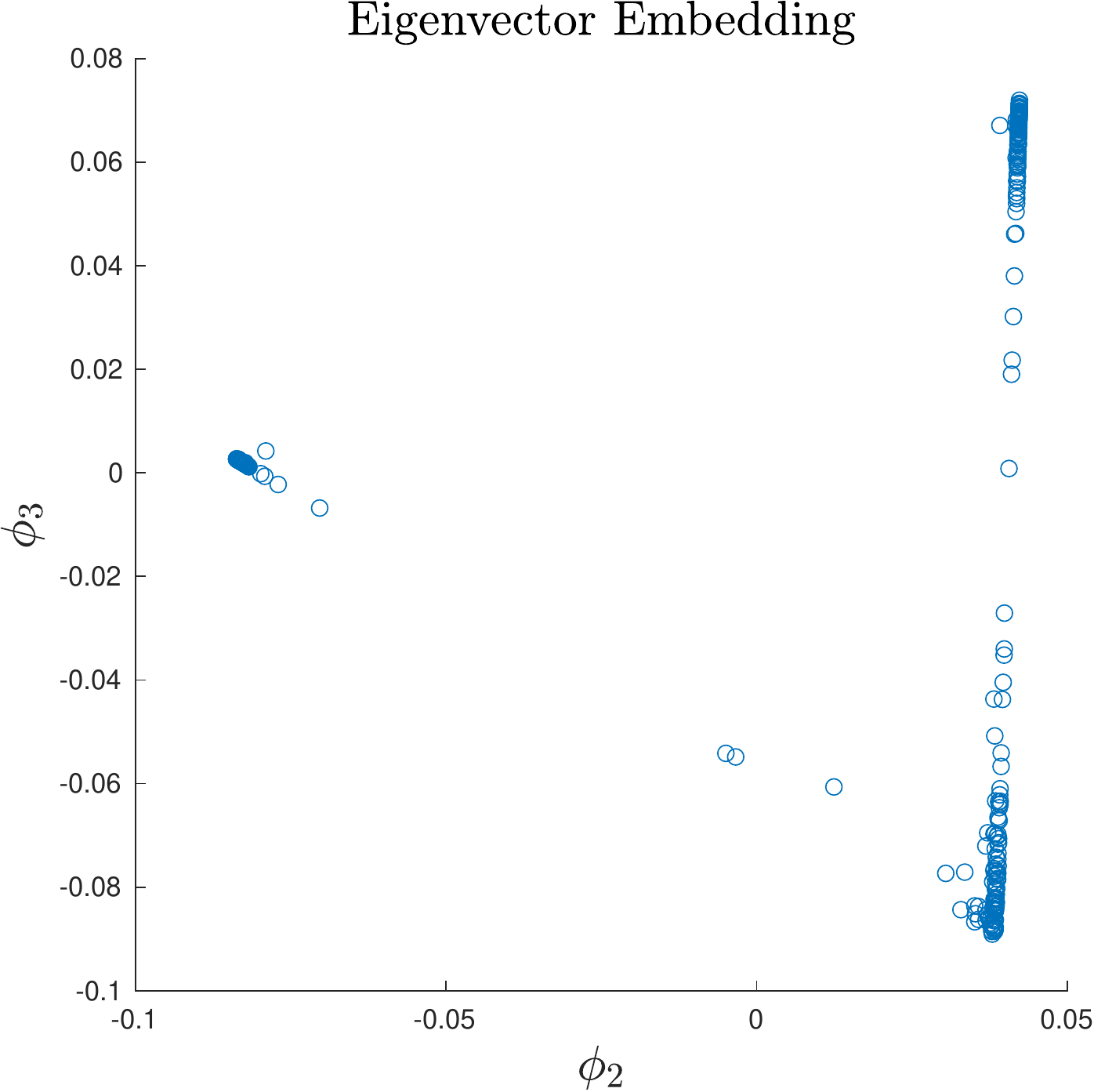}
\subcaption{LE Embedding}
\end{subfigure}
\begin{subfigure}[b]{.22\textwidth}
\captionsetup{width=.95\linewidth}
	\includegraphics[width=\textwidth]{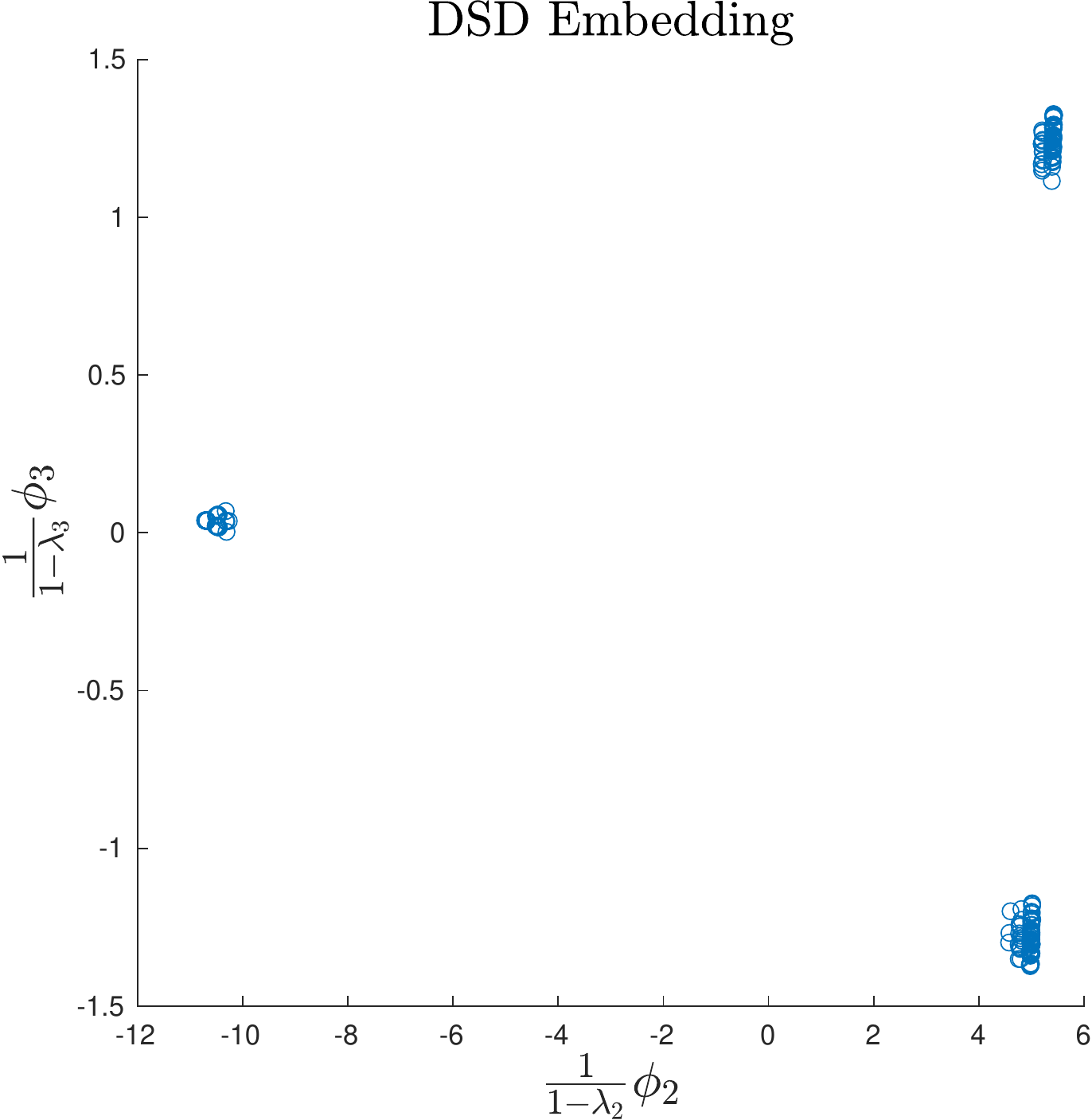}
	\includegraphics[width=\textwidth]{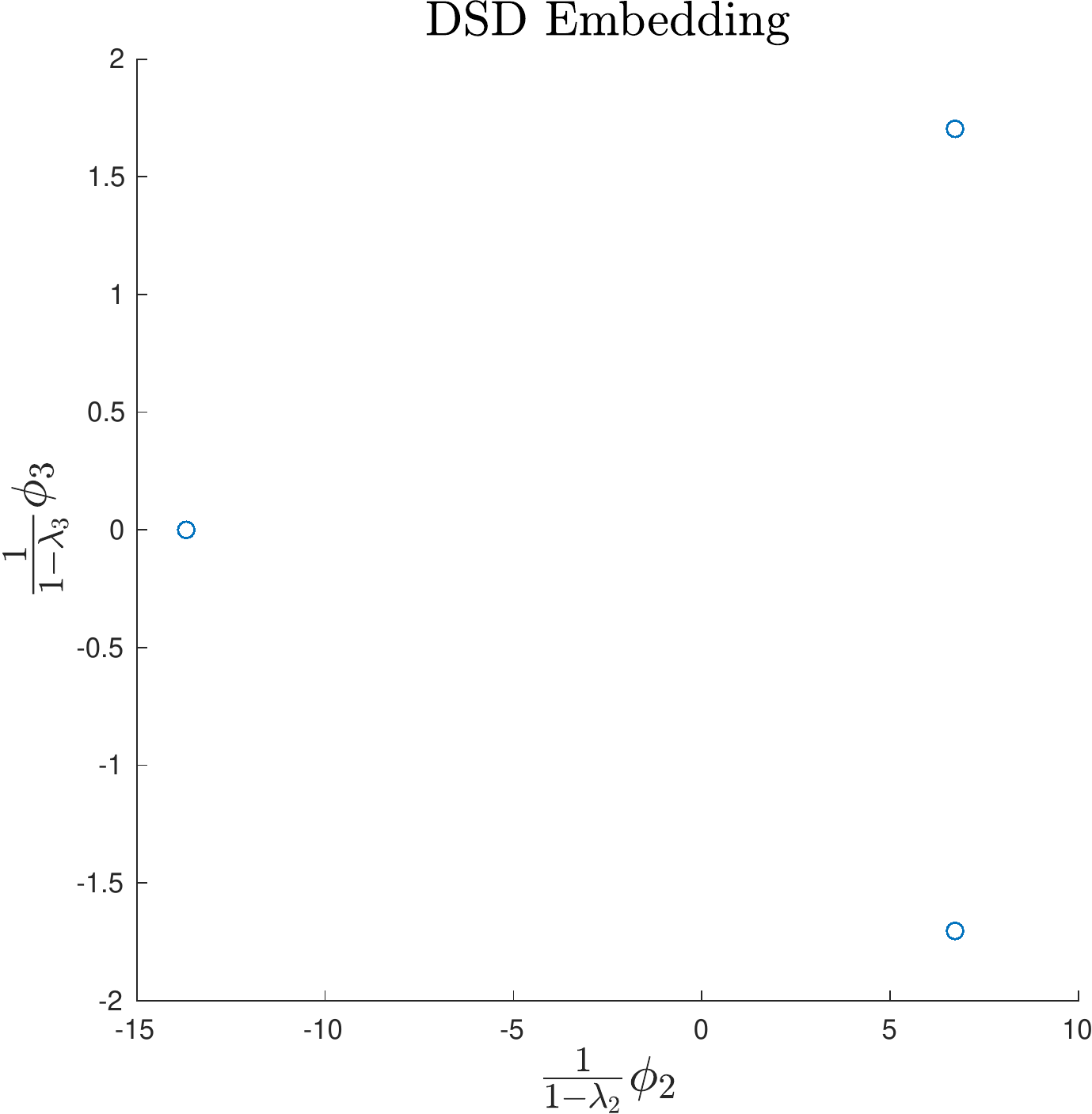}
	\includegraphics[width=\textwidth]{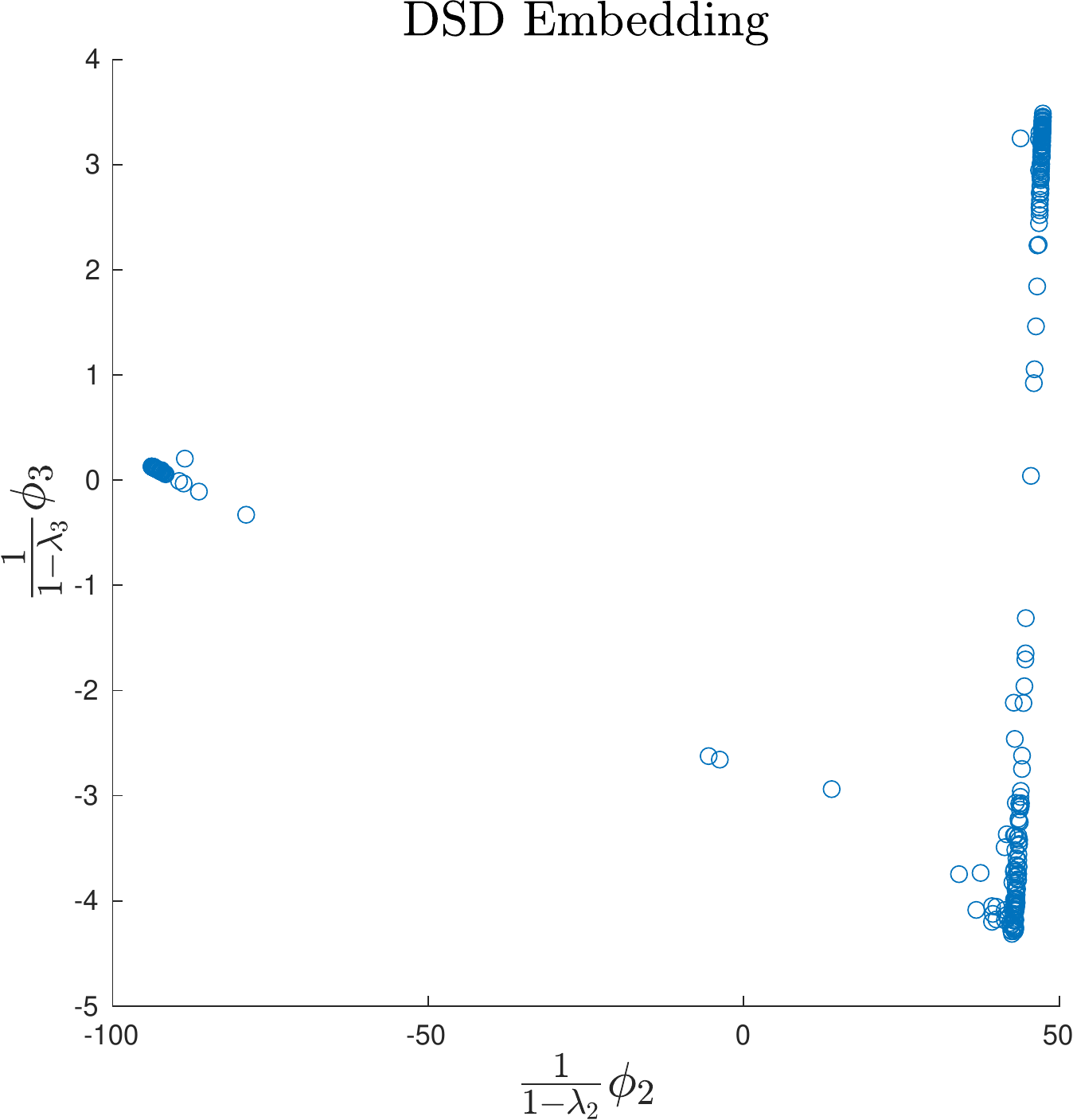}
\subcaption{DSD Embedding}
\end{subfigure}
\begin{subfigure}[b]{.23\textwidth}
\captionsetup{width=.95\linewidth}
	\includegraphics[width=\textwidth]{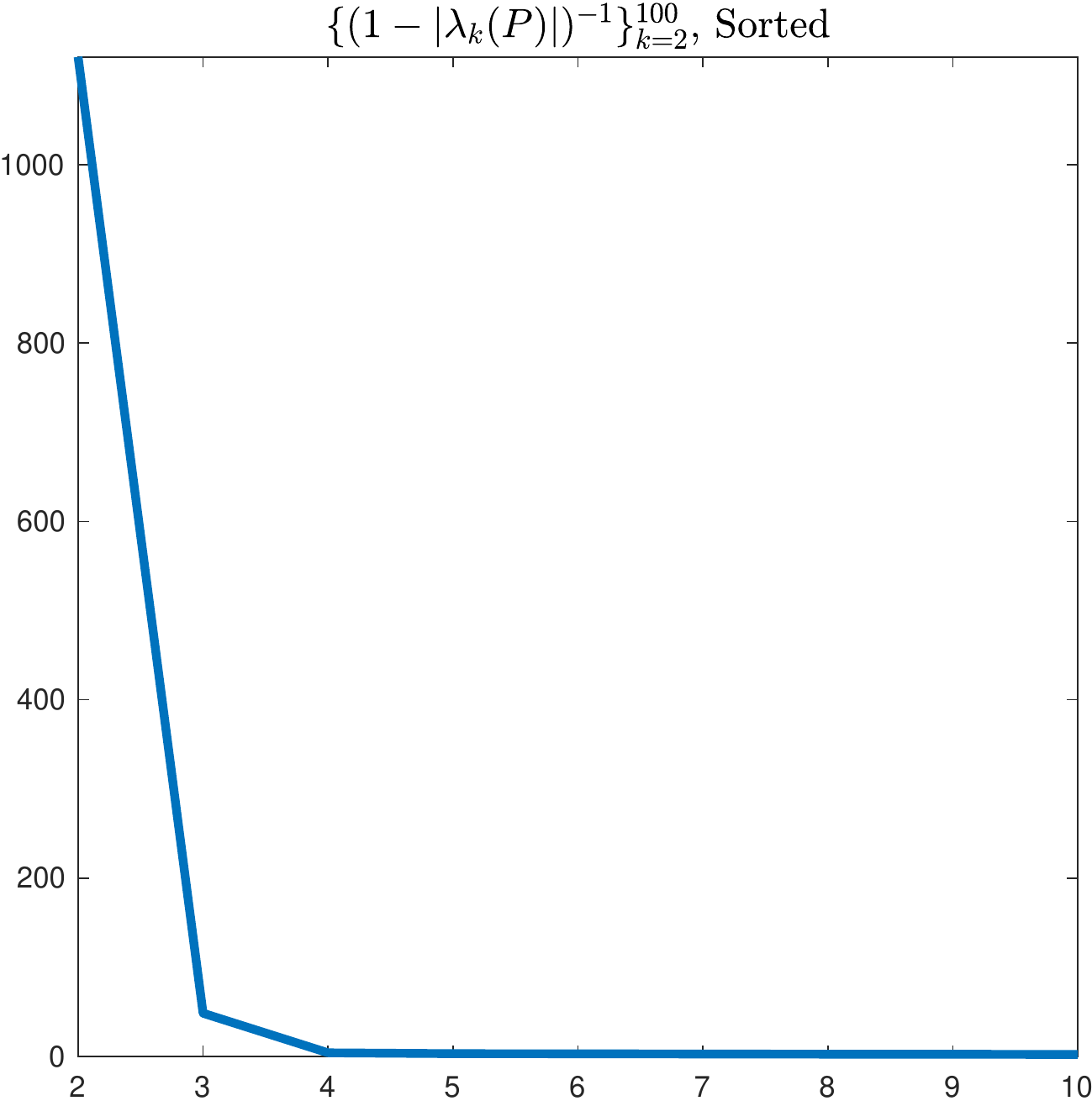}
	\includegraphics[width=\textwidth]{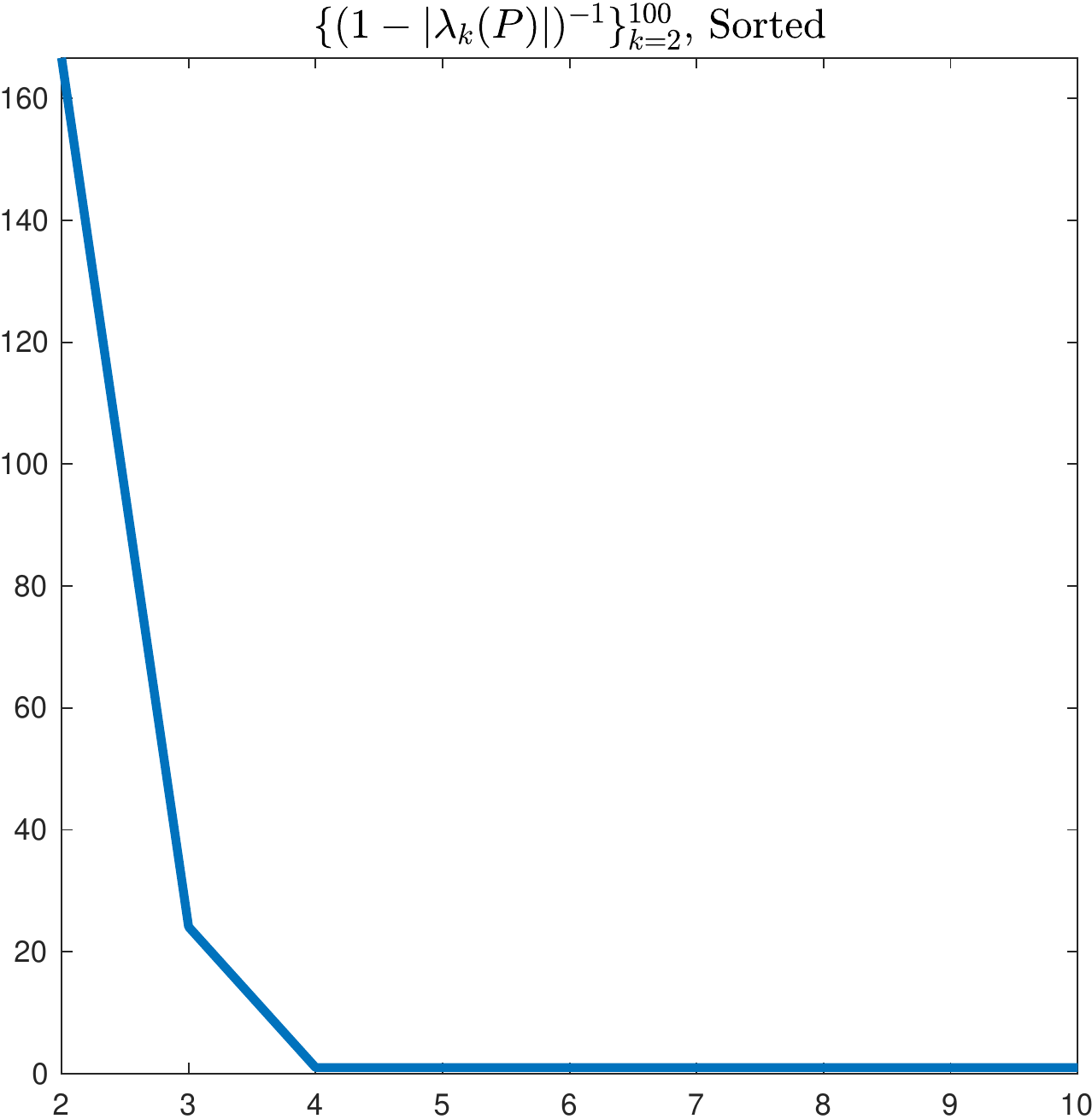}
	\includegraphics[width=\textwidth]{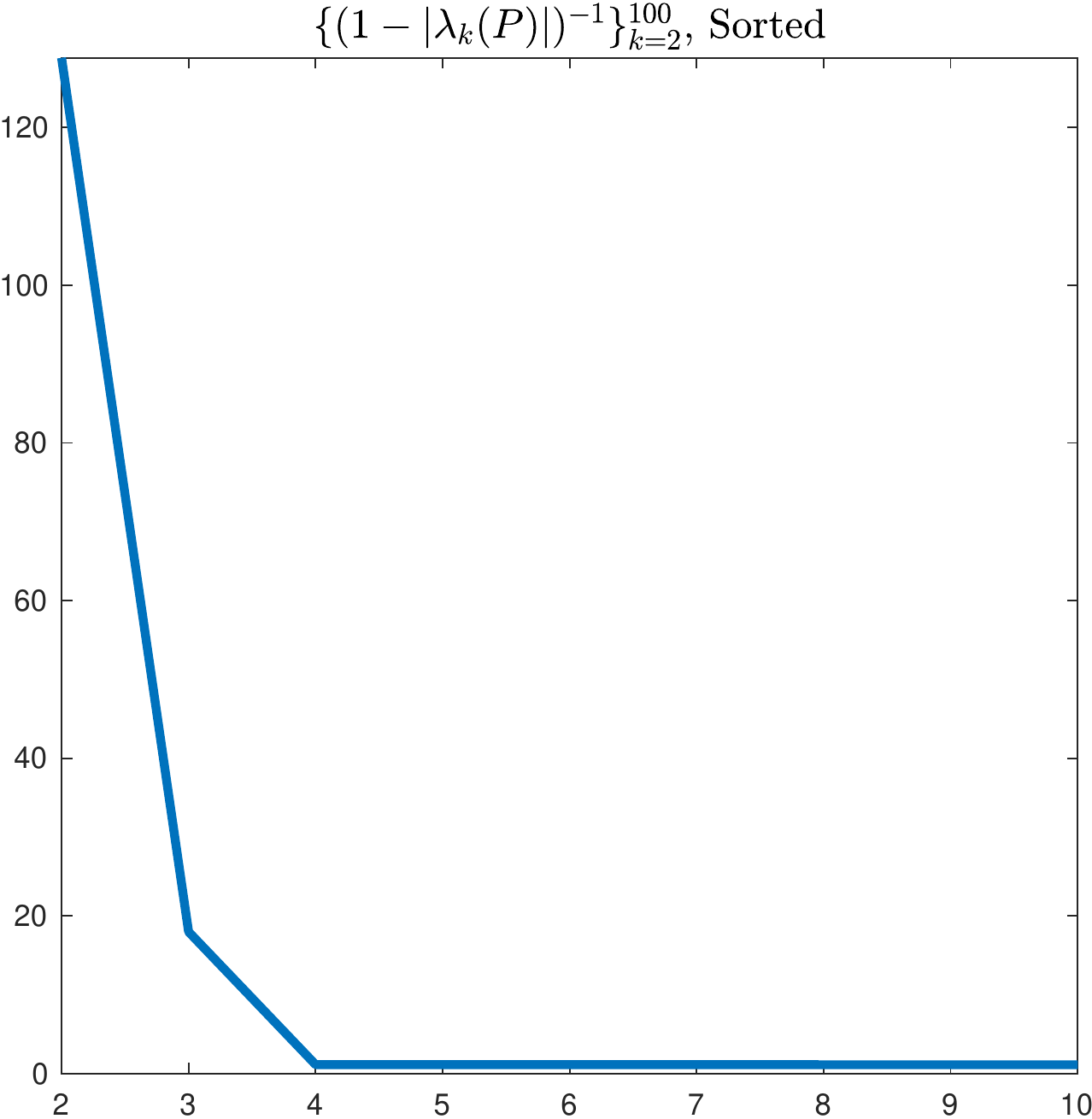}
	\subcaption{Reciprocal Eigenvalues}

\end{subfigure}
\caption{\emph{First row:} HSBM; \emph{second row:} low rank;  \emph{third row:} random geometric.  \emph{(a):} Transition matrices for each of the three proposed data models.  \emph{(b):}  Embedding the data using the $(\psi_{2},\psi_{3})$, i.e. the first non-trivial right eigenvectors of $P$.  \emph{(c):}  Embedding the data using the DSD coordinates $\left(\frac{1}{1-\lambda_{2}}\psi_{2},\frac{1}{1-\lambda_{3}}\psi_{3}\right)$.  In all cases, the DSD embedding produces coordinates that preserve the property that one pair of clusters are closer to each other than either is to the third.   \emph{(d):}  The largest values of $|(1-\lambda)^{-1}|$ for each $P$, sorted.  All data models have eigenvalues exhibiting fast decay toward zero, suggesting the utility of the low-rank representation.}
\label{fig:TransitionMatricesAndEmbeddings}		
\end{figure}

\subsubsection{Impact of Truncation on Denoising}

The coordinate representation (\ref{eqn:DSD_Embedding}) also has the impact of denoising the DSD.  Indeed, the high frequency eigenpairs (those with eigenvalue close to 0 in modulus) are statistically unreliable in the finite sample regime, and moreover do not correspond to geometrically meaningful features in the continuum domain.  In Figure \ref{fig:DSD_Matrices}, the DSD distance matrices for the multiscale datasets are computed directly, with the full spectral decomposition, and with the truncated (and therefore denoised) spectral decomposition using the first 10 eigenvectors.  We see that the denoised DSD distance matrices improve the contrast in separation between the clusters.

We also show the distance matrices when using the truncated LE embedding with the first 10 eigenvectors, and also the degree distances $d(x_{i},x_{j})=|\frac{1}{D_{i}}-\frac{1}{D_{j}}|$, to illustrate the distinct nonlocal behavior captured by the DSD on these data.  Both the LE distance and the degree distance are less informative than DSD, and often quite noisy.

\begin{figure}[!htb]
\centering
\begin{subfigure}[t]{.24\textwidth}
\captionsetup{width=.95\linewidth}
	\includegraphics[width=\textwidth]{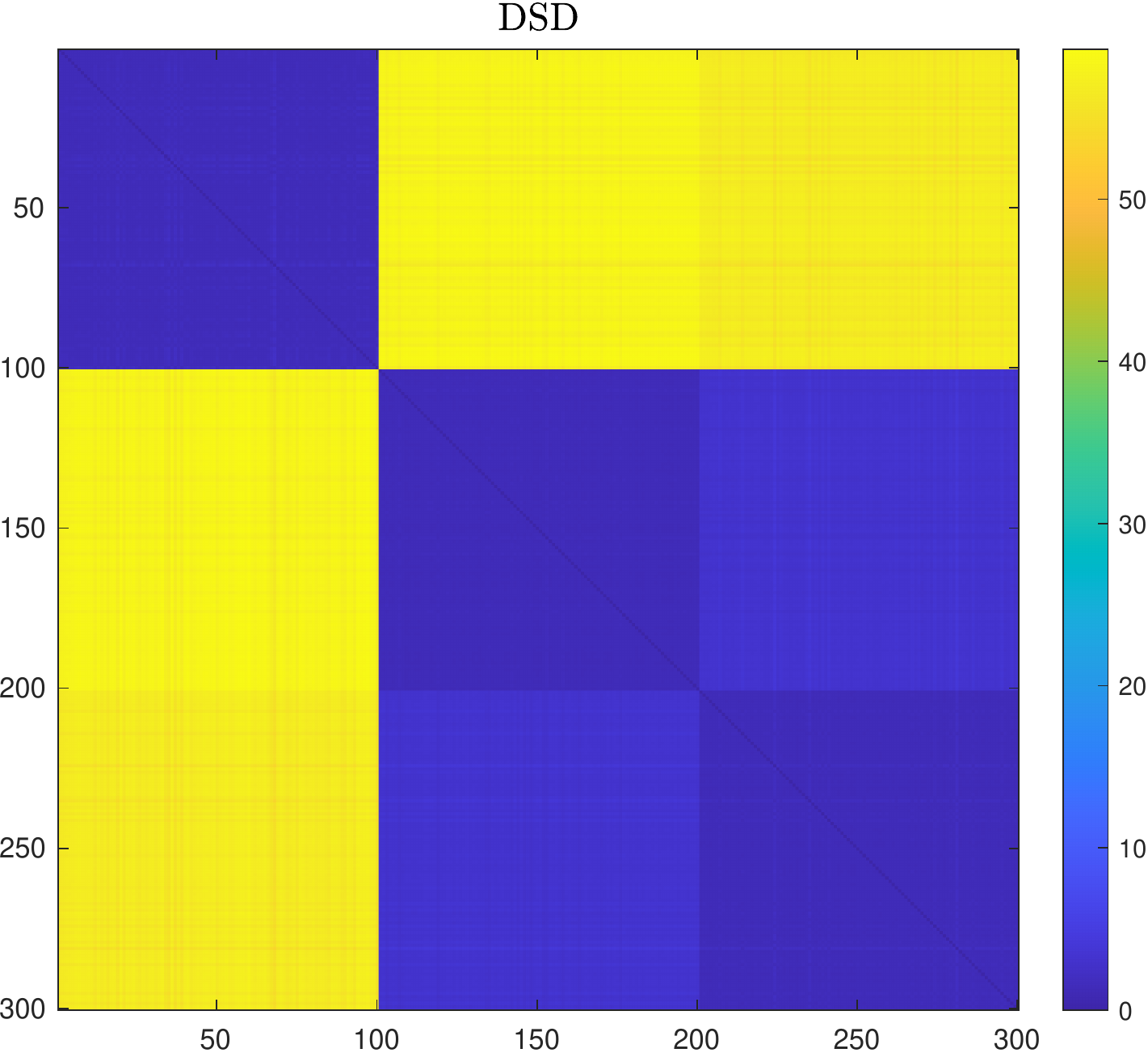}
	\includegraphics[width=\textwidth]{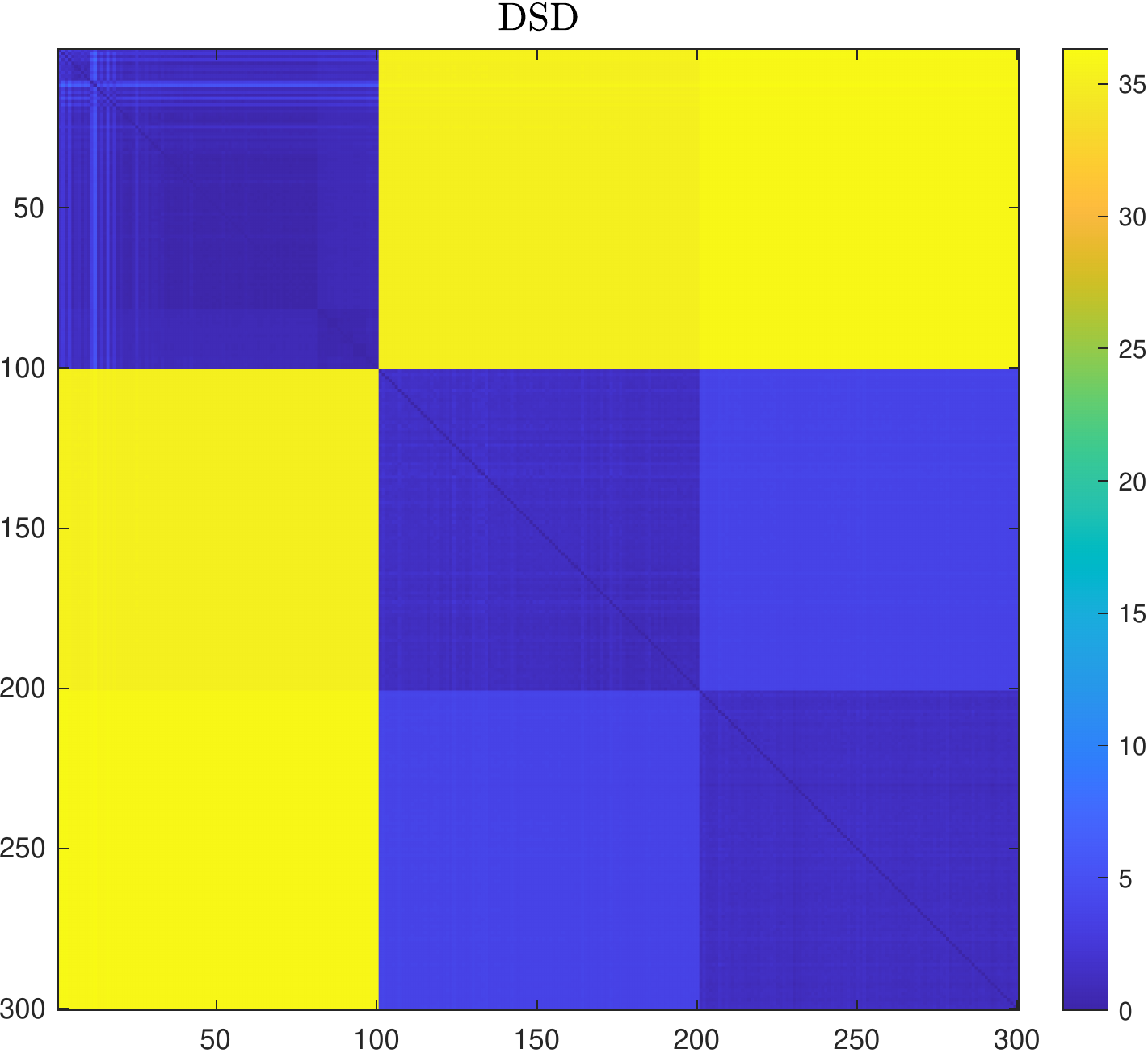}
	\includegraphics[width=\textwidth]{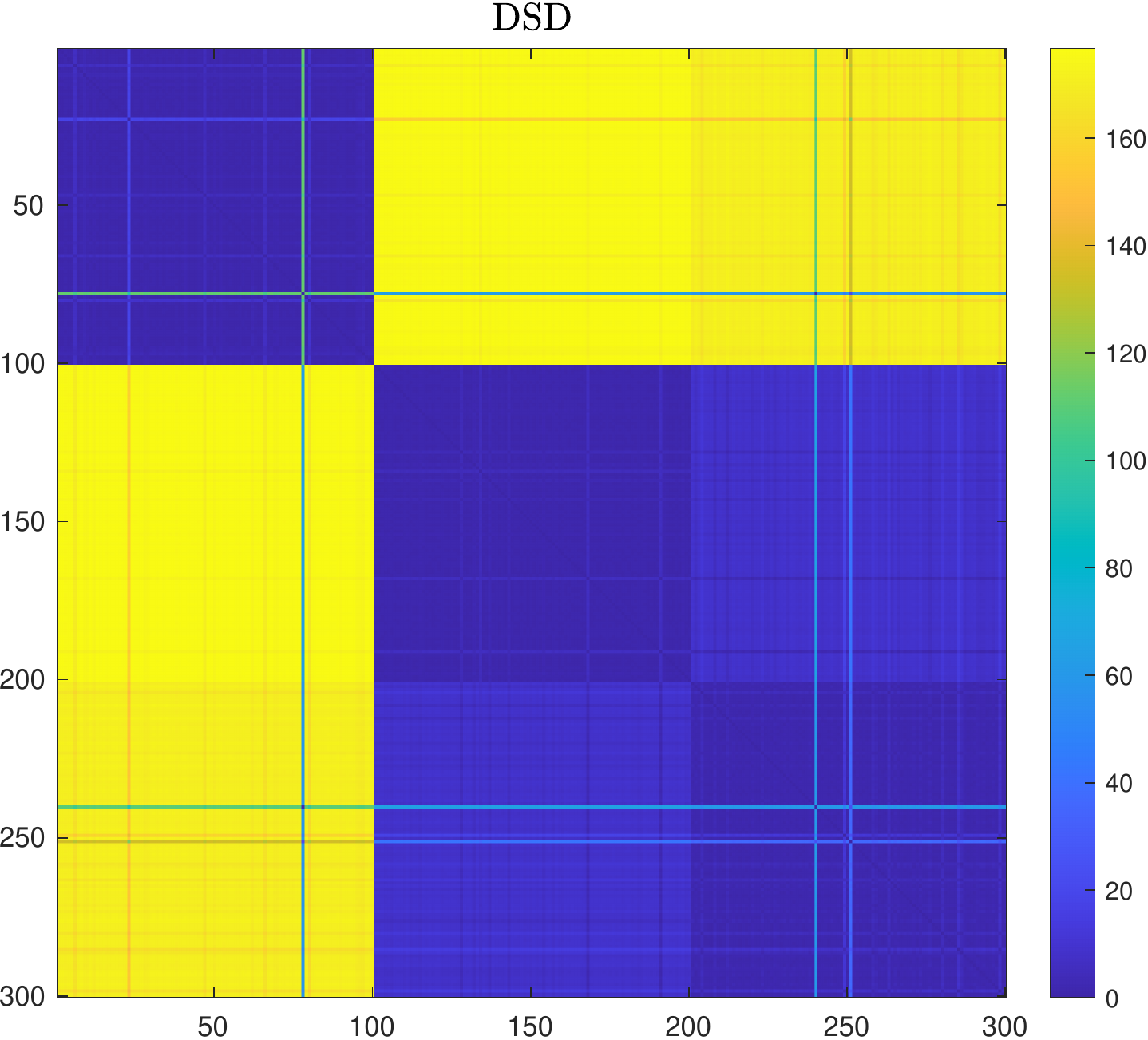}
\subcaption{DSD Distance Matrix}
\end{subfigure}
\begin{subfigure}[t]{.24\textwidth}
\captionsetup{width=.95\linewidth}
	\includegraphics[width=\textwidth]{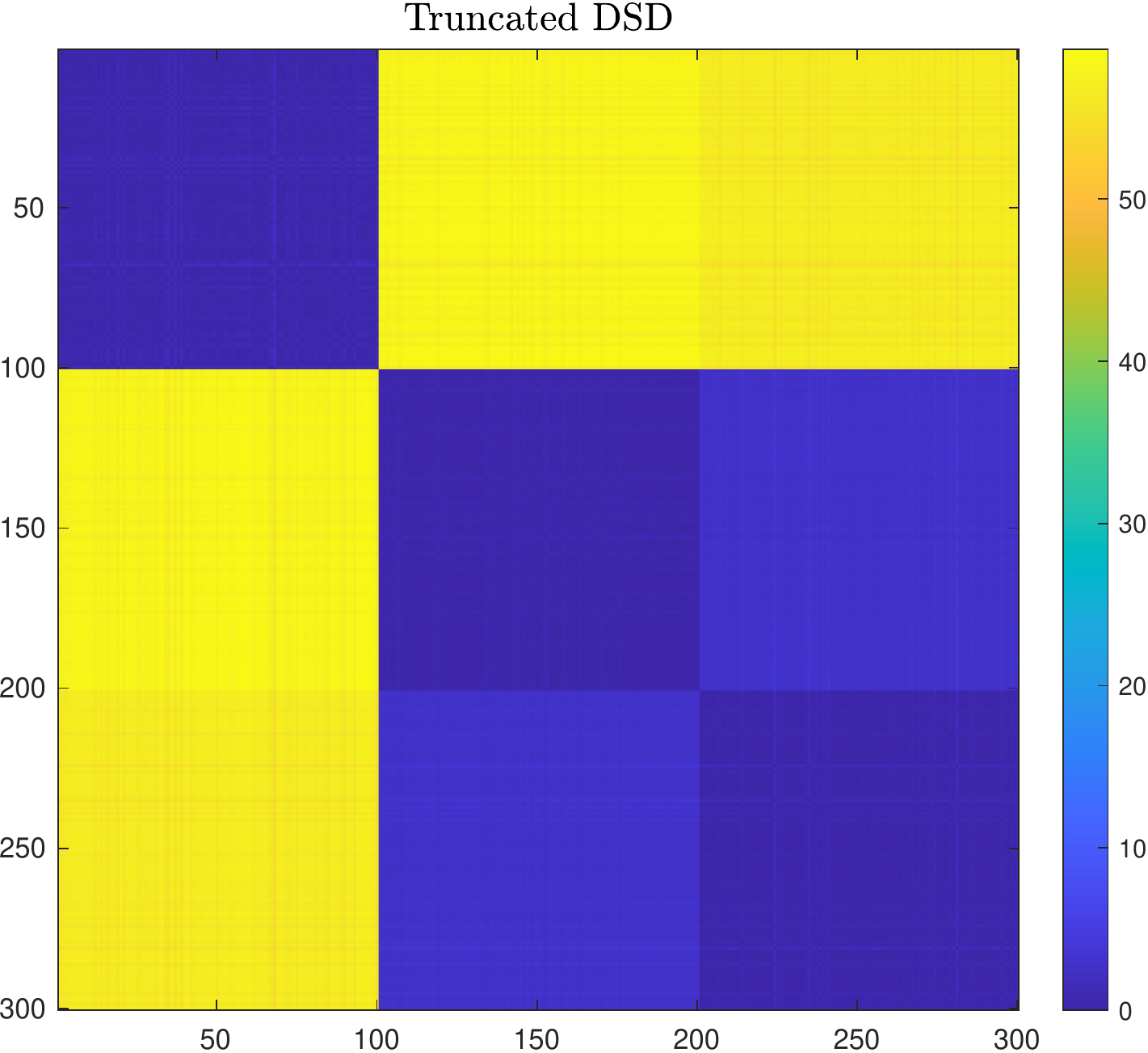}
	\includegraphics[width=\textwidth]{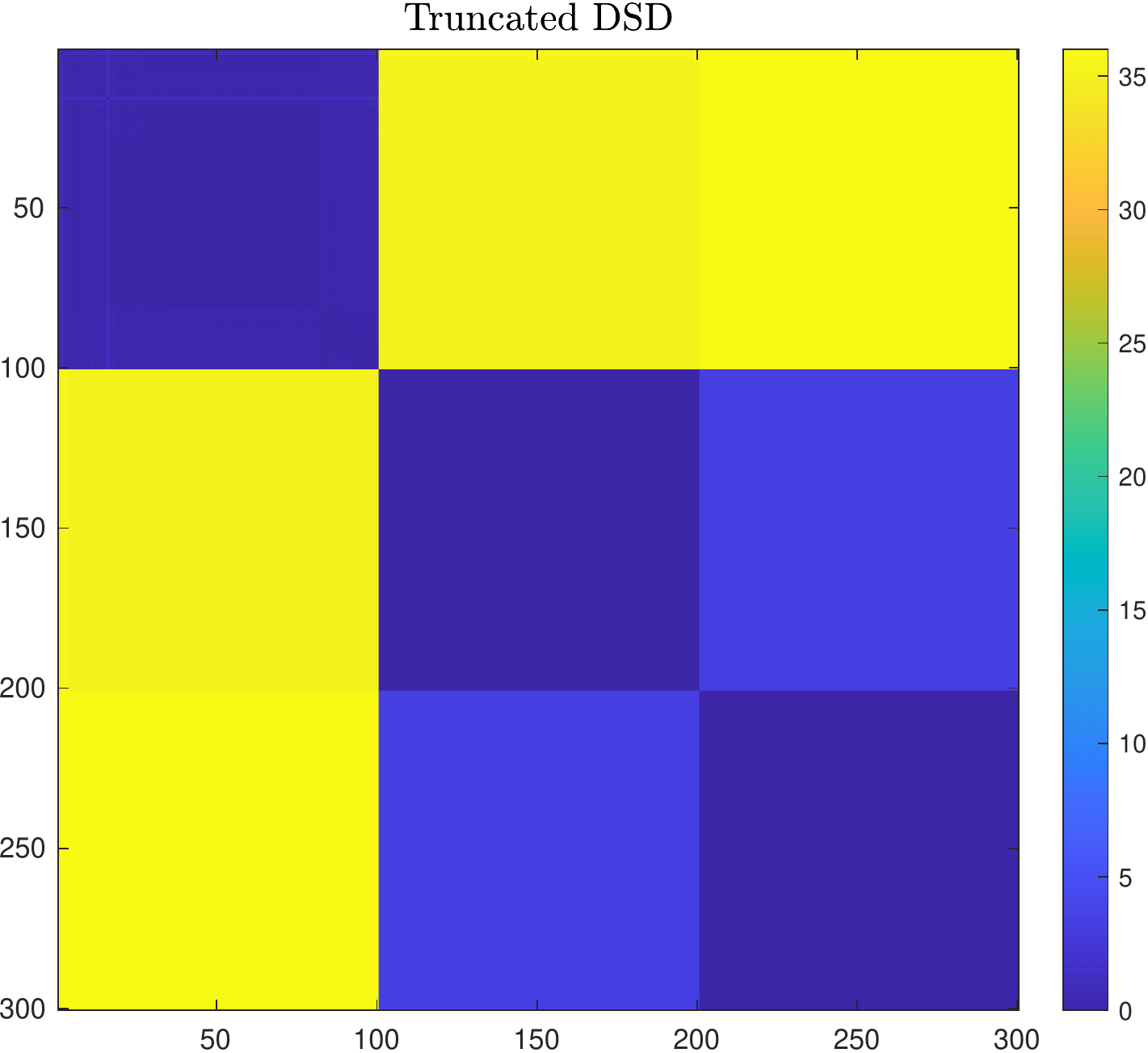}
	\includegraphics[width=\textwidth]{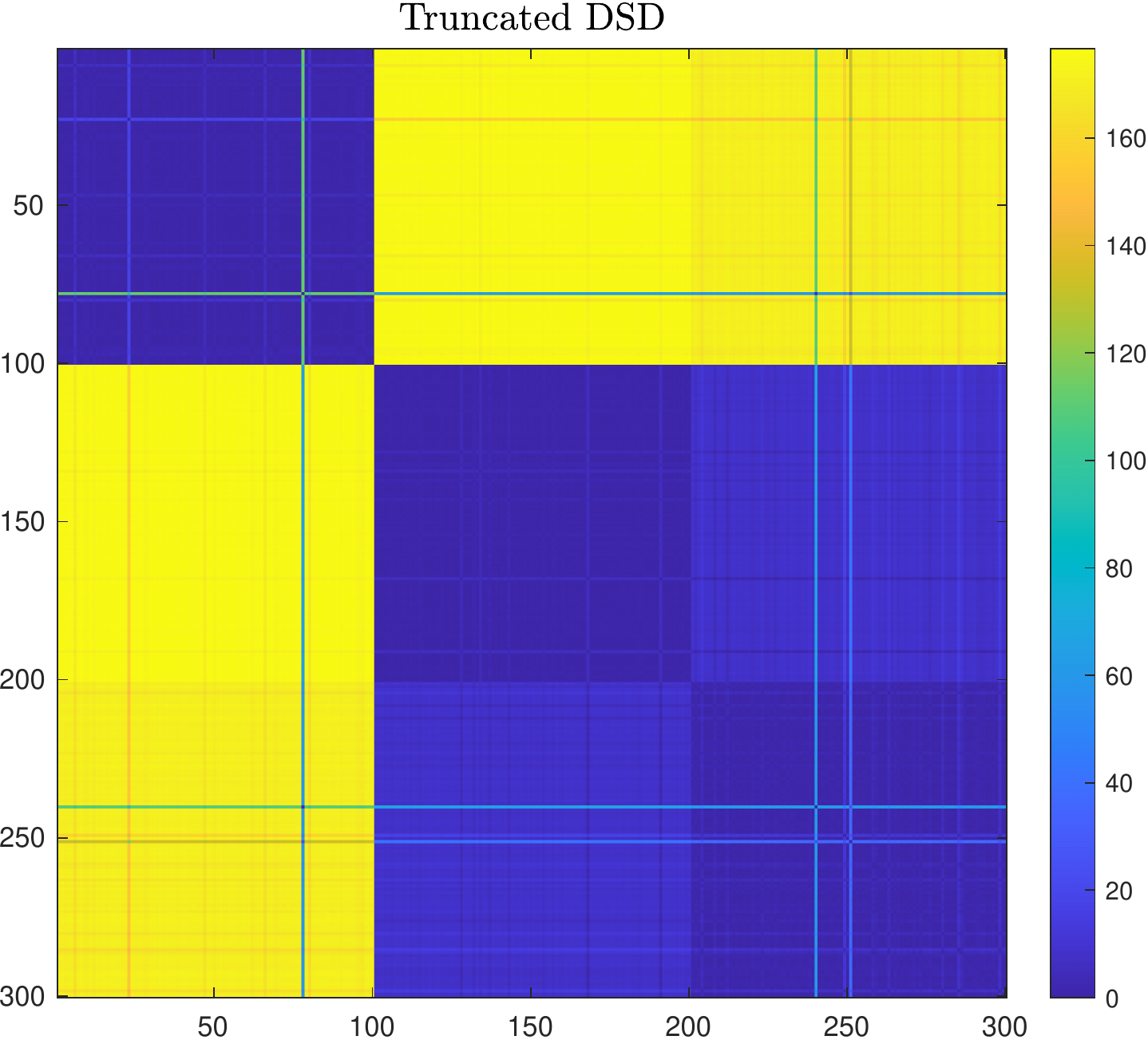}
\subcaption{Truncated DSD Distance Matrix}
\end{subfigure}
\begin{subfigure}[t]{.24\textwidth}
\captionsetup{width=.95\linewidth}	
	\includegraphics[width=\textwidth]{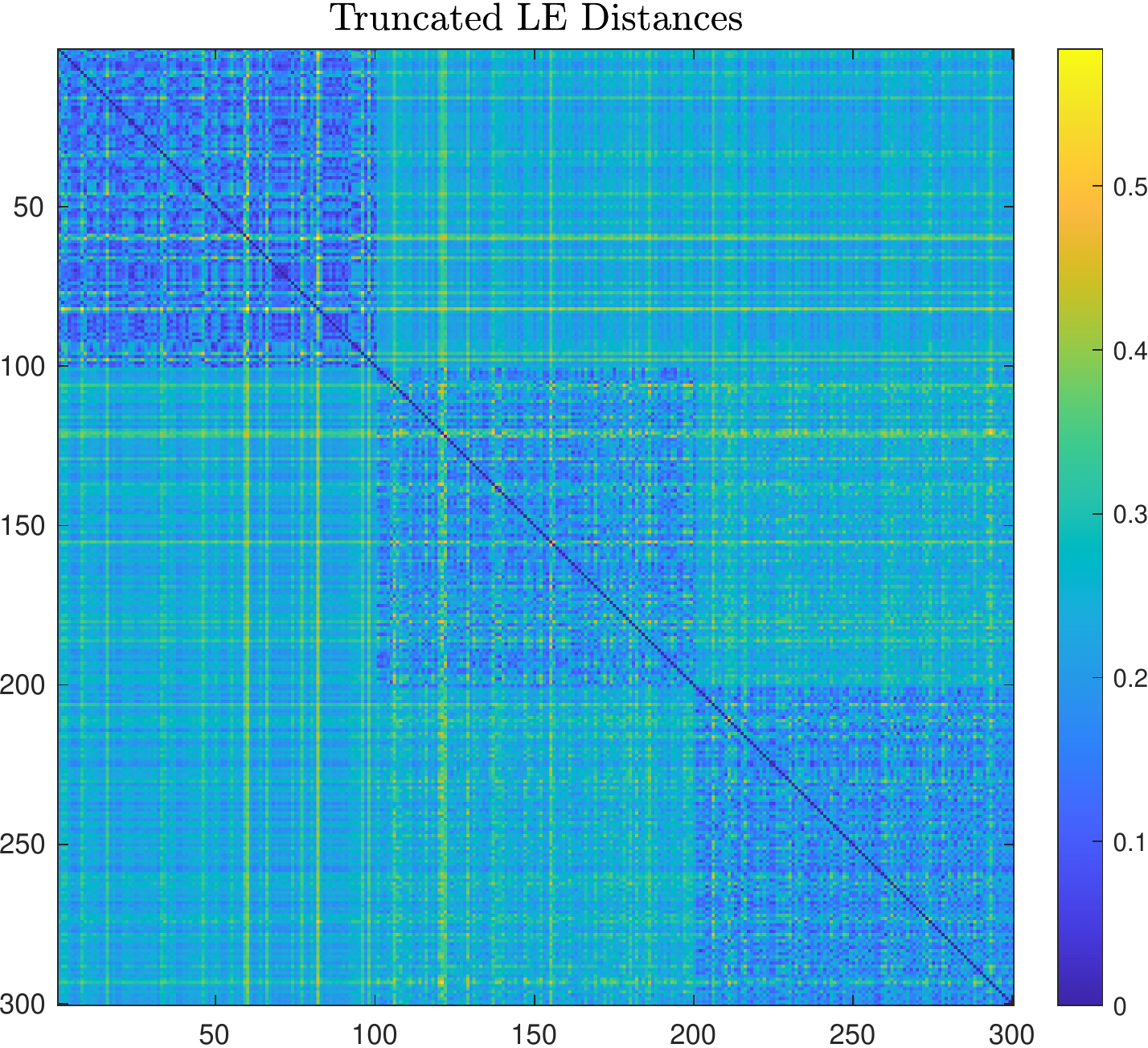}
	\includegraphics[width=\textwidth]{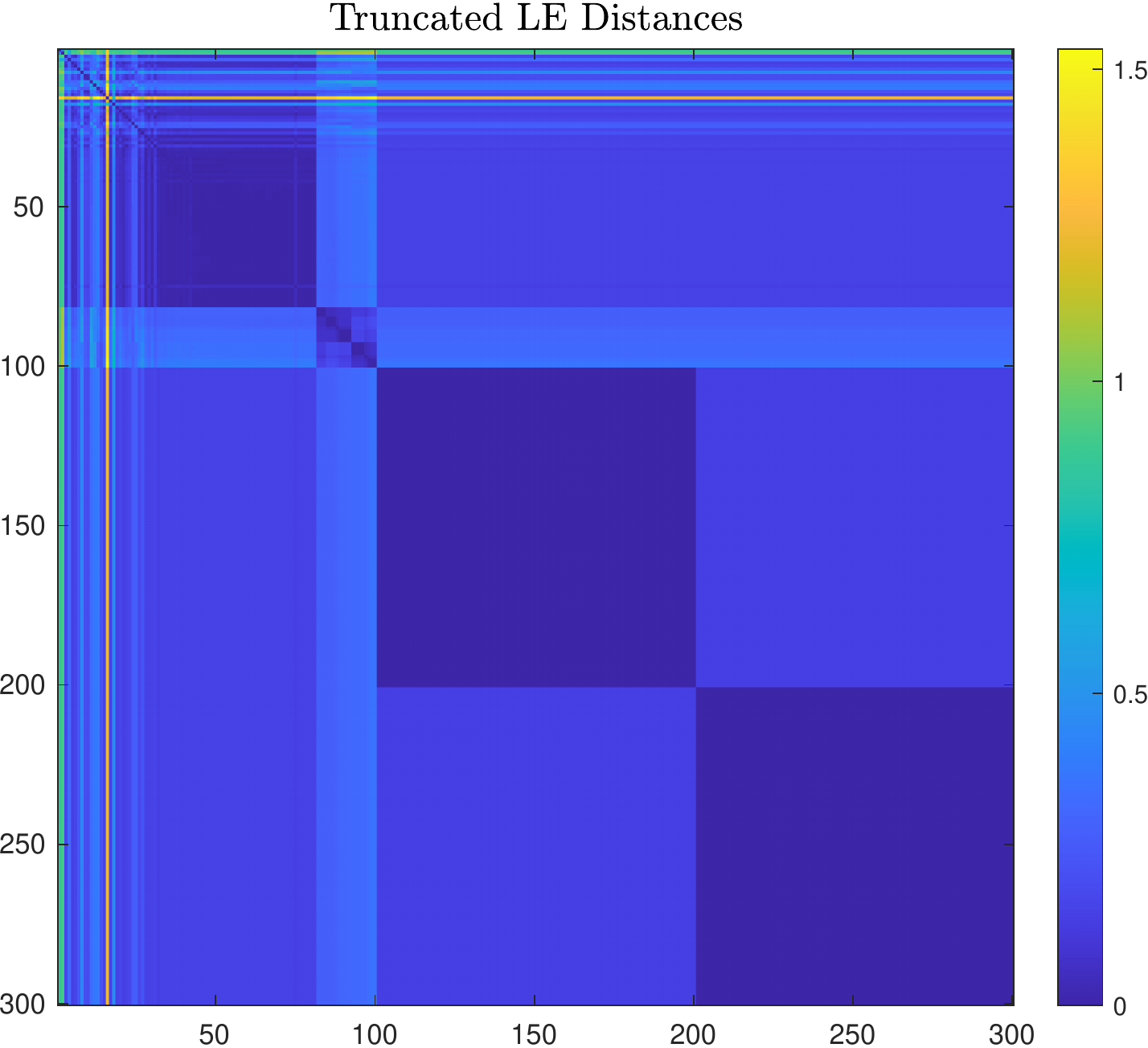}
	\includegraphics[width=\textwidth]{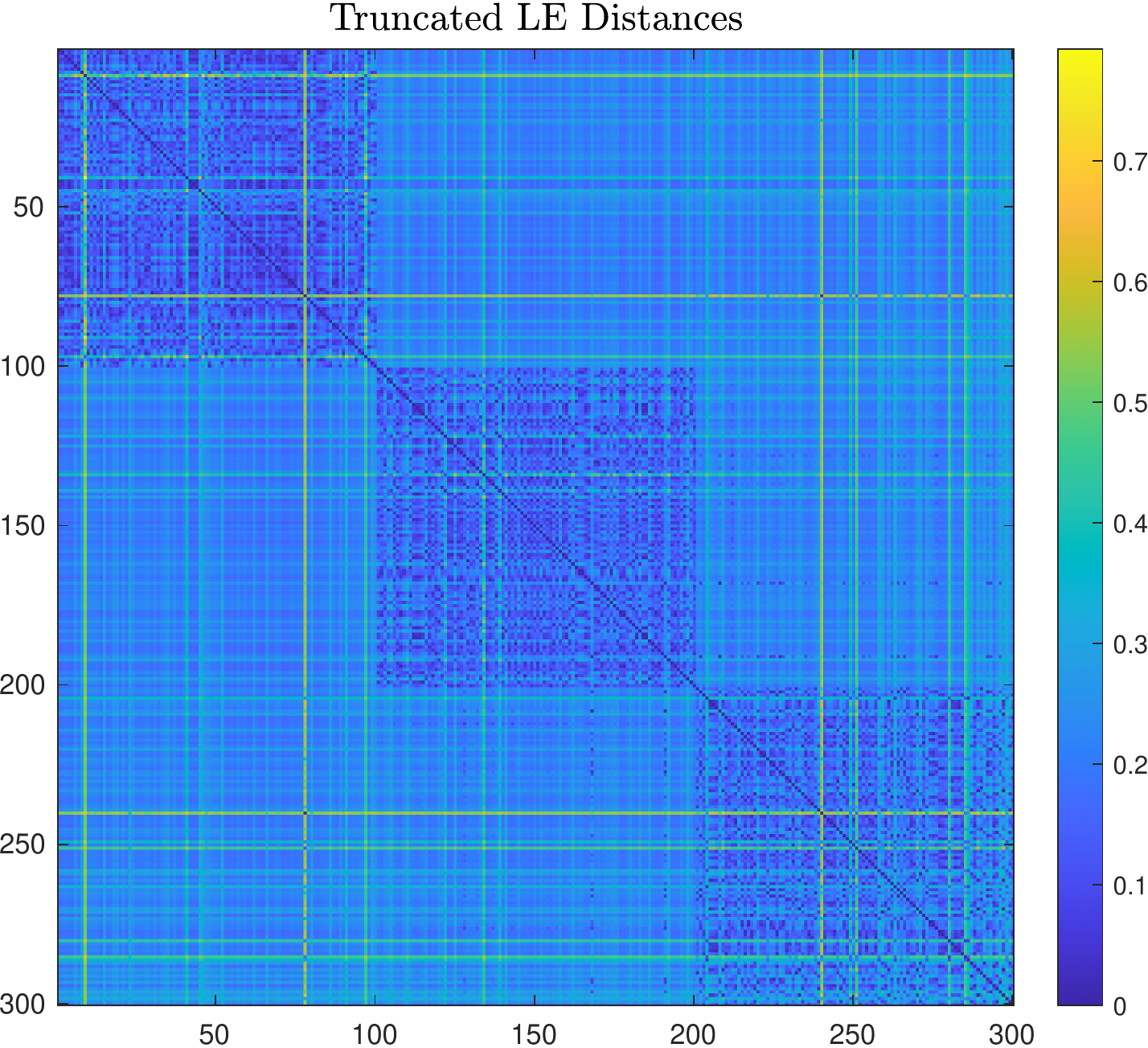}
\subcaption{Truncated LE Distance Matrix}
\end{subfigure}
\begin{subfigure}[t]{.24\textwidth}
\captionsetup{width=.95\linewidth}	
	\includegraphics[width=\textwidth]{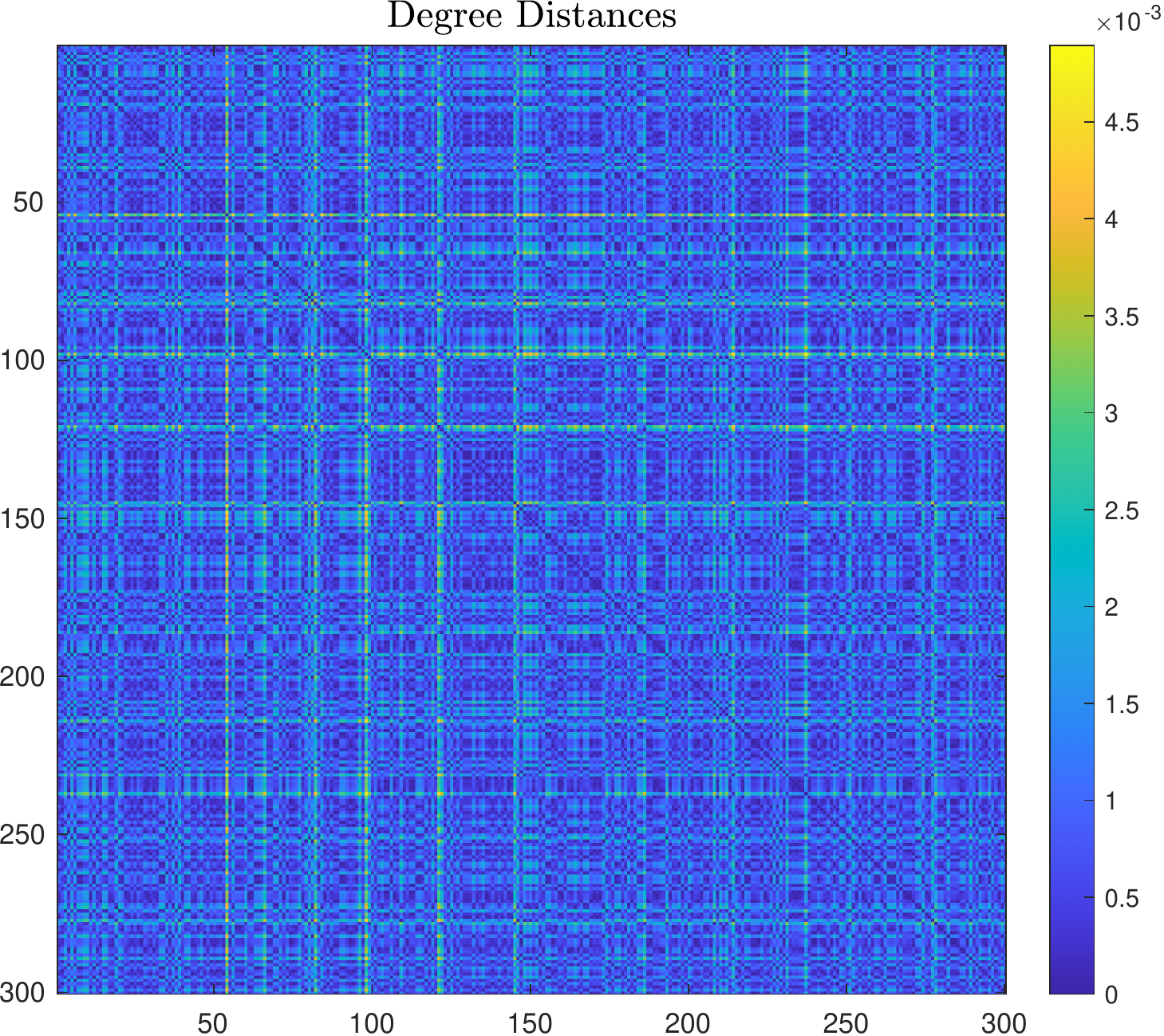}
	\includegraphics[width=\textwidth]{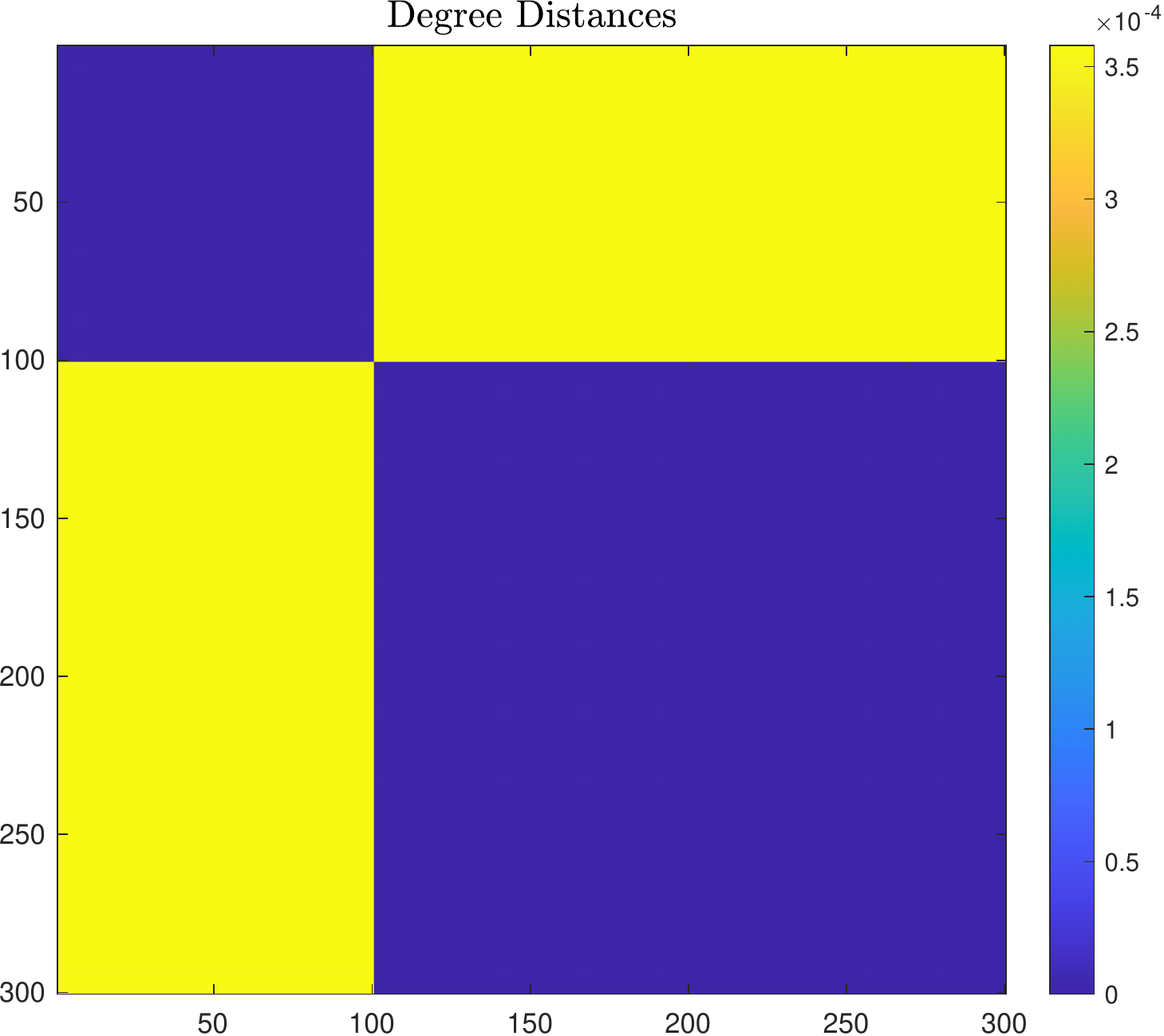}
	\includegraphics[width=\textwidth]{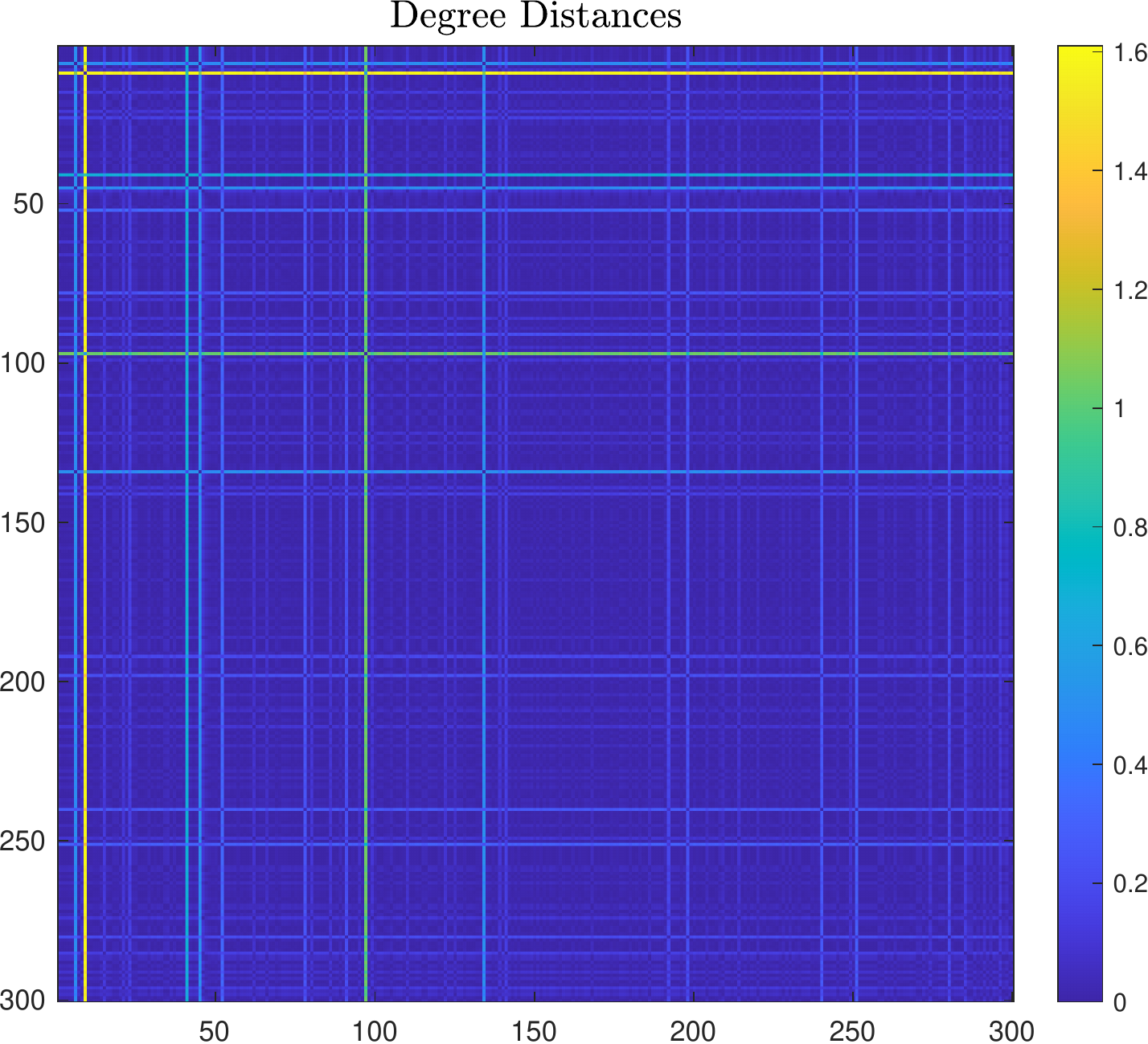}
\subcaption{Degree Distance Matrix}
\end{subfigure}
\caption{ \label{fig:DSD_Matrices}\emph{First row:} HSBM; \emph{second row:} low rank;  \emph{third row:} random geometric.  \emph{(a):}  Exact DSD matrix, computed by inverting the regularized Laplacian.  \emph{(b):}  The truncated spectral DSD, given by only using the first three eigenvectors.  This row shows the benefit of truncating the spectral decomposition when $P$ is intrinsically nearly low-rank.  We see that the distance matrices show more contrast compared to the full DSD, and their computation is much faster.  \emph{(c):}  Truncated LE distances.  We see the distances are quite noisy, and also fail to capture the multiscale structure in the data.  \emph{(d):}  Degree distances are largely uninformative, as expected.  Note that the DSD and the degree distances are quite different, possibly due to the fact that the underlying diffusion matrices mix slowly, and consequence of a small spectral gap.}
\end{figure}

\subsection{Experiments on Biological Networks}\label{subsec:BiologicalNetworks}
In humans, as well as several well-studied model organisms, biologists study different types of gene-gene or protein-protein association networks, where the vertices represent genes or proteins, and two proteins are connected by an edge based on different criteria (depending on the network). For example, the classical protein-protein interaction networks connect two proteins if there is experimental evidence that they bind in the cell, with an edge weight that corresponds to either the confidence in the experimental evidence, or (in some networks) the predicted strength of the interaction. Other networks connect two genes if they are typically expressed in the same human tissues, or if their genetic sequence is sufficiently similar. The DREAM Disease Module Identification Challenge~\cite{Choobdar2019_Assessment} collected a heterogeneous set of six different human  protein-protein association networks; here we consider the first two of these networks, DREAM1 and DREAM2. DREAM1 is derived from STRING \cite{Szklarczyk2014_String} which  integrates known and predicted protein-protein associations from various sources, including those aggregated from databases that collect experimental evidence for protein-protein interactions, those derived from co-expression, and others either inferred by literature annotation or transferred from interactions of sequence-similar proteins in other species.   Both physical protein interactions (direct) and functional associations (indirect) are included. DREAM1 includes all the different interaction types in STRING v.10.0 excluding the interactions derived from text mining.   Edge weights correspond to the STRING association score (after removing evidence from text mining). DREAM2 intends to represent a more classical protein-protein interaction network, where the presence of an edge between proteins indicates that there is evidence that proteins physically bind in the cell. It is derived from the InWeb database~\cite{li2017scored}, where edges come from either humans or are supported by the presence of a corresponding interaction in multiple different model organism databases (InWeb requires each such interaction in four separate model organism databases before it includes it in its network as also a human interaction edge with confidence).  These networks are dense and highly connected (see Table~\ref{tab: DREAM largest_conn_comp}), though they can have some isolated nodes. To ensure the assumption of  irreducibility for convergence, we restrict to the largest connected component of the network if it is not originally connected. The edge weights reflect the confidence of the interaction (See Table \ref{tab: DREAM largest_conn_comp}).

\begin{table}[!htb]
 \centering
\begin{tabular}{|c|c|c|c|}
\hline
 & Vertices & Edges  & Average Degree                         \\ \hline
DREAM1 & 17,388   & 2,232,398 & 128.4                        \\
DREAM2 & 12,325   & 397,254   & 32.2               \\
\hline
\end{tabular}
\caption{Number of vertices, number of edges, and average degrees for the largest connected components in the DREAM1 and DREAM2 networks.}
\label{tab: DREAM largest_conn_comp}
\end{table}

To illustrate the value of DSD for the biological networks, as well as the virtue of the DSD approximation via eigenvector truncation, we consider two classical problems in computational biology: \emph{link prediction} and \emph{function prediction}.

\subsubsection{Link Prediction}
Despite the large number of edges, it is assumed that many remaining true edges are missing from the PPI network, because some of the underlying interactions have not yet been experimentally observed. The goal of \emph{link prediction} is to estimate these missing links, based on the already observed ones.  A solution to the link prediction problem is an ordered ranking of all possible pairs of vertices that are not directly connected by an edge in the network. A classifier predicts the edges ranked above a threshold to be ``true edges" and below the threshold to be ``non-edges" where varying the threshold allows a tradeoff between true and false positive edge predictions. Link prediction algorithms help prioritize new pairs of proteins to experimentally test for interactions.

Each of the DREAM1 and DREAM2 networks are very large and have dense, highly connected cores.  Because of these densely connected cores, the overwhelming majority of missing links in the graphs are between nodes the core region. As the core is highly connected, a simple heuristic measure of ranking the node pairs based on the number of common neighbors will produce an excellent ranking when applied to the entire network, since we expect every node pair in the core to have some neighbors in common.  However, we are often most interested in predicting new interactions outside the common core.  In order to avoid simply discovering the common core, and also to work with smaller and more computationally tractable examples, we consider the link prediction problem on sparser and smaller samples of the DREAM1 and DREAM2 networks. More precisely, we randomly sample 400 nodes from the network, and consider the ``full" network of their induced subgraph.  We then compute a ``partial" network, consisting of the same subgraph of nodes with 10\% of the edges removed uniformly at random. While removing the edges, we ensure graph connectedness using Algorithm \ref{alg:ConnComp}.

\begin{algorithm}[htp!]
	\caption{Generate of random connected subgraphs} \label{alg:ConnComp}
	\begin{algorithmic}[1]
	    \STATE Input a graph $G=(X,W)$ as a list of edges, with $M = |W|$.
		\STATE Randomly permute the edge indices $\{1,\dots, M\}$.
		\STATE Create a spanning tree from this ordered edge list, prioritizing the edges with smaller assigned number. This ensures that the subgraph is connected and has all the nodes of the graph.
        \STATE Protect the edges of the resulting tree; remove $0.10M$ of the edges from the remaining graph uniformly at random.
	\end{algorithmic}
\end{algorithm}

Link prediction ranking methods can then be compared against each other by starting with the partial network, and assuming the missing edges from the full network are the true positive links, and all unobserved links from the full network are negative.  In this sense, we may understand the partial network as a training set, and the goal is to predict the full network.  DSD-based link prediction is done by ranking every node pair in the graph in ascending order based on their DSD distances, with the interpretation that the lower the DSD distance between two nodes in the graph, the more likely the nodes are to have a link between them.  The heuristic comparison methods considered---weighted common neighbor, Jaccard's coefficient, and the weighted Adamic-Adar index (defined in \cite{Liben2007_Link})---are local scoring methods, where the link scores between two distinct nodes in the graph are computed based on their neighborhood affinity. Since a larger score implies that the two nodes are closely associated in the network, a link between them becomes more likely. By ranking all the edges in the ascending order of the score and providing a threshold, we get a score-based method that predicts the missing link in the graph.  Table \ref{table:link_pred_heuristics} shows how the heuristic scores are computed for two arbitrary nodes $x_{i},x_{j} \in X$ in a graph $G = (X,W)$.  We also compare to using \emph{diffusion distances} \cite{Coifman2006_Diffusion} (see Definition \ref{defn:DD}) at several time scales, to analyze the importance of the multitemporal aggregation performed by DSD, but not diffusion distances.

\begin{table}[!htb]
\renewcommand\arraystretch{1.5}
\centering
\begin{tabular}{|
    >{\centering\arraybackslash}m{7.5cm}
    |>{\centering\arraybackslash}m{7.5cm}
    |}
\hline
	Score & Definition \\ \hline 
	Weighted Common Neighbor & $\displaystyle\sum_{x_{k} \in \mathcal{N}(x_{i}) \cap \mathcal{N}(x_{j})} (W_{ik} + W_{jk})$ \\ [10pt]
	Jaccard's Coefficient & $\displaystyle|\mathcal{N}(x_{i}) \cap \mathcal{N}(x_{j})|/|\mathcal{N}(x_{i}) \cup \mathcal{N}(x_{j})|$ \\ [6pt] 
	Weighted Adamic-Adar Index & $\displaystyle\sum_{x_{k} \in \mathcal{N}(x_{i}) \cap \mathcal{N}(x_{j})}1/\log(1 + \displaystyle\sum_{x_{\ell} \in \mathcal{N}(y)} W_{\ell k})$\\\hline
\end{tabular}
\caption{\label{table:link_pred_heuristics} Definitions of different heuristic method scores for two nodes $x_{i},x_{j} \in X$  in an undirected graph $G = (X,W)$ with neighbor sets $\mathcal{N}(x_{i})$ and $\mathcal{N}(x_{j})$ respectively.}
\end{table}
  
Figures \ref{fig:Link_Prediction_DREAM1}, \ref{fig:Link_Prediction_DREAM2} are obtained by averaging the results generated from 100 iterations of sampling uniformly at random subgraphs of 400 nodes from the DREAM1 and DREAM2 networks and generating training and test sets, as described in Algorithm \ref{alg:ConnComp}.  DSD-based  ranking  is compared against the heuristic methods and diffusion distances.  By varying the threshold and mapping the true positive versus the false positive rate of the various methods, we compute the F1 score, which is the harmonic mean of precision and recall.  Similarly, we constructed the partial receiver operating characteristic (ROC) \cite{Fawcett2006_Introduction} and precision-recall curves  from the top 20000 ranked edges for the reduced graphs. The ROC plots show that the top edges predicted by DSD-based ranking are more accurate, compared to both the heuristic methods and the diffusion distance based methods for DREAM1. In DREAM2, the heuristic methods slightly outperform the DSD method. 

Figures \ref{fig:Link_Prediction_DREAM1},  \ref{fig:Link_Prediction_DREAM2} (g-i) shows the results from using approximate DSD computations in which the spectral decomposition is truncated to denoise and reduce computational complexity; see (\ref{eqn:DSD_Embedding}). Recall that we are considering 400 node subgraphs; since the subgraphs are comparatively small, it could be that all the eigenvectors have a significant value, and performing the eigenvector truncation damages accuracy in a significant way.
However, we find that approximate DSD is nearly comparable to exact DSD in performance in DREAM1; in DREAM2, it actually has a denoising effect and improves link prediction performance. For both DREAM1 and DREAM2, approximate DSD significantly decreases run time.

\begin{figure}[!htb]
\centering
\begin{subfigure}[t]{.32\textwidth}
	\captionsetup{width=.95\linewidth}
	\includegraphics[width=\textwidth]{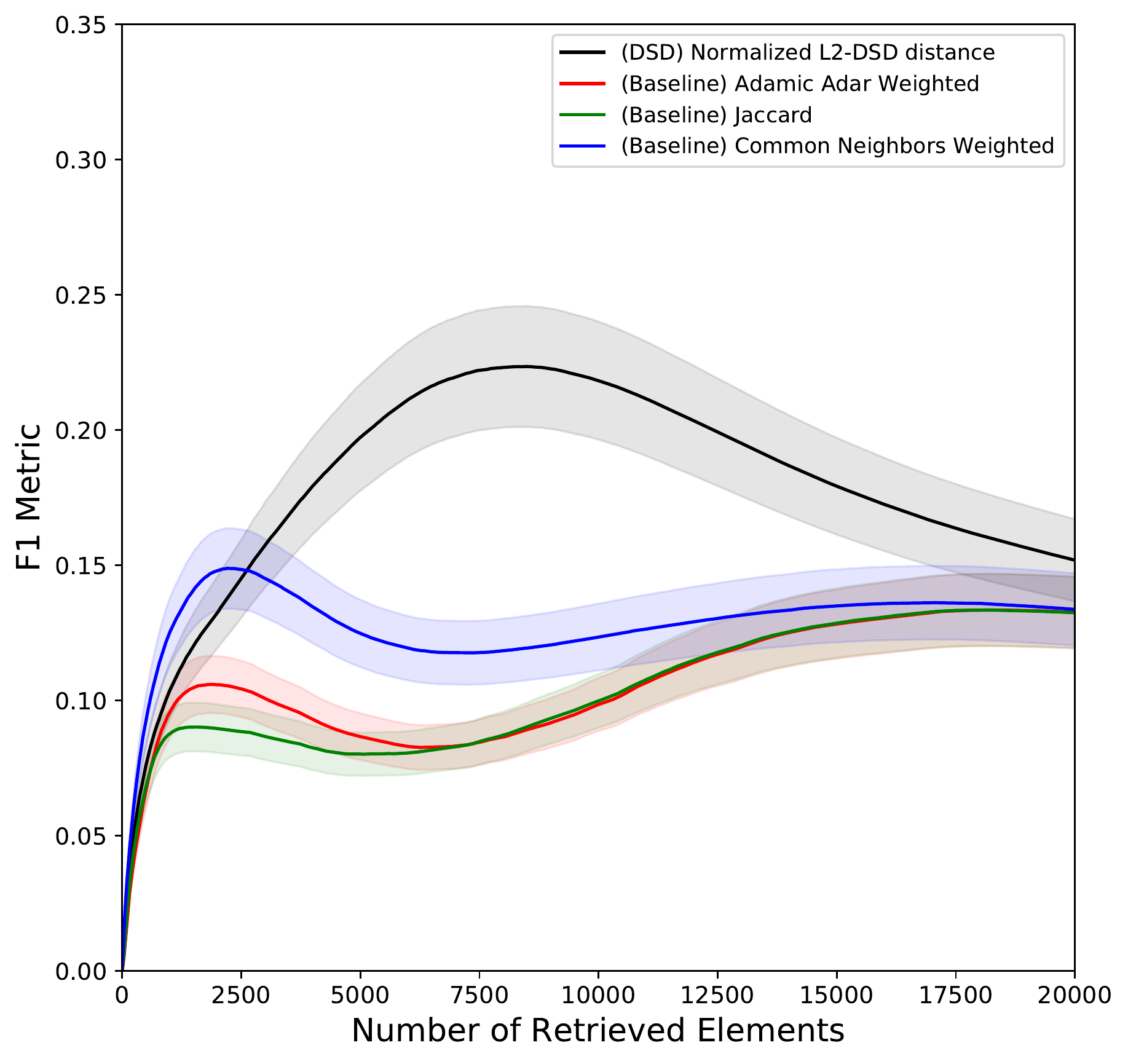}
	\subcaption{F1, heuristics}
\end{subfigure}
\begin{subfigure}[t]{.32\textwidth}
	\captionsetup{width=.95\linewidth}
	\includegraphics[width=\textwidth]{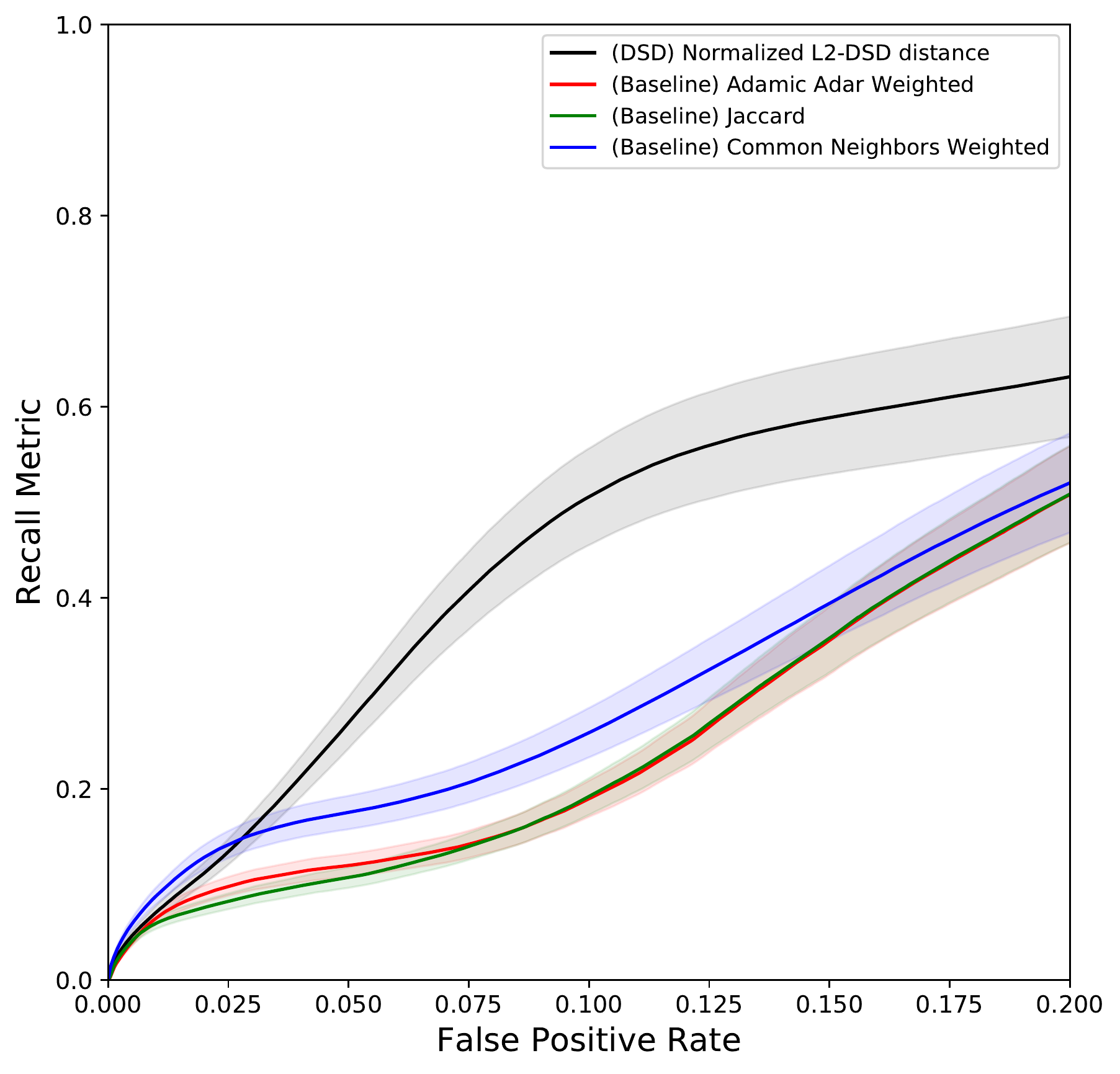}
	\subcaption{ROC, heuristics}
\end{subfigure}
\begin{subfigure}[t]{.32\textwidth}
	\captionsetup{width=.95\linewidth}
	\includegraphics[width=\textwidth]{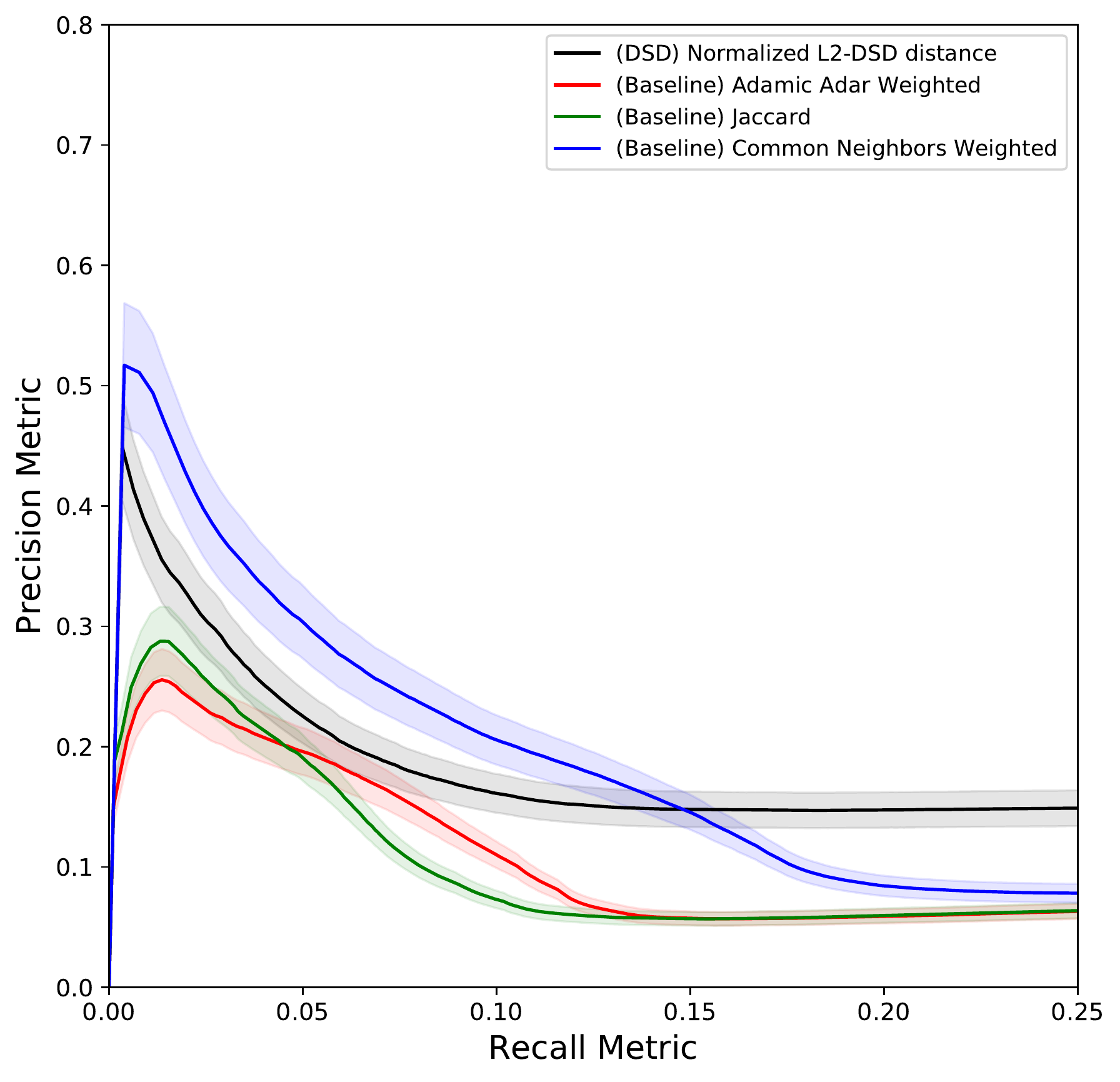}
	\subcaption{Precision-recall, heuristics}
\end{subfigure}
\\~\\
\begin{subfigure}[t]{.32\textwidth}
	\captionsetup{width=.95\linewidth}
	\includegraphics[width=\textwidth]{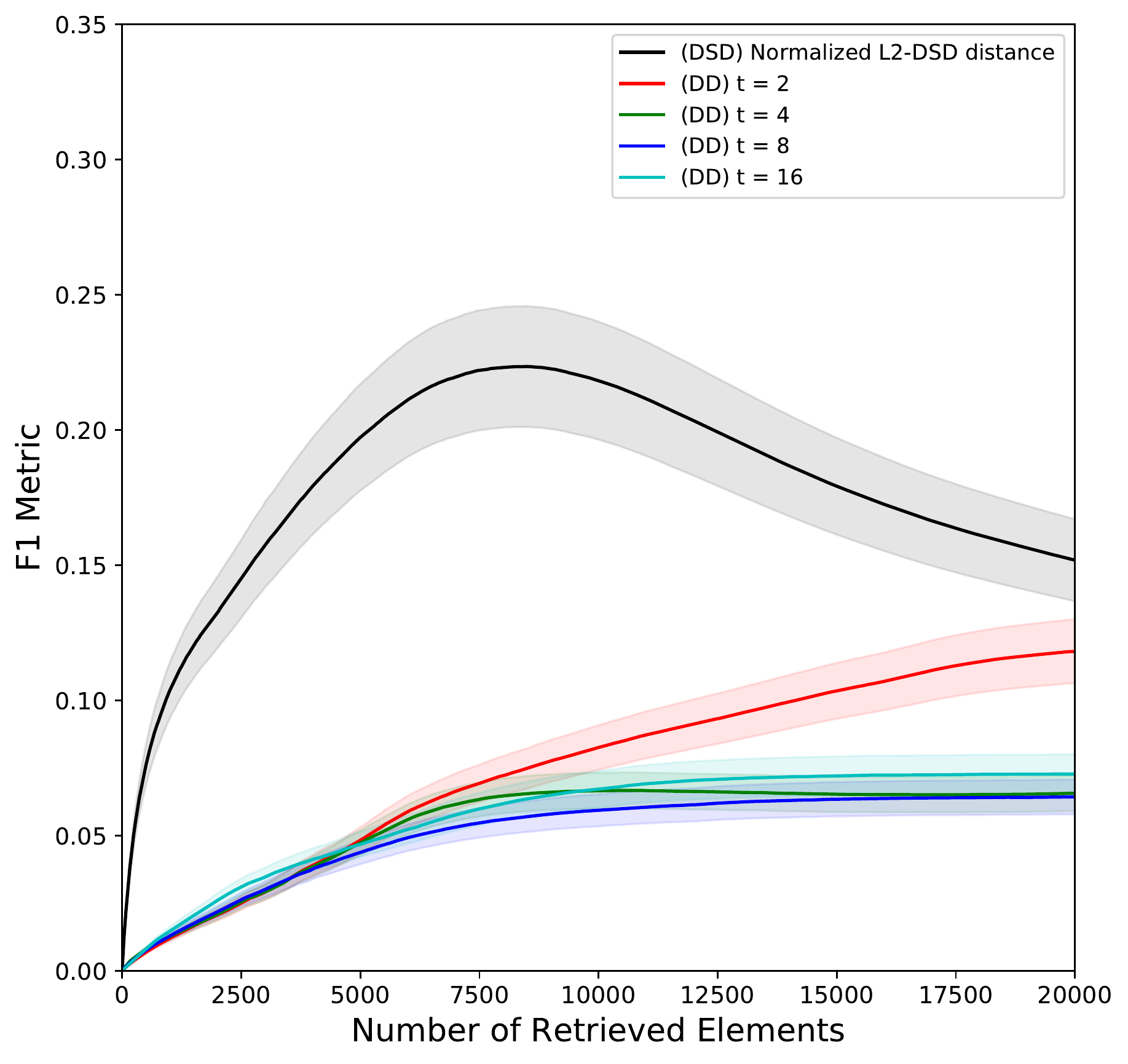}
	\subcaption{F1, diffusion distances}
\end{subfigure}
\begin{subfigure}[t]{.32\textwidth}
	\captionsetup{width=.95\linewidth}
	\includegraphics[width=\textwidth]{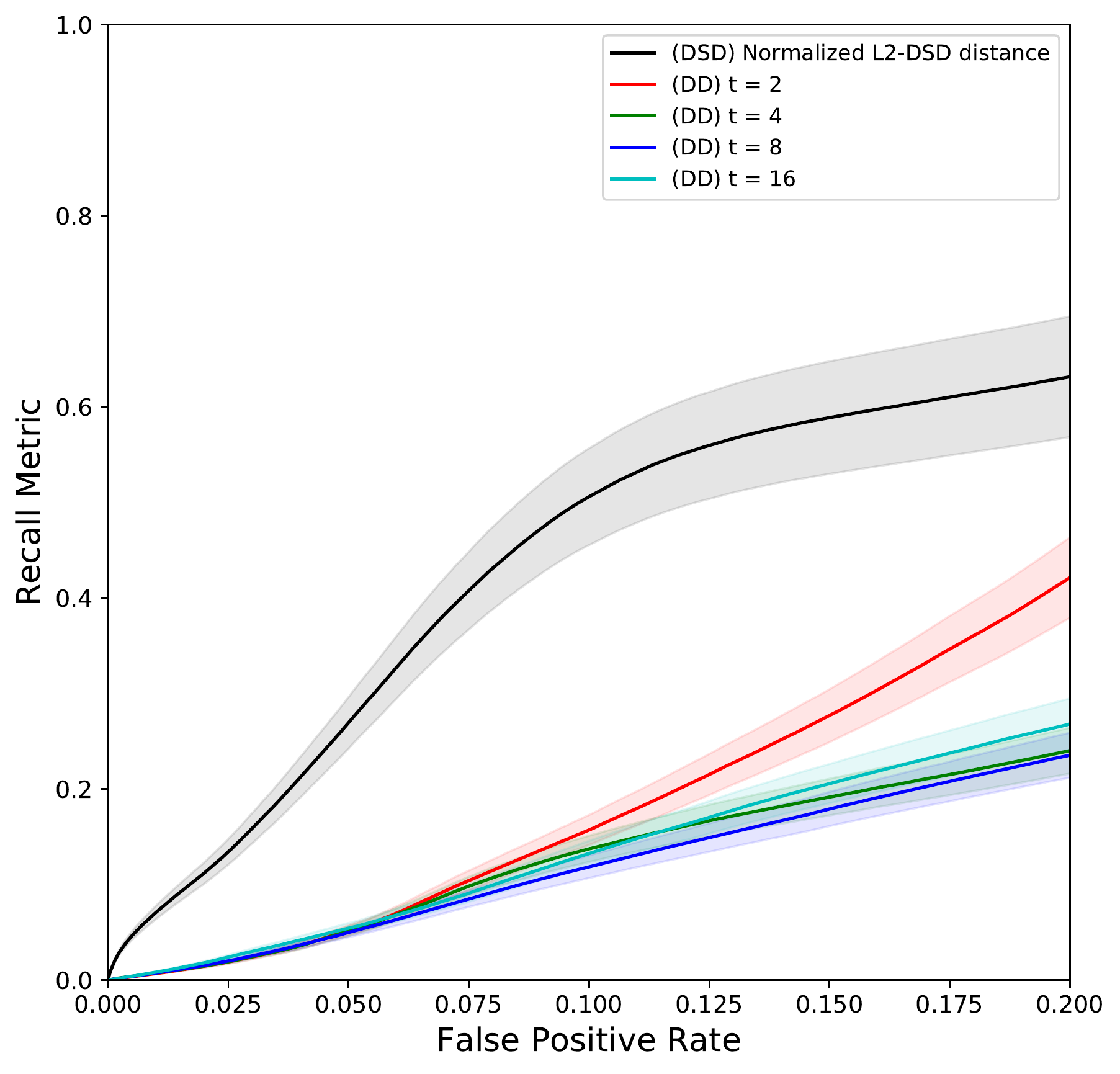}
	\subcaption{ROC, diffusion distances}
\end{subfigure}
\begin{subfigure}[t]{.32\textwidth}
	\captionsetup{width=.95\linewidth}
	\includegraphics[width=\textwidth]{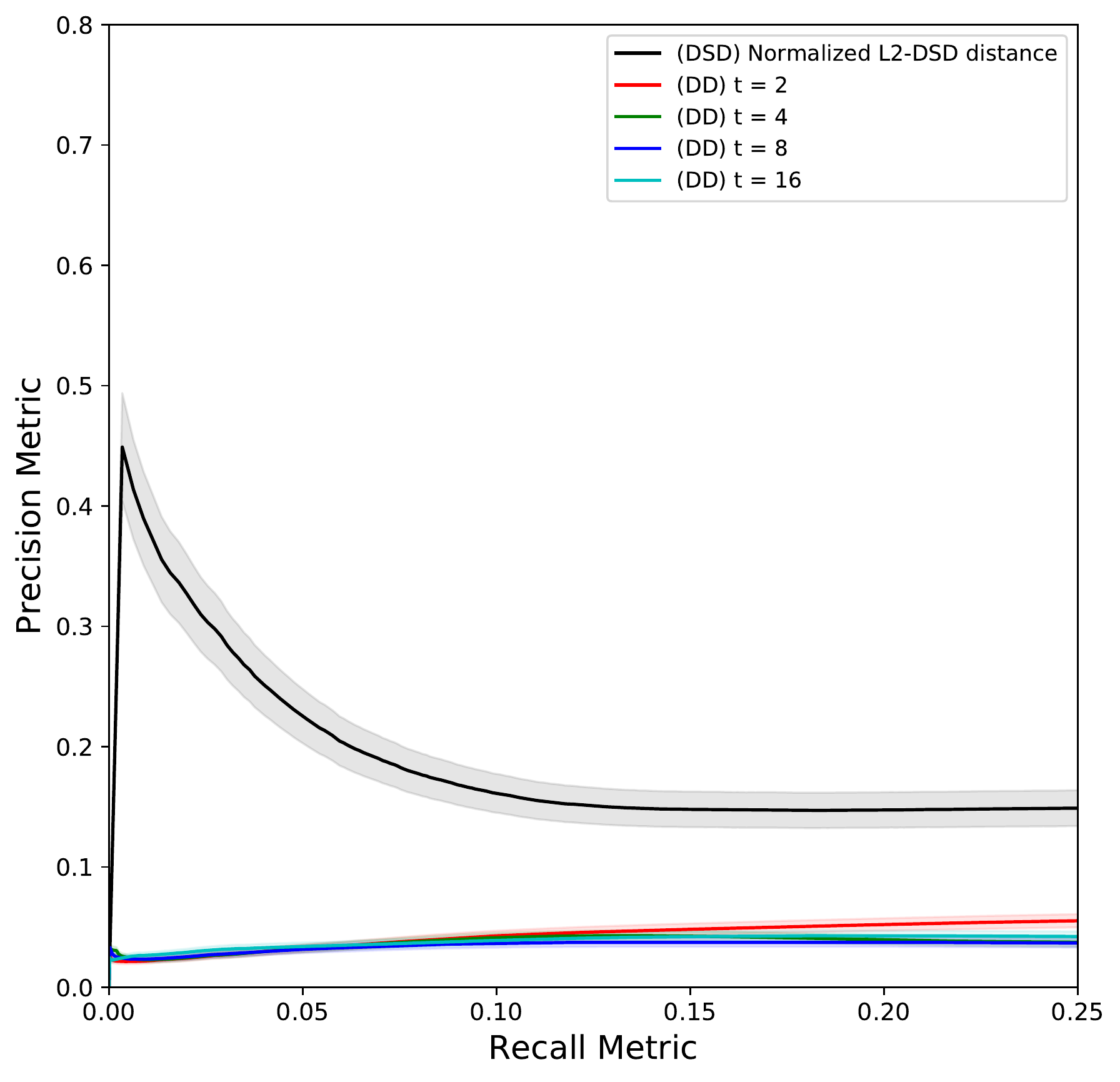}
	\subcaption{Precision-recall, diffusion distances}
\end{subfigure}
\\~\\
\begin{subfigure}[t]{.32\textwidth}
	\captionsetup{width=.95\linewidth}
	\includegraphics[width=\textwidth]{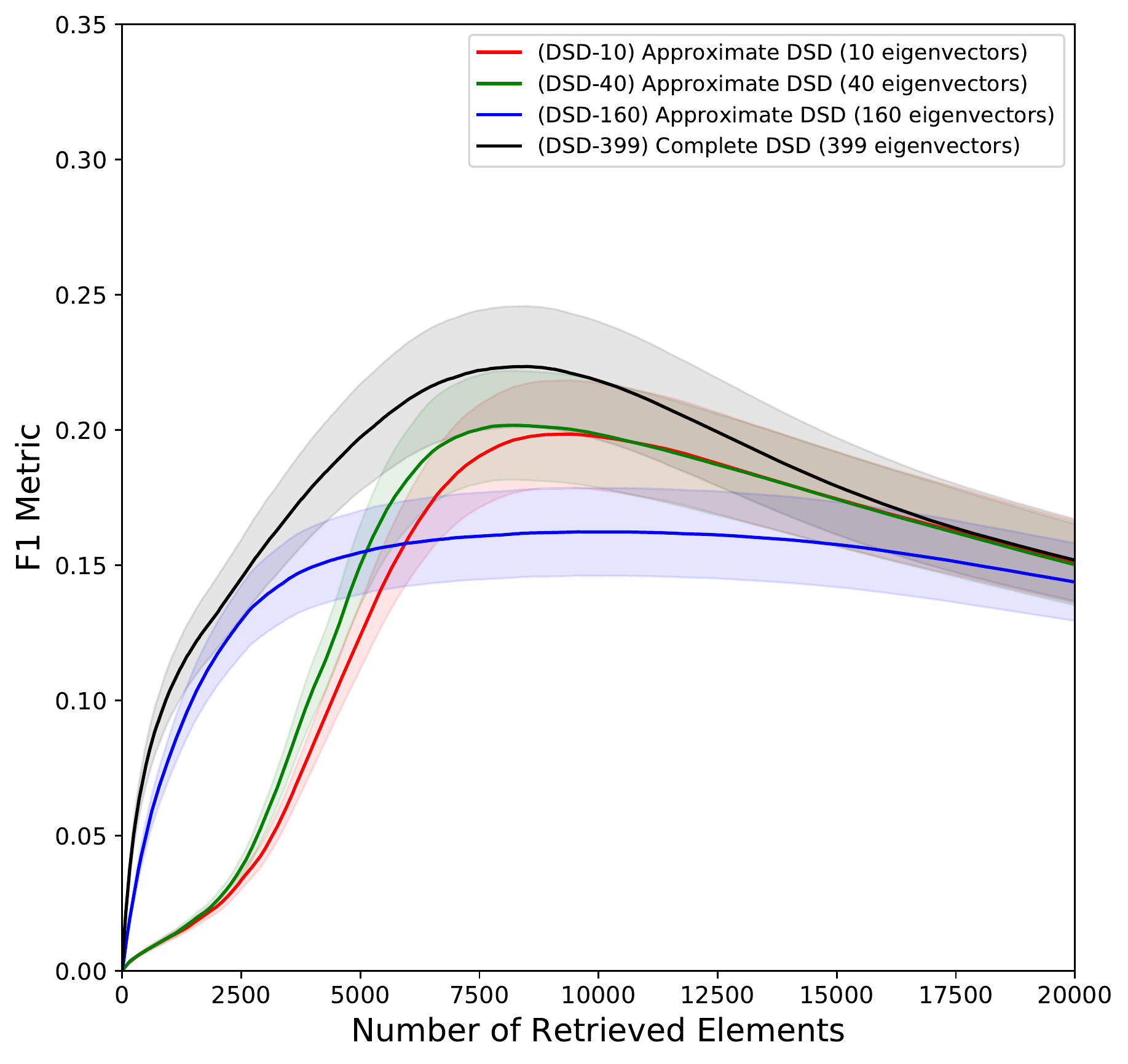}
	\subcaption{F1, Reduced DSD}
\end{subfigure}
\begin{subfigure}[t]{.32\textwidth}
	\captionsetup{width=.95\linewidth}
	\includegraphics[width=\textwidth]{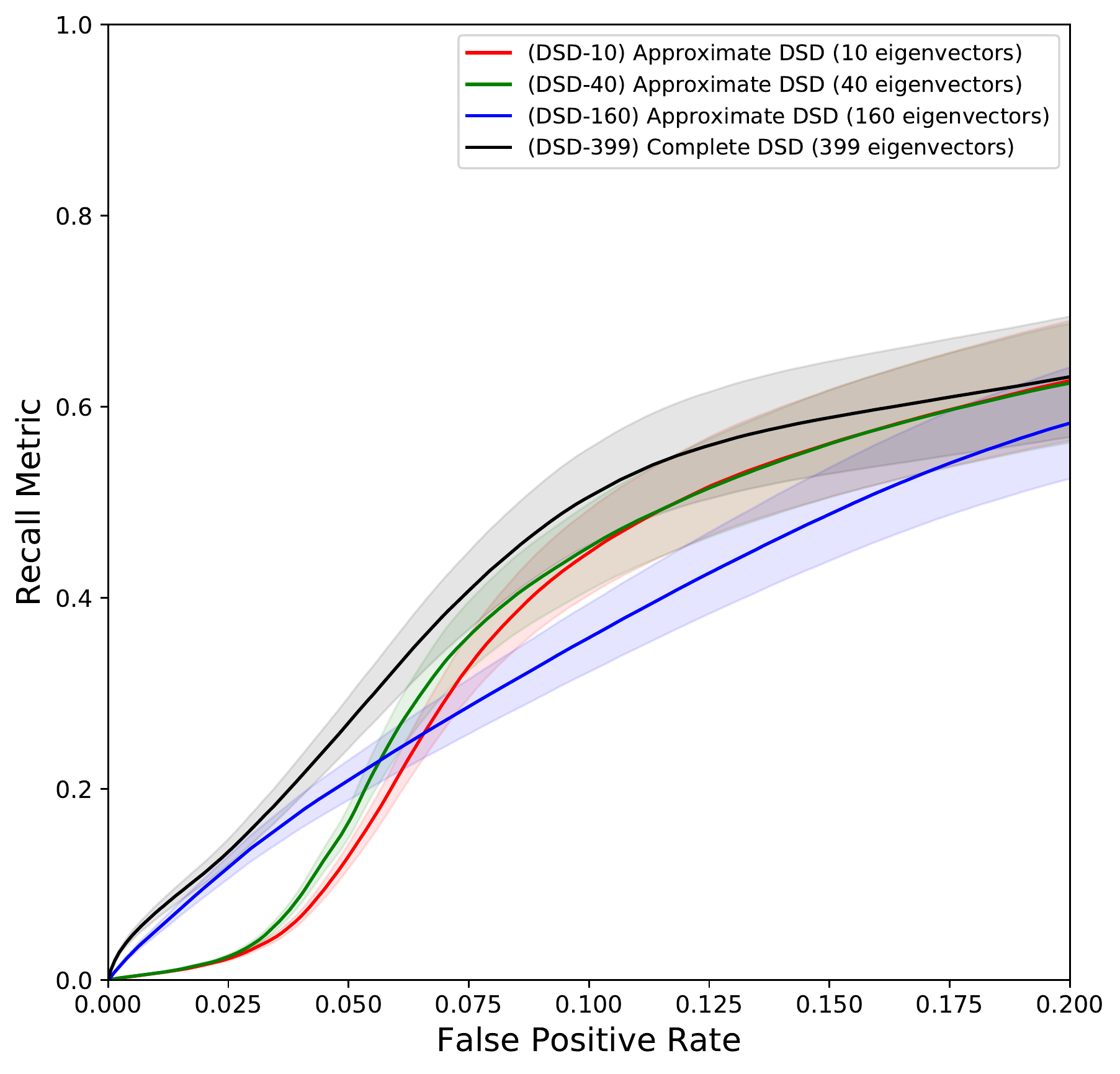}
	\subcaption{ROC, Reduced DSD}
\end{subfigure}
\begin{subfigure}[t]{.32\textwidth}
	\captionsetup{width=.95\linewidth}
	\includegraphics[width=\textwidth]{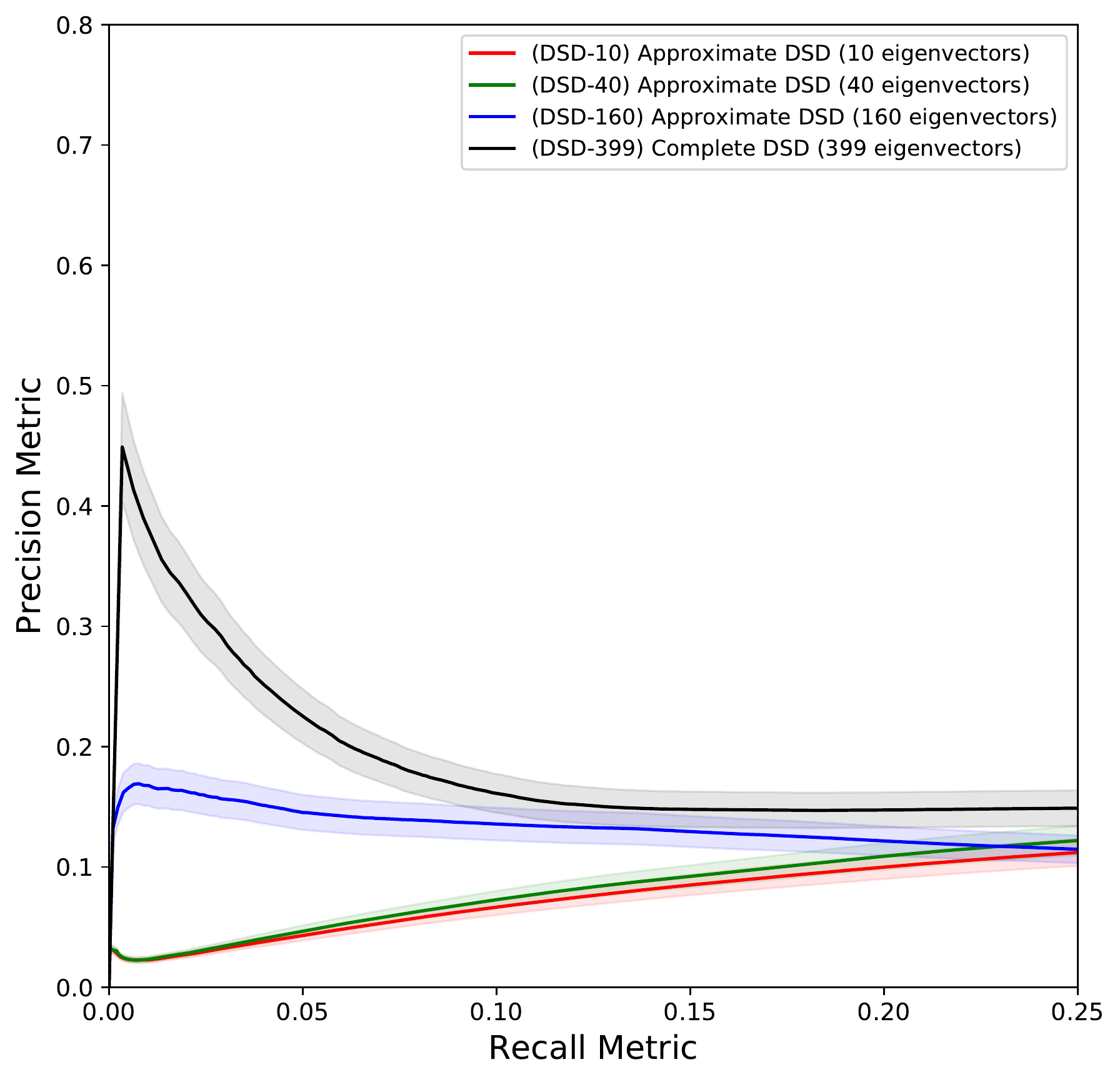}
	\subcaption{Precision-recall, Reduced DSD}
\end{subfigure}
\caption{ \label{fig:Link_Prediction_DREAM1} \emph{(a), (b), (c):}  Comparison of DSD to the different heuristic ranking methods on 100 randomly sampled graphs from DREAM1, each having 400 nodes.  On all of the average F1, ROC and precision-recall curves, we see DSD outperforms the heuristic methods.  \emph{(d), (e), (f):} Comparison of DSD to the diffusion distance ranking methods on 100 randomly sampled graphs from DREAM1, each having 400 nodes.  On all of the average F1, ROC and precision-recall curves, we see DSD outperforms the diffusion distances for an exponential range of time scales.  This indicates the importance of aggregating time scales when consider DREAM1. \emph{(g), (h), (i):} Comparison of DSD to the approximate DSD methods on 100 randomly sampled graphs from DREAM1, each having 400 nodes. Limiting the number of eigenvectors used significantly decreased the run time while not substantially degrading the empirical performance.  \textbf{Note:} The shaded regions demarcate plus and minus one standard deviation of the score.}
\end{figure}

\begin{figure}[!htb]
\centering
\begin{subfigure}[t]{.32\textwidth}
	\captionsetup{width=.95\linewidth}
	\includegraphics[width=\textwidth]{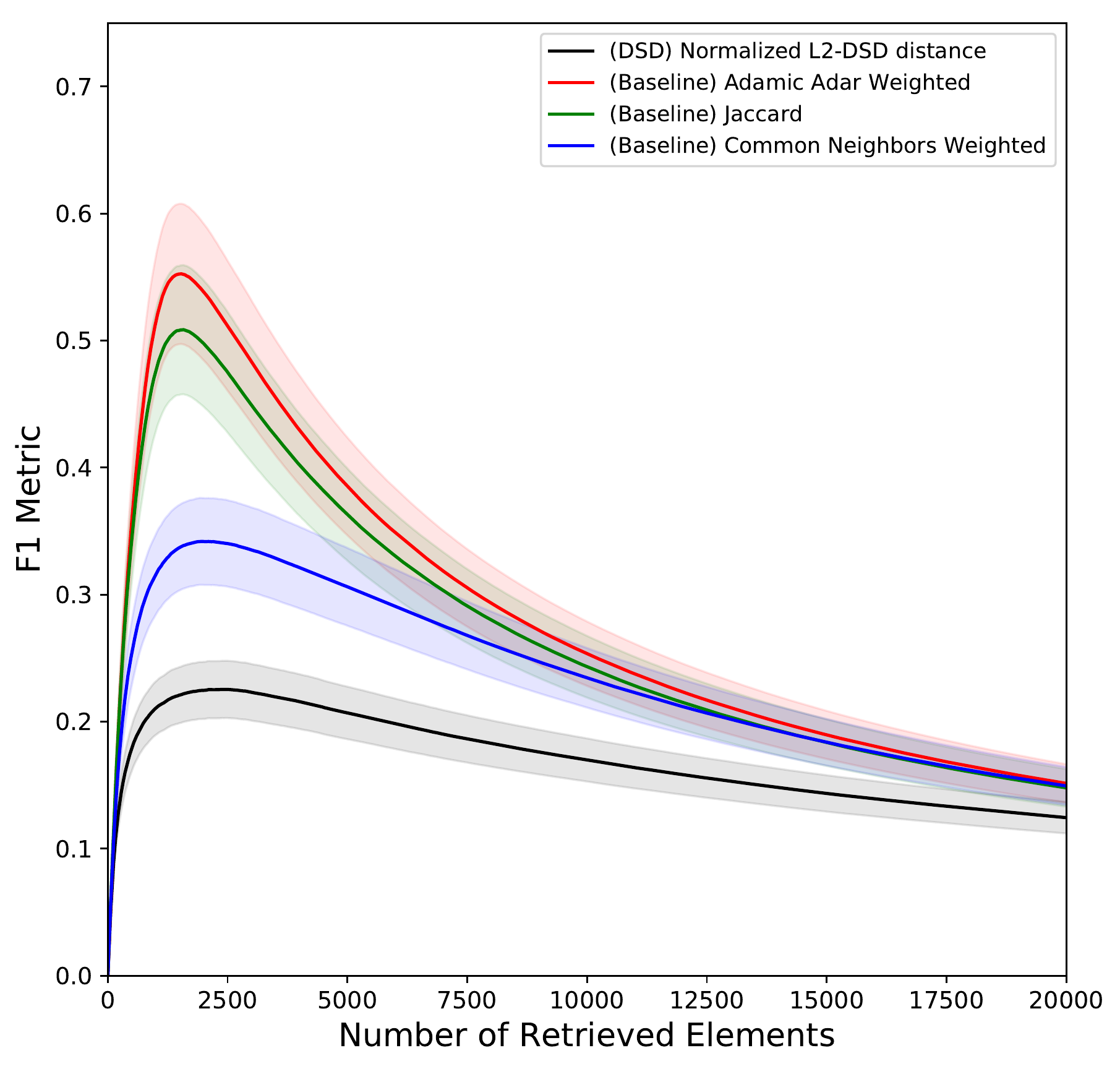}
	\subcaption{F1, heuristics}
\end{subfigure}
\begin{subfigure}[t]{.32\textwidth}
	\captionsetup{width=.95\linewidth}
	\includegraphics[width=\textwidth]{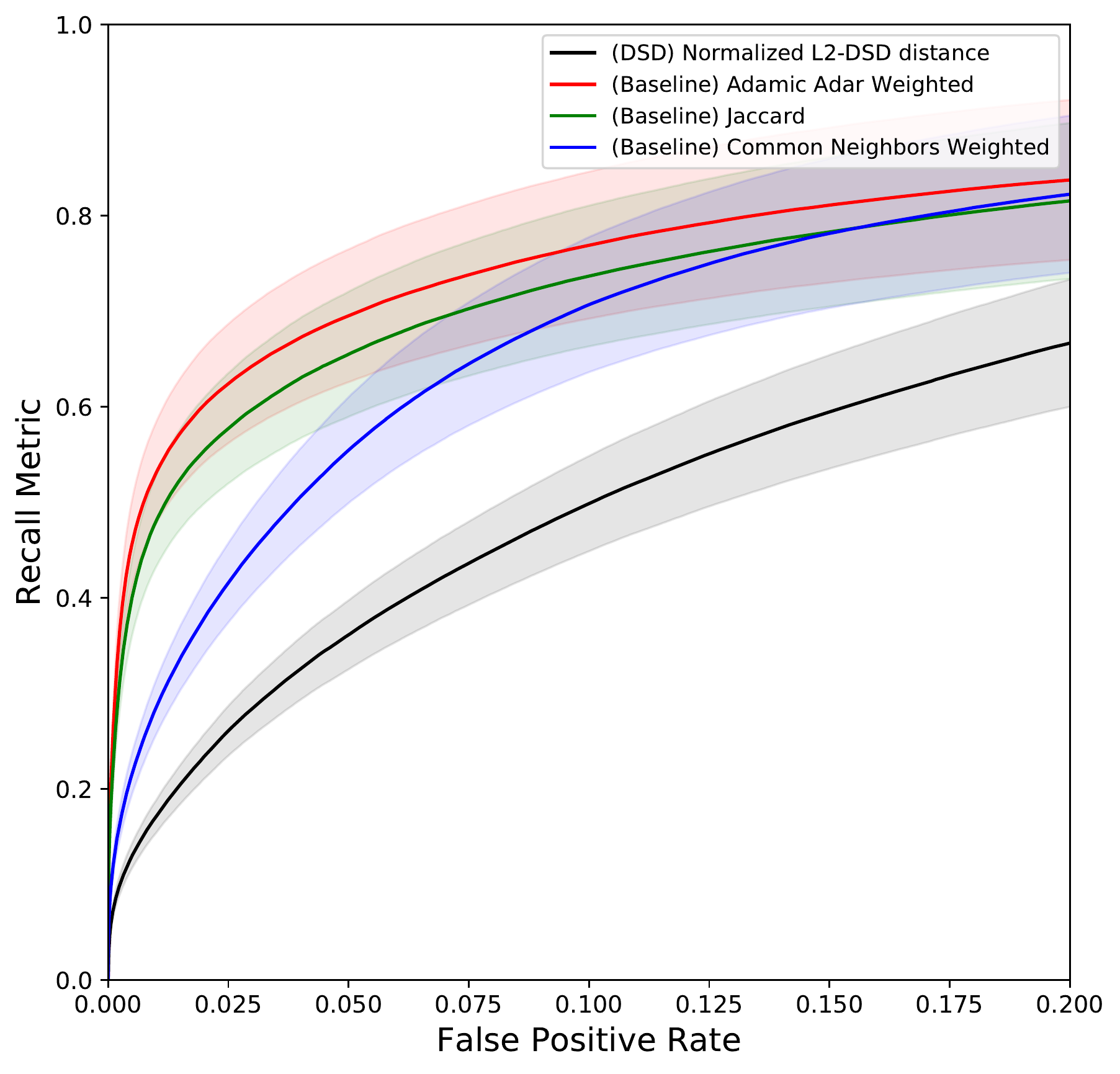}
	\subcaption{ROC, heuristics}
\end{subfigure}
\begin{subfigure}[t]{.32\textwidth}
	\captionsetup{width=.95\linewidth}
	\includegraphics[width=\textwidth]{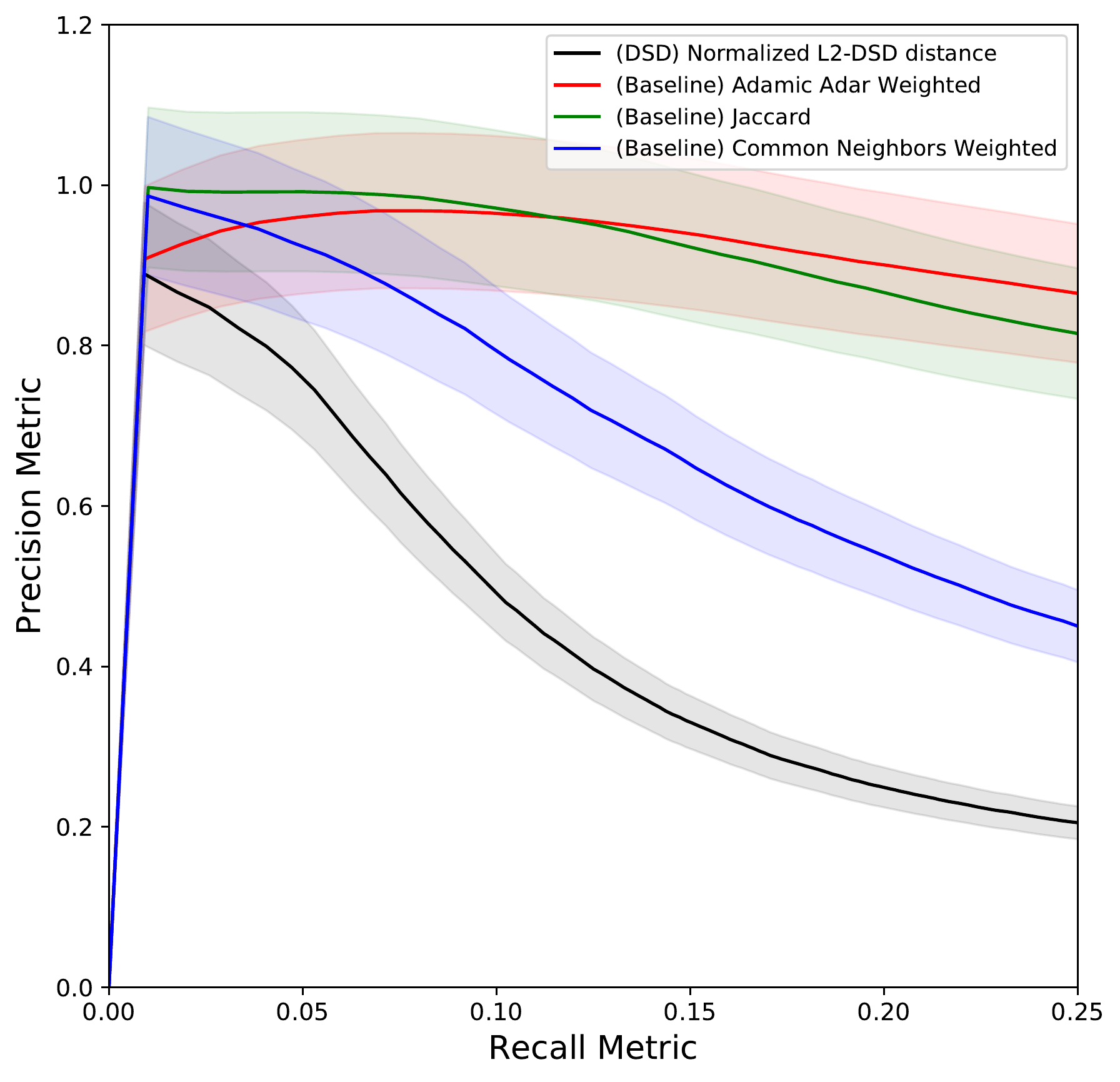}
	\subcaption{Precision-recall, heuristics}
\end{subfigure}
\\~\\
\begin{subfigure}[t]{.32\textwidth}
	\captionsetup{width=.95\linewidth}
	\includegraphics[width=\textwidth]{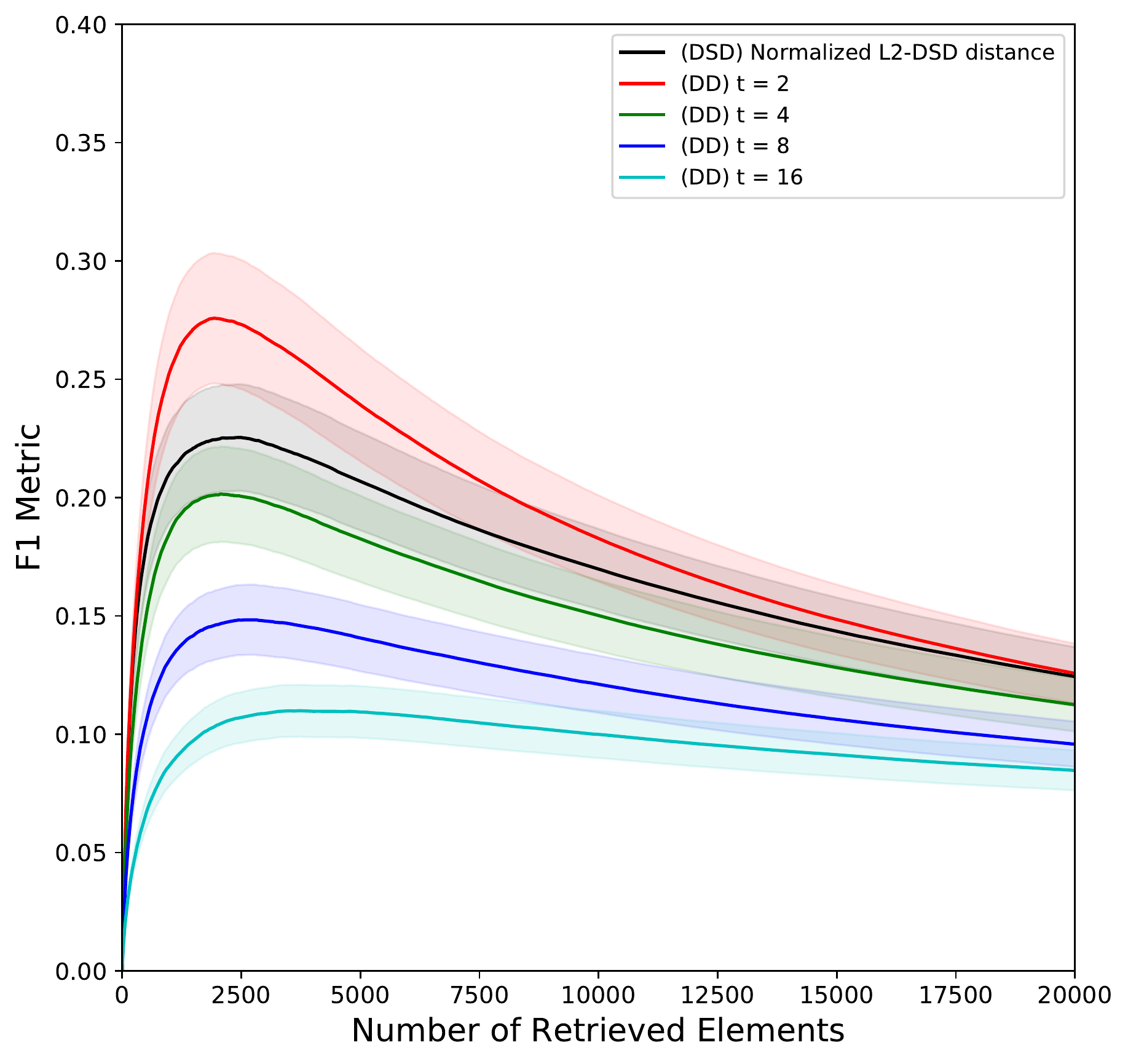}
	\subcaption{F1, diffusion distances}
\end{subfigure}
\begin{subfigure}[t]{.32\textwidth}
	\captionsetup{width=.95\linewidth}
	\includegraphics[width=\textwidth]{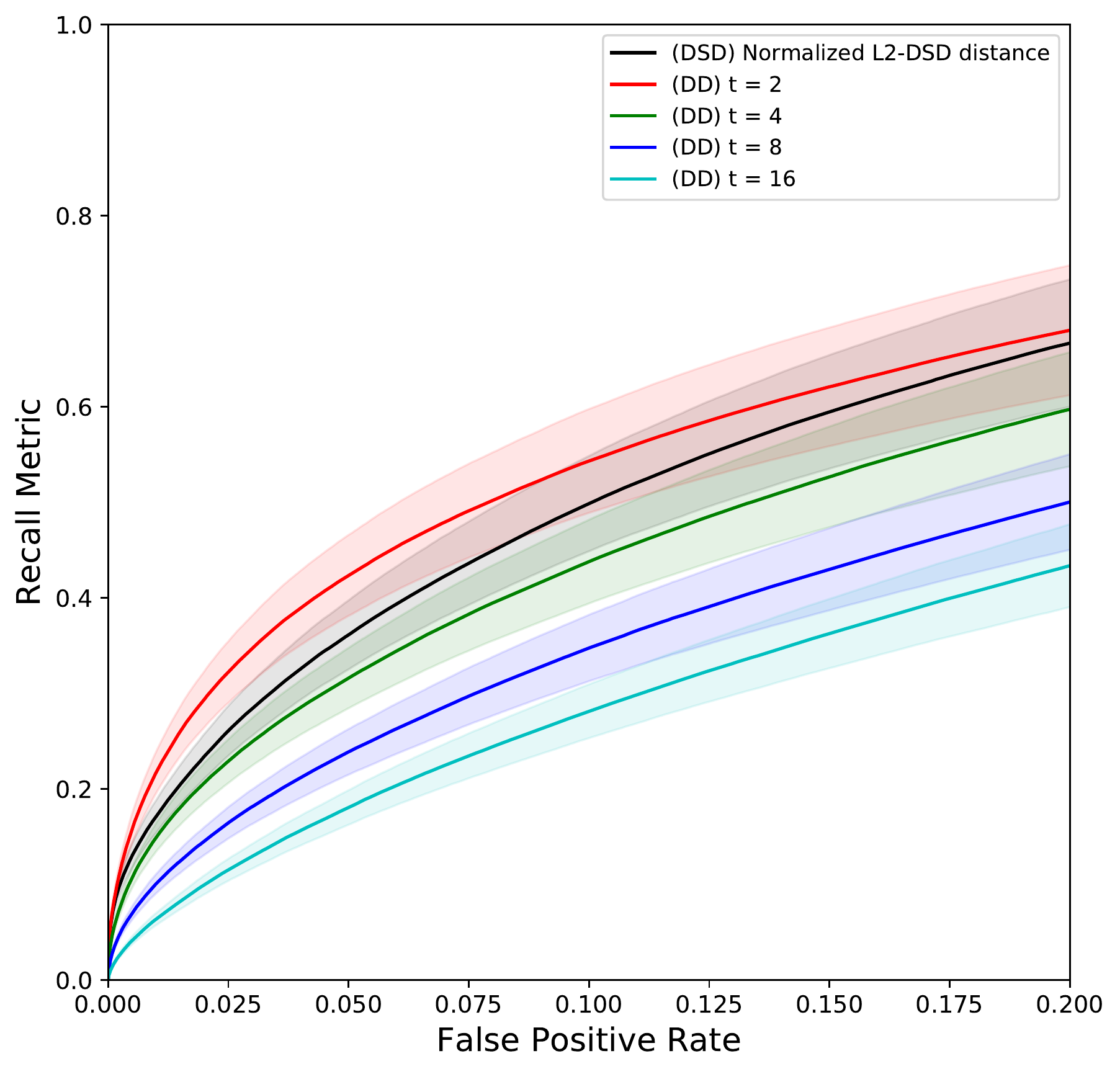}
	\subcaption{ROC, diffusion distances}
\end{subfigure}
\begin{subfigure}[t]{.32\textwidth}
	\captionsetup{width=.95\linewidth}
	\includegraphics[width=\textwidth]{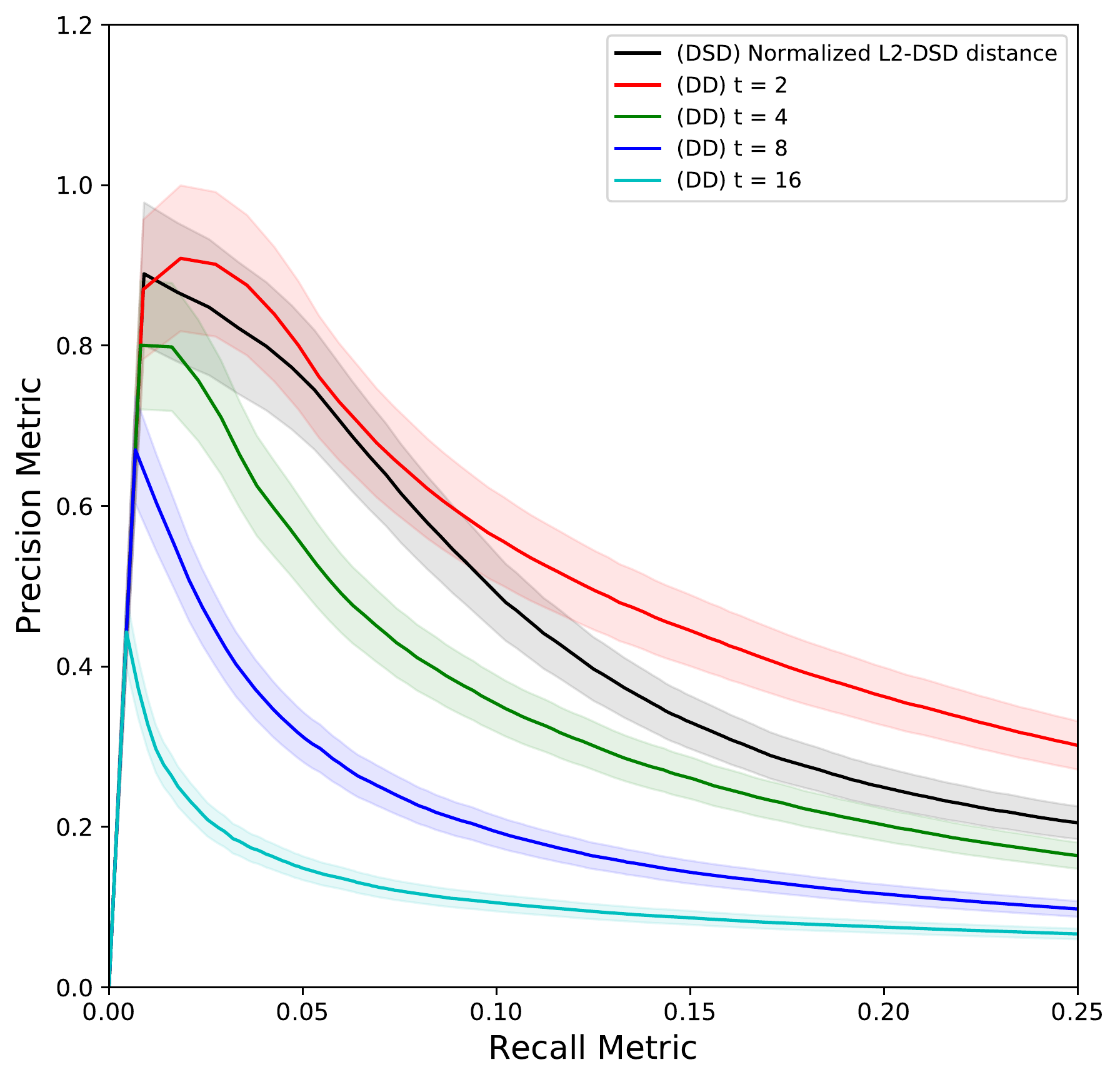}
	\subcaption{Precision-recall, diffusion distances}
\end{subfigure}
\\~\\
\begin{subfigure}[t]{.32\textwidth}
	\captionsetup{width=.95\linewidth}
	\includegraphics[width=\textwidth]{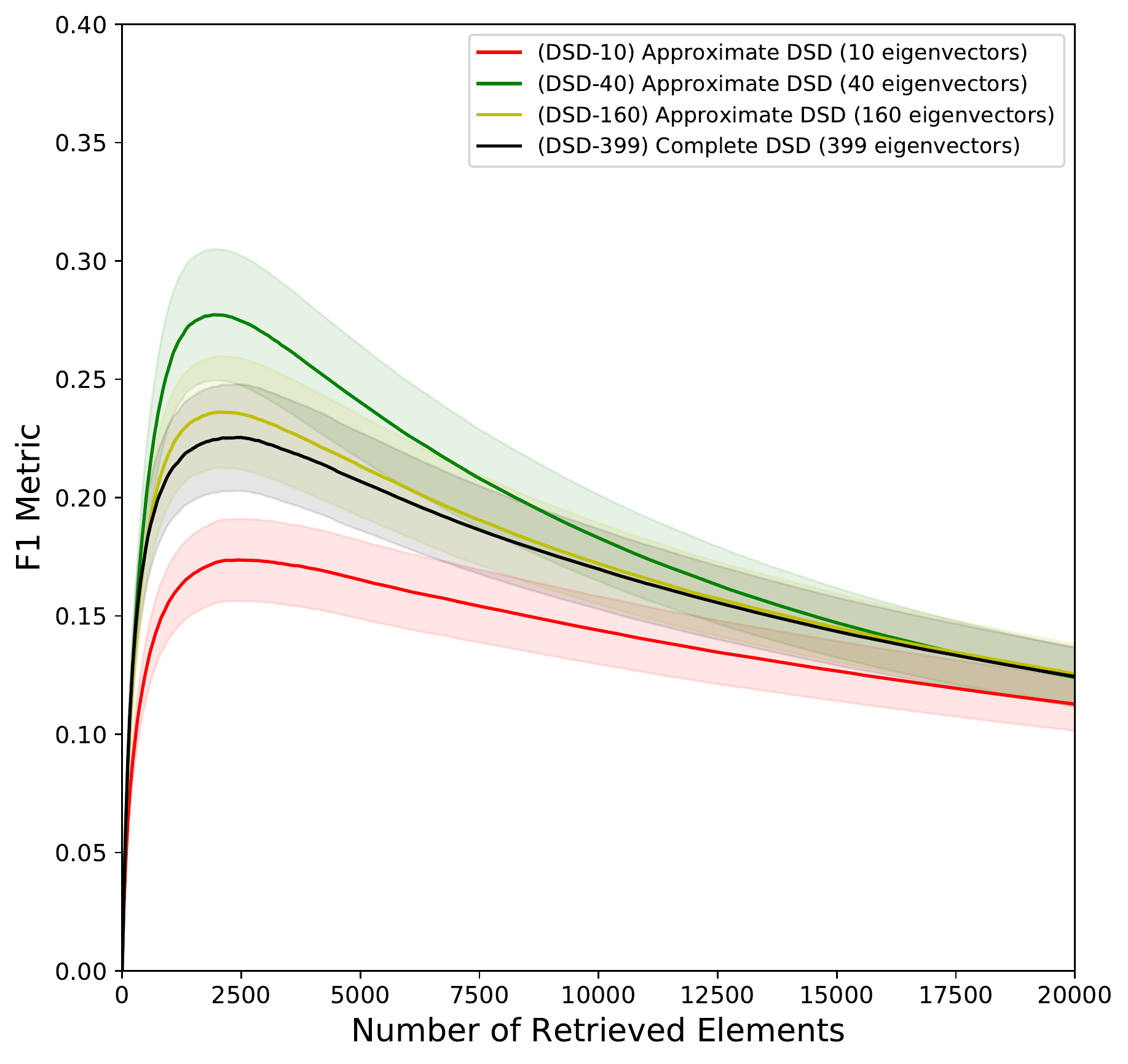}
	\subcaption{F1, Reduced DSD}
\end{subfigure}
\begin{subfigure}[t]{.32\textwidth}
	\captionsetup{width=.95\linewidth}
	\includegraphics[width=\textwidth]{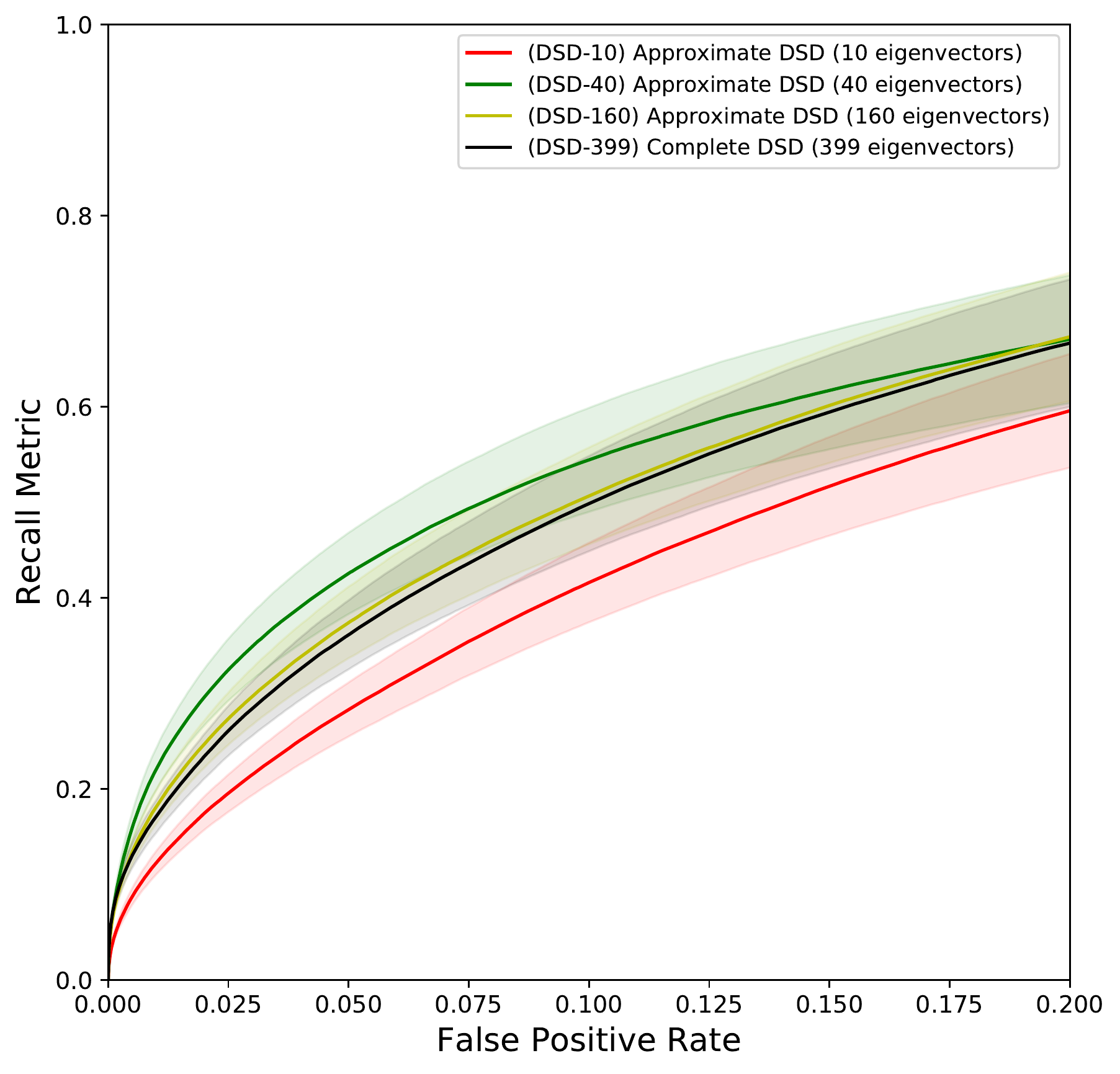}
	\subcaption{ROC, Reduced DSD}
\end{subfigure}
\begin{subfigure}[t]{.32\textwidth}
	\captionsetup{width=.95\linewidth}
	\includegraphics[width=\textwidth]{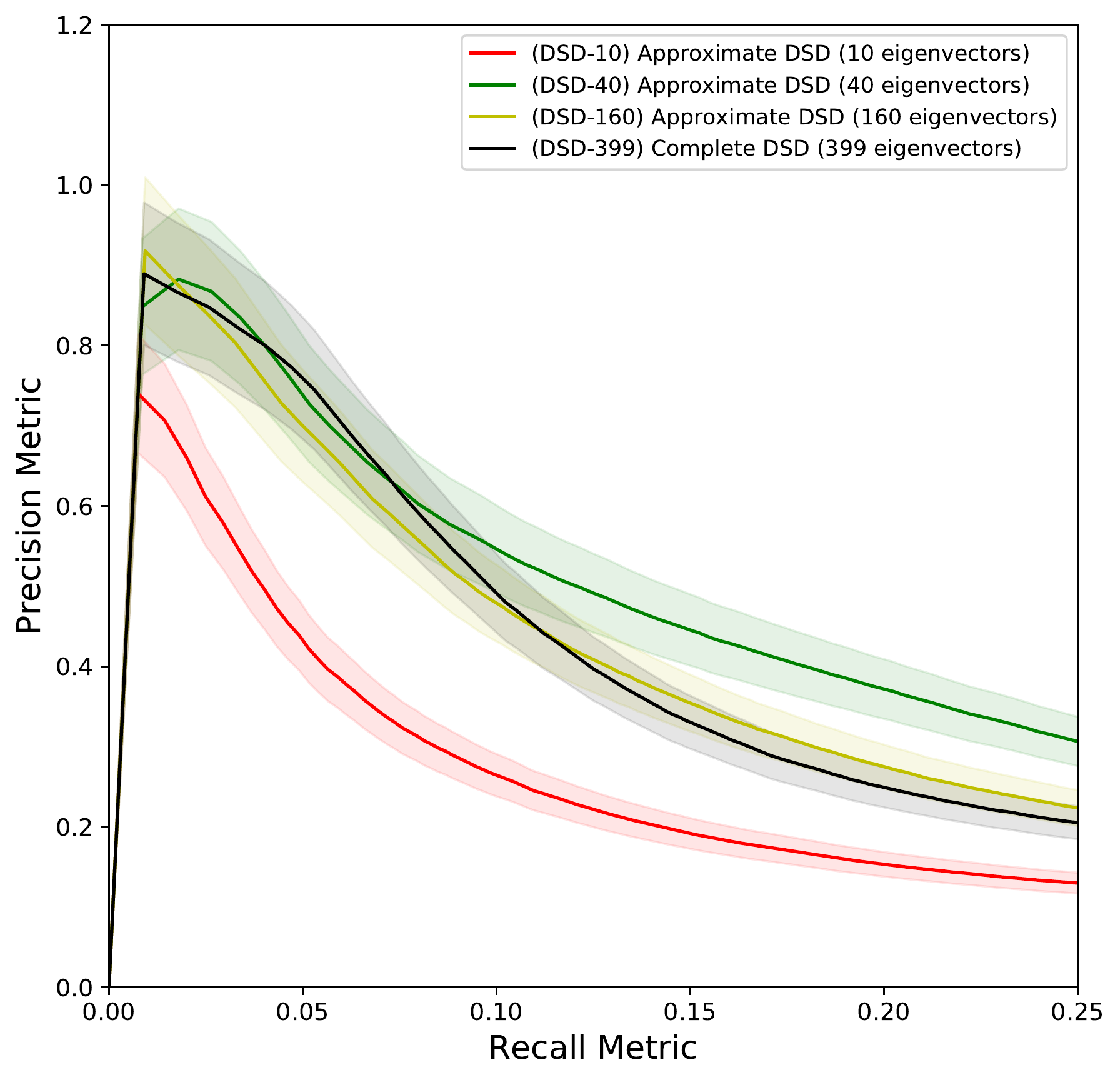}
	\subcaption{Precision-recall, Reduced DSD}
\end{subfigure}
\caption{ \label{fig:Link_Prediction_DREAM2} \emph{(a), (b), (c):}  Comparison of DSD to the different heuristic ranking methods on 100 randomly sampled graphs from DREAM2, each having 400 nodes.  On all of the average F1, ROC and precision-recall curves, we see DSD is outperformed by the heuristic methods.  \emph{(d), (e), (f):} Comparison of DSD to the diffusion distance ranking methods on 100 randomly sampled graphs from DREAM2, each having 400 nodes.  In terms of average F1, ROC and precision-recall curves, we see DSD mostly outperforms the diffusion distances on an exponential range of time scales.  This indicates the importance of aggregating time scales when consider DREAM2. \emph{(g), (h), (i):} Comparison of DSD to the approximate DSD methods on 100 randomly sampled graphs from DREAM2, each having 400 nodes. Limiting the number of eigenvectors improves results while also lowering computational complexity. \textbf{Note:} The shaded regions demarcate plus and minus one standard deviation of the score.}
\end{figure}

\subsubsection{Function Prediction}

Function prediction is a  classical topic in computational biology. While we know the roles of some proteins in the cell, protein-protein association networks can assist in making predictions for functional roles of unknown proteins, since proteins of similar function should cluster (under the appropriate metric) in the network. DSD was introduced in the context of function prediction ~\cite{Cao2013_Going}, and it was shown that a majority vote for the most popular function among the $k$ closest neighbors of a vertex in DSD distance performed well in predicting the function of that vertex in a yeast protein-protein interaction network. Here, we evaluate DSD for function prediction in the DREAM1 and DREAM2 human networks. Functional labels are taken from the popular Gene Ontology (GO) database ~\cite{gene2019gene} (downloaded from FuncAssociate3.0~\cite{berriz2009next} on 02/12/19), where we restrict the GO labels we consider in our cross-validation experiments to those that are neither too general nor too specific: in particular, we consider all labels from the Biological Process hierarchy and Molecular Function hierarchy that annotate between 100 and 500 proteins in the  networks.  

On the largest connected components of the biological networks, we use a five-fold cross validation to demonstrate the accuracy rate of function prediction of Gene Ontology (GO) labels based on the DSD distance metric computed as in Algorithm \ref{alg:DSD}. For each node in the testing set, we generate its $k$ nearest neighbors ($k$NN) based on the DSD metric, then use majority voting weighted by $\frac{1}{\DSD (x_{i}, \, x_{j}) }$ to predict its function; note that other weighting functions could be used, and the selection of optimal weighting functions is the subject of ongoing research. The  most frequently voted functional label is chosen as the predicted function label, and it is marked correct if this is one of the functional labels assigned to the node in the test set. We report the percent of nodes assigned a correct functional label as the accuracy score. This is summarized in Algorithm~\ref{alg:fun_pred}.

\

\begin{algorithm}[htp!]
	\caption{Function prediction with majority voting} \label{alg:fun_pred}
	\begin{algorithmic}[1]
	    \FOR{every node $x$ in the testing set} 
	    \STATE Collect its $k$ nearest neighbors $\{v_i\}_{i=1}^{k}$ based on DSD metric (we use $k=10$ in our experiments). 
	    \STATE Each neighbor that is also in the training set votes for its labels, weighted by $\frac{1}{\DSD (x, \, v_{i}) }$. 
	    \STATE Use the most voted label as the predicted function label $f(x)$. 
	    \STATE Mark the node correct if $f(x)$ is one of the labels of $x$. 
	    \ENDFOR
		\STATE  Compute the accuracy score as the percentage of nodes that get a correct functional label. 
	\end{algorithmic}
\end{algorithm}

\

Next, we investigate the accuracy and efficiency of function prediction on the same networks with approximate DSD computed by Algorithm \ref{alg:approx-DSD} and compare the numerical result to the one computed with exact DSD, using $k = 10$ nearest neighbors.  In our experiments, we use the MATLAB build-in eigensovler \texttt{eigs}, which is based on ARPACK~\cite{lehoucq1998arpack}.

In Figure \ref{fig: func_pred_approx_and_exact_DSD} the accuracy rate of function prediction of DREAM1 and DREAM2 network based on approximated DSD with different numbers of eigenvectors is compared to the accuracy result based on exact DSD (the red horizontal line). Function prediction with the exact or approximated DSD metric demonstrates a substantial increase in accuracy  compared to the best-studied baseline method, majority vote~\cite{Cao2013_Going}, where all direct neighbors in the original graph vote for their functional labels with equal weight for each label (the grey horizontal line). Compared to the exact DSD metric, the accuracy of the  approximated DSD metric initially increases as we include more eigenvectors, and it almost always outperforms exact DSD for both DREAM1 and DREAM2 networks, and also for both the molecular function and biological process GO hierarchies.  After that, the prediction accuracy decreases in all cases when we include the high-frequency eigenvectors, which are possibly corrupted by noise. We note that the approximate DSD methods that we employ are not immune to numerical error, which prevents the approximate DSD from perfectly converging to the exact DSD when all the eigenvalues are used (note the numerical instability at the far right of each of the graphs of Figure \ref{fig: func_pred_approx_and_exact_DSD}).
These numerical errors in the eigendecomposition depend on the machine epsilon, but also on the condition number of the matrix, the magnitude of the eigenvalues, as well as the stopping criterion for the eigensolver, and are most acute for the highest frequency eigenvectors. Thus when we truncate the 
number of eigenvectors used to the lowest frequency eigenvectors, the numerical instability is much less of a factor, and the same picture would likely be observed if we were computing exact DSD projected to the lower-dimensional eigenspace. 

\begin{figure}[!htb]
\centering
\begin{subfigure}[t]{.24\textwidth}
	\captionsetup{width=.95\linewidth}
	\includegraphics[width=\textwidth]{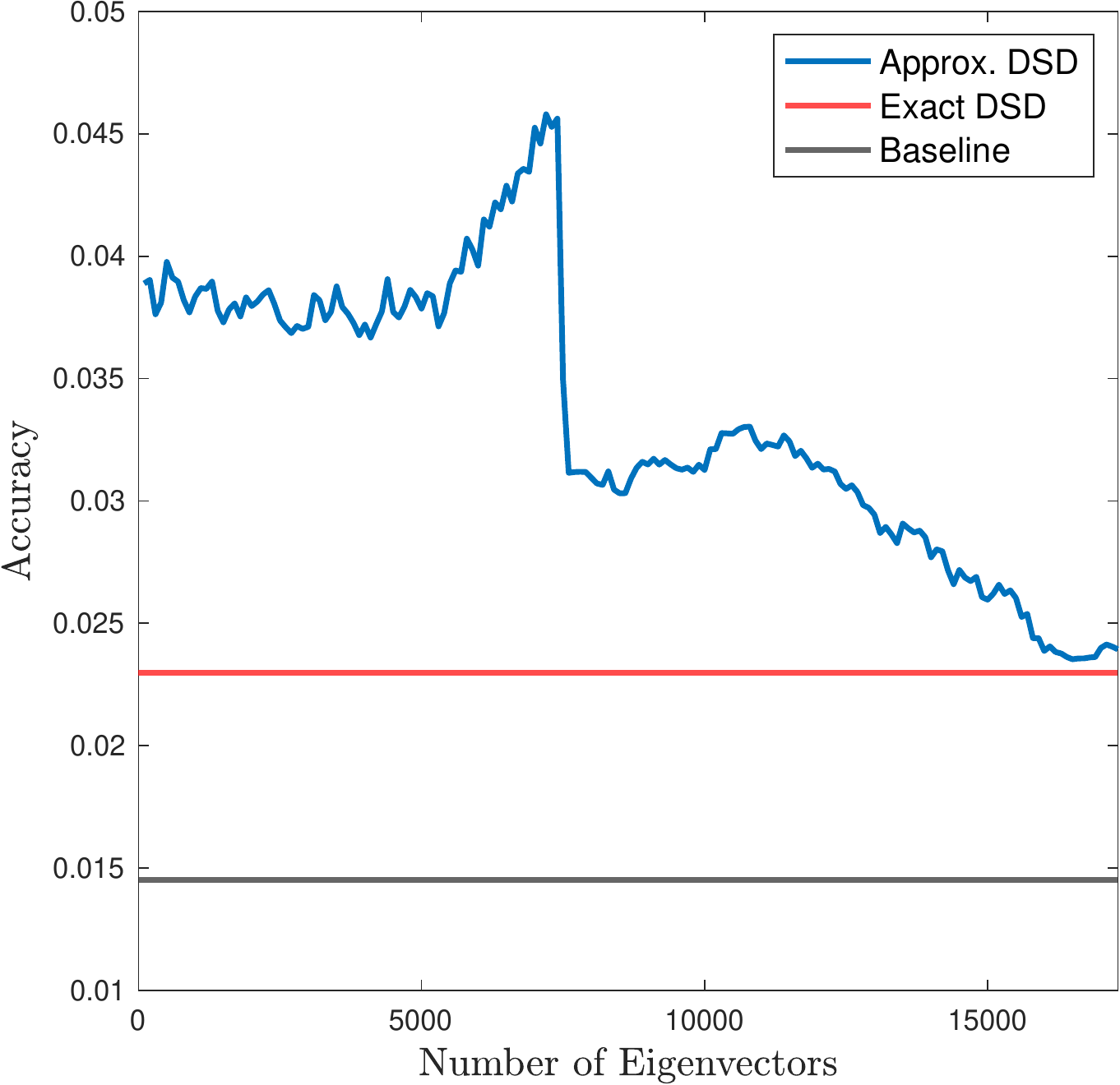}
	\subcaption{DREAM1, Biological Process Labels}
\end{subfigure}
\begin{subfigure}[t]{.24\textwidth}
	\captionsetup{width=.95\linewidth}
	\includegraphics[width=\textwidth]{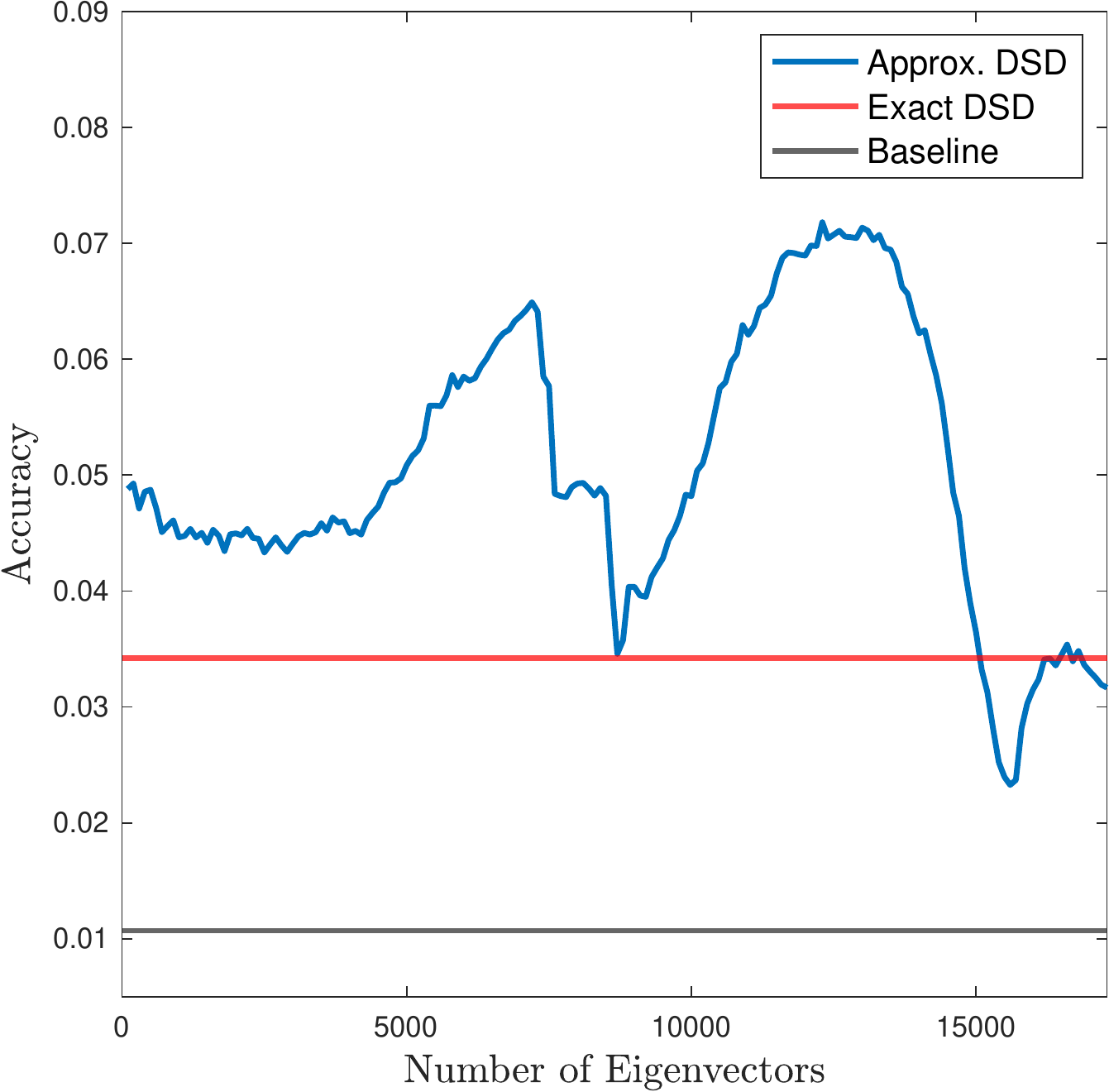}
	\subcaption{DREAM1, Molecular Function Labels}
\end{subfigure}
\begin{subfigure}[t]{.24\textwidth}
	\captionsetup{width=.95\linewidth}
	\includegraphics[width=\textwidth]{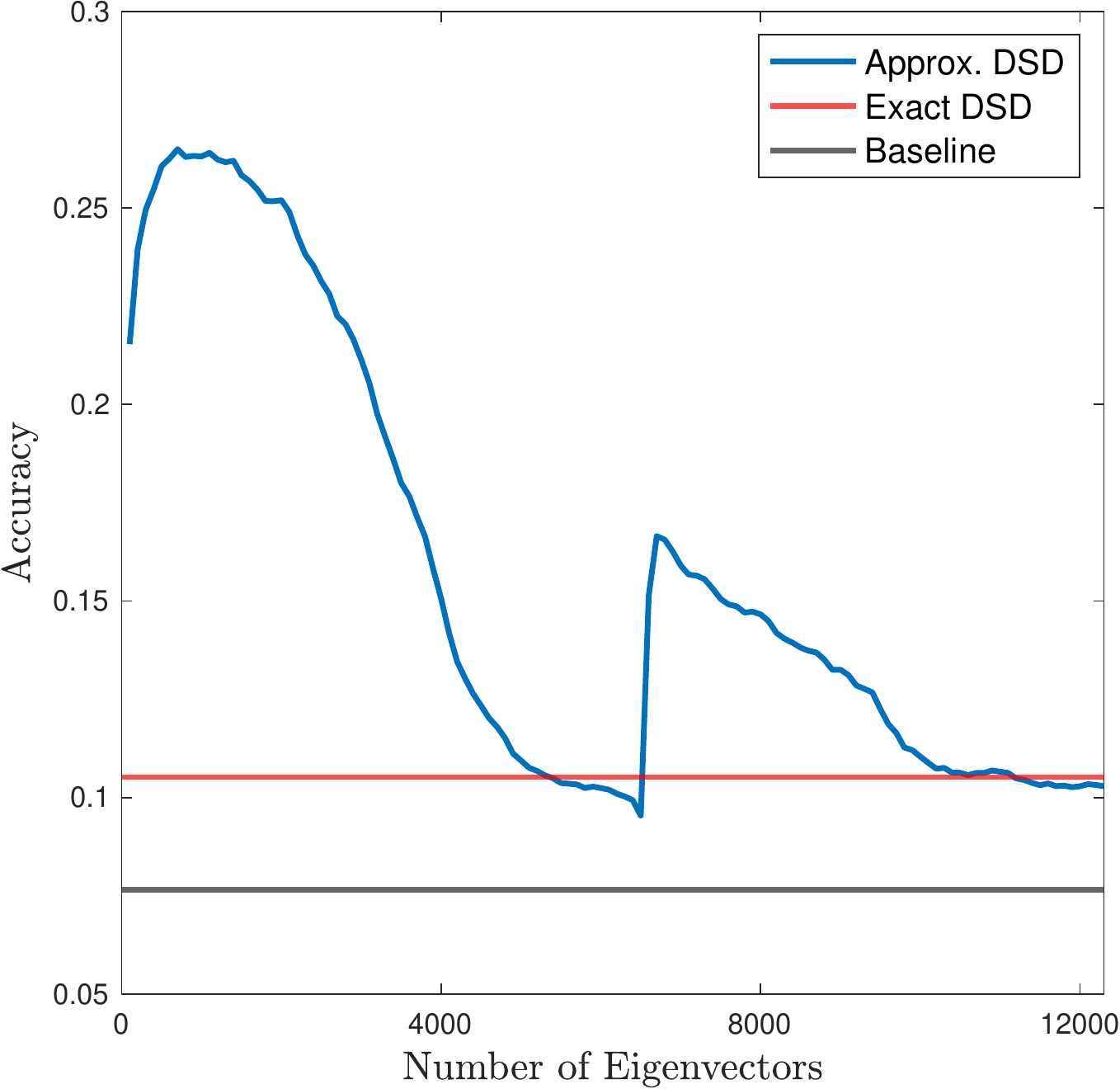}
	\subcaption{DREAM2, Biological Process Labels}
\end{subfigure}
\begin{subfigure}[t]{.24\textwidth}
	\captionsetup{width=.95\linewidth}
	\includegraphics[width=\textwidth]{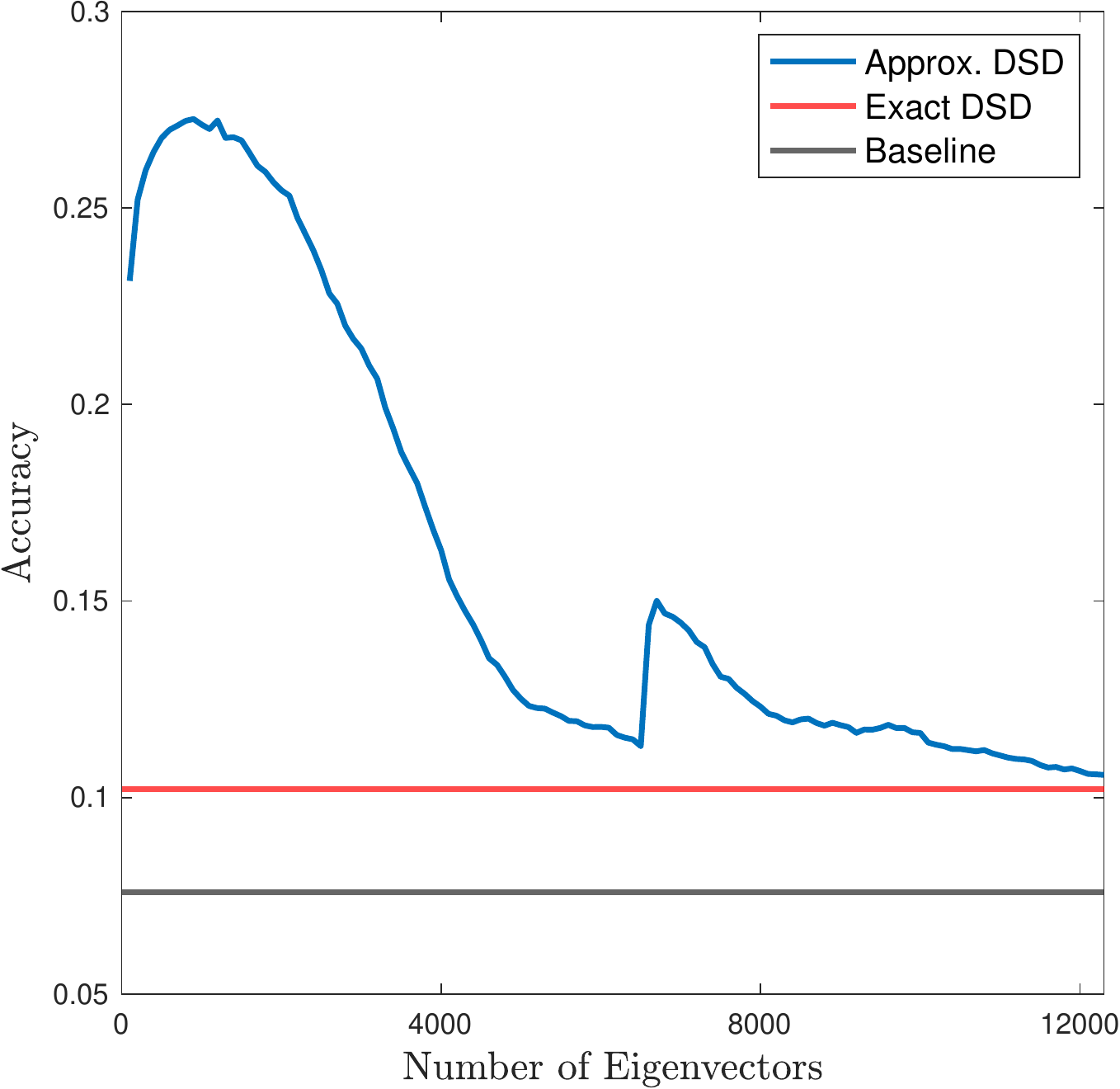}
	\subcaption{DREAM2, Molecular Function Labels}
\end{subfigure}
\caption{Percentage of accuracy of function prediction for DREAM1 and DREAM2 networks using GO Biological Process hierarchy and Molecular Function hierarchy labels by exact DSD and approximated DSD. The improved function prediction results illustrate the efficiency in denoising the DSD metric by using only the top eigenvectors.  We note that both exact and approximate DSD-based function prediction strongly outperform the baseline method.
\label{fig: func_pred_approx_and_exact_DSD}
}
\end{figure}

From Figure \ref{fig: func_pred_approx_and_exact_DSD} the performance of function prediction peaks when we restrict to the first 6000 eigenvectors to compute the approximate DSD for DREAM1 network in the Biological Process hierarchy, and there is a peak also at the first 6000 eigenvectors but a slightly higher second peak at around 12000 eigenvectors for the Molecular Function hierarchy. For DREAM 2, restricting to the first 1000 eigenvectors for both gives the best performance.  With only these eigenvectors we are able to compute an accurate low dimensional embedding of the DSD metric, reflected by the function prediction performance in Figure \ref{fig: func_pred_approx_and_exact_DSD}.
We compare the CPU time for both approaches in Table \ref{tab: DREAM_comp_time_compare} and it is observed that computing the exact DSD metric is much more computationally expensive.
\begin{table}[!htb]
\centering
\begin{tabular}{|c|c|c|}
\hline
          & Exact DSD & Approximate DSD  \\ \hline
DREAM1 &  17084.17         & 3090.918                    \\ \hline
DREAM2 &   2759.05        &     87.049                   \\ \hline
  
\end{tabular}
\caption{CPU time in seconds used to computed exact DSD and approximate DSD for each DREAM network. The approximate DSD for DREAM1 is computed with the first 6000 eigenvectors and the one for DREAM2 uses the first 1000 eigenvectors. We see that the computation of the approximate DSD is much faster, since only a small number of eigenvectors of the underlying diffusion matrix (or equivalently, of the symmetric normalized graph Laplacian) need to be computed.}
\label{tab: DREAM_comp_time_compare}
\end{table}

\section{Conclusions and Future Directions}
\label{sec:FutureDirections}

This article provides a multitemporal theory for diffusion state distances, and suggests exploiting spectral decompositions of the underlying random walk to reduce computational complexity and denoise the metric.  Experiments on synthetic and real data illustrate that the DSD is typically more effective for clustering, link prediction, and function predictions tasks, compared to classical graph-based metrics and also to heuristic methods used for biological network analysis.  Moreover, the spectral formulation of DSD is shown not only to improve efficiency, but to improve accuracy in link and function prediction by denoising.  

The analysis of Section \ref{sec:MultitemporalAnalysis} is for the discrete random walk matrix $P$.  A natural question is to understand the connection between the discrete DSD and a notion of continuum DSD, involving the continuous inverse Laplacian \cite{Liben2007_Link}.  Indeed, the underlying diffusion matrix corresponds, in the large sample limit, to a Fokker-Planck SPDE, and it is of interest to develop continuum formulations of the DSD.  A related question is to understand how the key parameters of Section \ref{sec:MultitemporalAnalysis} change hierarchically, that is, how $\delta_{r}, \lambda_{r}^{*}, \kappa_{r}$ change across scales.  As $r$ increases and the clustering coarsens, $\delta_{r}$ is expected to decrease, but $\lambda_{r}^{*}$ should increase, since the random walk needs to mix on larger clusters.  Analyzing the precise tradeoff in these parameters is the subject of ongoing research.  

As discussed in Section \ref{subsec:LargeSampleLimits}, commute distances, related to DSD, degenerate into highly localized notions of distance as $n\rightarrow\infty$.  Our method is most interesting when there is small gap between the first and second eigenvalue of the underlying diffusion process.  It is of interest to consider new models of random graphs for which this gap grows smaller as $n\rightarrow\infty$ at such a rate so that the analysis of Section \ref{sec:MultitemporalAnalysis} has a meaningful continuum analogue.  

From the computational biology perspective, while random-walk based measures of proximity have long been understood to illuminate relationships in the networks that are studied~\cite{cowen2017network}, DSD-distance measures in particular have recently been increasingly adopted in a variety of settings, from de-noising networks for the study of cancer driver genes~\cite{hristov2017network} to the method employed for hierarchical clustering in the latest version of the STRING network~\cite{szklarczyk2019string}, as well as featuring prominently in the DREAM Disease Module challenge itself as both a stand-alone method as well as the basis of a novel consensus method to integrate diverse different clusterings~\cite{Choobdar2019_Assessment}.

Link prediction and function prediction remain two of the most important inference problems for biological networks. In addition to inference based on single sources of network data,  as described in this paper, how best to take advantage of multiplex networks remains an active area of research~\cite{yu2015predicting,Choobdar2019_Assessment}. It would also be interesting to explore further the approximate DSD measures described in this paper, and understand how to automatically choose the right number of dimensions, based simple parameters of network structure, or network density.  

Moreover, DSD-based link prediction performs somewhat poorly on the DREAM2 network, compared to its performance in link prediction on DREAM1, and its function prediction performance.  This is hypothesized to be due to the especially dense core of DREAM2.  Developing methods that interpolate between the heuristic methods (which perform well on the dense core) and DSD-based methods (which perform well off the dense core) is the topic of ongoing research.

\section*{Acknowledgements}

This research was partially supported by NSF grants DMS-1812503, DMS-1912737, DMS-1924513, CCF-1934553, and OAI-1937095.  JMM is grateful to Mauro Maggioni and Nicolas Garcia Trillos for interesting and insightful conversations.   

\bibliographystyle{siamplain}
\bibliography{DSD_Theory}

\appendix

\section{Proof of Theorem \ref{thm:NearReducibility}}

\begin{proof}
Notice $\|P^{t}-S^{\infty}\|_{\infty}\le \|P^{t}-S^{t}\|_{\infty}+\|S^{t}-S^{\infty}\|_{\infty}$.  For all $t\ge0$, $P^{t}-S^{t}=\sum_{i=1}^{t}S^{t-i}(P-S)P^{i-1},$ so that $$\|P^{t}-S^{t}\|_{\infty}=\left\|\sum_{i=1}^{t}S^{t-i}(P-S)P^{i-1}\right\|_{\infty}\le \sum_{i=1}^{t}\|S^{t-i}\|_{\infty}\|(P-S)\|_{\infty}\|P^{i-1}\|_{\infty}=t\|(P-S)\|_{\infty}\le t\delta.$$

Now, observe that after diagonalizing $S$, 

\[S^{t}=Z\begin{bmatrix}
    I_{\numclust}  & \0   \\
   \0  & D^{t}  \\
\end{bmatrix}Z^{-1}, \ \
S^{\infty}=Z\begin{bmatrix}
    I_{\numclust}  & \0   \\
   \0  & \0 \\
\end{bmatrix}Z^{-1},\]
where  $D$ is a diagonal matrix with $\lambda_{\numclust+1}, \lambda_{\numclust+2}, \dots, \lambda_{n}$ on the diagonal.  We may thus estimate \[\|S^{t}-S^{\infty}\|_{\infty}\le \|Z\|_{\infty}\lambda_{\numclust+1}^{t}\|Z^{-1}\|_{\infty}=\kappa\lambda_{\numclust+1}^{t},\] as desired.  The second result of the theorem follows similarly.  
\end{proof}

\end{document}